\documentclass[pdflatex,sn-mathphys-num]{sn-jnl}

\usepackage{graphicx}%
\usepackage{multirow}%
\usepackage{amsmath,amssymb,amsfonts,mathtools}%
\usepackage{amsthm}%
\usepackage{mathrsfs}%
\usepackage[title]{appendix}%
\usepackage{xcolor}%
\usepackage{textcomp}%
\usepackage{manyfoot}%
\usepackage{booktabs}%
\usepackage{algorithm}%
\usepackage{algorithmicx}%
\usepackage{algpseudocode}%
\usepackage{listings}%
\usepackage{subcaption}
\usepackage{bm}
\usepackage[nameinlink, capitalize]{cleveref}
\usepackage{enumitem}

\theoremstyle{thmstyleone}%
\theoremstyle{thmstyletwo}%

\theoremstyle{thmstylethree}%

\graphicspath{{images/}}

\newtheorem{theo}{Theorem}[section]
\newtheorem{lem}[theo]{Lemma}

\begin{document}

\title[MG-SpaIR: Multi-grade Sparse-guided Implicit Representation for Training-Data-Free Image Restoration]{MG-SpaIR: Multi-grade Sparse-guided Implicit Representation for Training-Data-Free Image Restoration}


\author*[1]{\fnm{Jianmin} \sur{Liao}}\email{jliao21@syr.edu}
\equalcont{These authors contributed equally to this work.}

\author[2]{\fnm{Lei} \sur{Huang}}\email{lhuan001@odu.edu}
\equalcont{These authors contributed equally to this work.}

\author[3]{\fnm{Ronglong} \sur{Fang}}\email{fangr1@mskcc.org}

\author[4]{\fnm{Ashley} \sur{Prater-Bennette}}\email{Ashley.Prater-Bennette@us.af.mil}

\author[1]{\fnm{Lixin} \sur{Shen}}\email{lshen03@syr.edu}

\author[2]{\fnm{Yuesheng} \sur{Xu}}\email{y1xu@odu.edu}

\affil*[1]{\orgdiv{Department of Mathematics}, \orgname{Syracuse University}, \orgaddress{\street{215 Carnegie Building}, \city{Syracuse}, \postcode{13210}, \state{NY}, \country{USA}}}

\affil[2]{\orgdiv{Department of Mathematics \& Statistics}, \orgname{Old Dominion University}, \orgaddress{\street{{
2300 Engineering \& Computational Sciences Building}}, \city{Norfolk}, \postcode{23529}, \state{VA}, \country{USA}}}

\affil[3]{\orgdiv{Department of Medical Physical}, \orgname{Memorial Sloan Kettering Cancer Center}, \orgaddress{\street{1250 First Avenue}, \city{New York}, \postcode{10065}, \state{NY}, \country{USA}}}

\affil[4]{\orgname{Air Force Research Laboratory}, \orgaddress{\street{525 Brooks Road}, \city{Rome}, \postcode{13441}, \state{NY}, \country{USA}}}


\abstract{
MG-SpaIR is a training-data-free framework for restoring a clean image from a single observation corrupted by a mixture of blur, downsampling, noise, and missing pixels. Building on implicit neural representations (INRs), we introduce a multi-grade coarse-to-fine residual hierarchy that progressively refines the reconstruction across resolution grades, improving representational fidelity and mitigating spectral limitations. To stabilize reconstruction optimization and suppress INR-induced artifacts, we further propose an explicit sparse proximal regularization (e.g., $\ell_0$-type) applied directly in the high-resolution image domain, which discourages spurious high-frequency patterns while preserving sharp structures. The resulting optimization is solved efficiently via a multi-grade proximal alternating scheme, and we establish convergence guarantees for the associated updates under standard regularity conditions. Experiments on mixed-degradation benchmarks demonstrate that MG-SpaIR consistently outperforms strong training-data-free baselines such as Deep Image Prior, providing a stable, interpretable, and data-efficient alternative to conventional learning-based restoration methods.
}

\keywords{image restoration, training-data-free, implicit neural representation, sparse regularization, multi-grade deep learning}



\maketitle

\begin{center}
\small\textit{A revised version of this manuscript has been accepted for publication in the \textit{Journal of Mathematical Imaging and Vision}, Collection: Special Issue on Mathematics of Imaging and Machine Learning.}
\end{center}

\section{Introduction}\label{sec:intro}
We study the restoration of a clean image from a single observation corrupted by a mixture of degradations—blur, downsampling, noise, and missing pixels—reflecting common failure modes in real imaging systems. Because many clean images can explain the same degraded observation, image restoration is inherently ill-posed, making effective priors essential for accurate recovery.

Classical restoration methods rely on explicit priors, most notably sparsity-based regularization such as total variation~\cite{rudin1992nonlinear} and non-local self-similarity as in BM3D~\cite{dabov2007image}. While effective, these priors are typically defined on discrete pixel grids and can struggle to represent complex long-range dependencies and fine continuous structures (e.g., hair, fur, and thin lines).




Neural networks have been explored as implicit image priors for their ability to capture natural image statistics~\cite{ulyanov2018deep}. Implicit Neural Representations (INRs)~\cite{sitzmann2020implicit, muller2022instant, saragadam2023wire, ramasinghe2022beyond, hao2022implicit, pmlr-v235-pal24a,zhao2025adaptive} model images as continuous coordinate-to-color mappings, naturally recovering missing pixels and fine details, making them effective for training-data-free tasks like super-resolution and inpainting.


Despite their appeal, standard INRs for restoration face two fundamental challenges. First, the inherent spectral bias \cite{tancik2020fourier,zhao2025adaptive} impedes the accurate representation of high-frequency structures, leading to blurry textures and poor detail recovery. Second, reliance solely on the implicit architectural prior often destabilizes the training-data-free optimization, injecting spurious high-frequency artifacts (e.g., ringing, checkerboards). This artifact tendency is evident in prior works (e.g., Fig.~7 in \cite{saragadam2023wire}, Supp. Fig.~8 of~\cite{sitzmann2020implicit}) and is confirmed by our experiments (\cref{fig:reg_vs_no_reg inpainting,fig:reg_vs_no_reg superres}). 
To the best of our knowledge, we are among the first to explicitly identify the tendency of INRs to generate artifacts and to propose an explicit regularization solution.

To address these challenges, we introduce the \textbf{MG-SpaIR (Multi-Grade Sparse-guided Implicit Representation)}, which combines a coarse-to-fine representation strategy with explicit sparse regularization: 
\begin{itemize}
\item \textbf{Multi-grade INR to mitigate spectral limitations.}
We propose a coarse-to-fine residual hierarchy that learns high-frequency content progressively across grades, improving detail recovery while remaining compute-efficient.

\item \textbf{Sparse proximal regularization for artifact suppression.}
We introduce an explicit $\ell_0$-type proximal prior imposed in the high-resolution image domain to stabilize reconstruction and reduce INR-induced artifacts.

\item \textbf{Convergence analysis.}
We provide convergence guarantees for the proposed optimization scheme under standard regularity assumptions. 

\item \textbf{Empirical validation.}
We demonstrate consistent improvements over strong training-data-free baselines (e.g., DIP) across mixed-degradation benchmarks.
\end{itemize}

An overview of the proposed multi-grade implicit representation framework is illustrated in \cref{fig:overview}\footnote{Unless otherwise noted, individual ground truth images used in figures (e.g., \cref{fig:overview}) are sourced from standard benchmarks or Wikimedia Commons (CC BY-SA 2.0).}. Each grade corresponds to a shallow INR optimized directly on the degraded input image. Through progressive refinement, the network’s representational ability improves across grades, capturing fine structures and recovering missing details. Although not depicted in the figure, sparse proximal regularization is applied in the high-resolution image domain to guide the optimization toward stable and faithful reconstructions.

\begin{figure*}[htbp!]
    \centering
    \includegraphics[width=1.0\columnwidth]{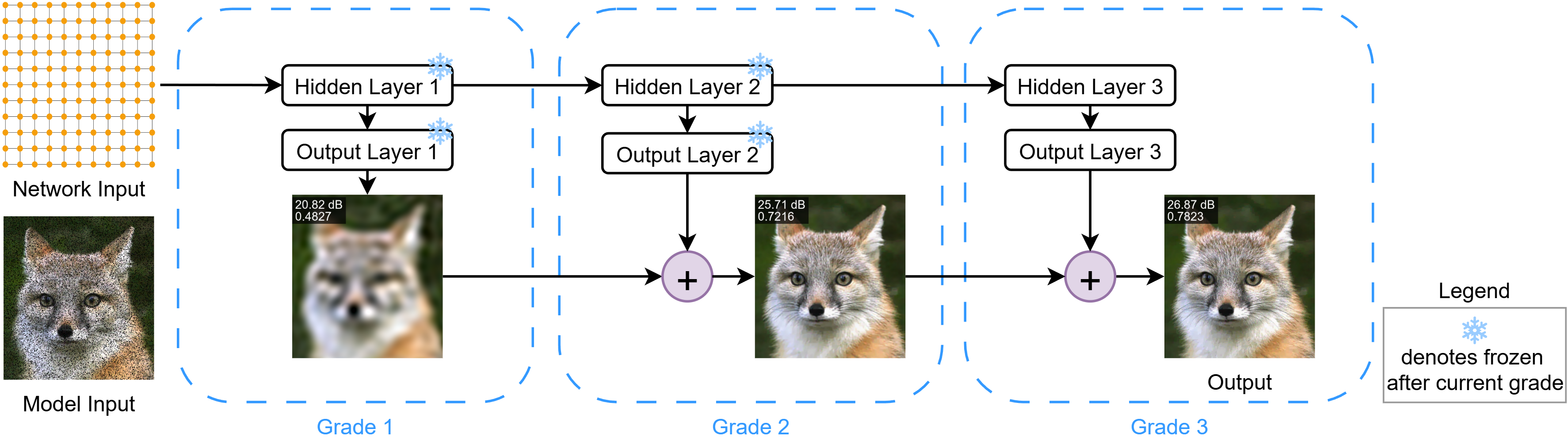}
    \caption{\textbf{Architecture Overview.} The model takes a degraded image as input, while the network input is a uniform spatial grid over $[-1,1]^2$. The restored image is implicitly encoded in the neural network parameters and guided by sparse regularization during optimization. Training proceeds in three successive grades: at each grade, the parameters from all previous grades are frozen, and only the current grade is updated. Each new grade learns from the residual of the previous reconstruction, progressively refining image details. The output of the final grade produces the restored image.}
 \label{fig:overview}
\end{figure*}

The remainder of this paper is organized as follows.
\Cref{sec:related_work} reviews related work on implicit neural representations and image restoration.
\Cref{sec:model} presents the proposed model formulation, including the multi-grade strategy, and details the MG-SpaIR regularization and the optimization algorithm for solving the resulting objective.
\Cref{section: Convergence Analysis} provides convergence analysis of the proposed scheme.
\Cref{sec:experiments} reports experimental results and ablation studies.
Finally, \cref{sec:conclusion} concludes the paper and discusses limitations and future directions.


\section{Related Work}\label{sec:related_work}
\noindent\textbf{Classical explicit priors.}\;
Classical restoration methods rely on explicit priors to constrain the solution space. 
Sparsity-based constraints, such as $\ell_1$/TV \cite{rudin1992nonlinear}, $\ell_0$ \cite{xu2023sparse,zeng2016convergent,wu2022inverting,bao2016image}, and nonconvex surrogates (SCAD \cite{fan2001variable}, MCP \cite{zhang2010nearly}), are typically solved via proximal algorithms. 
Non-local priors (e.g., NLM \cite{buades2005non}, BM3D \cite{dabov2007image}) utilize self-similarity to suppress noise but often incur prohibitive computational costs at high resolutions. 
While theoretically sound, these explicit priors struggle to model complex continuous structures compared to neural networks.

\noindent\textbf{Data-driven vs. implicit neural priors.}\;
Deep learning approaches have largely superseded classical methods, yet they face the trade-off between fidelity and perceptual quality.
Supervised models like SwinIR~\cite{liang2021swinir} and Restormer~\cite{Zamir2021Restormer} achieve state-of-the-art perceptual quality by learning statistical priors from massive datasets. However, this dependence often leads to \textit{hallucinated textures}—details that appear realistic but do not exist in the ground truth~\cite{bhadra2021hallucinations,doi:10.1073/pnas.1907377117}, rendering them unreliable for fidelity-critical applications.
Conversely, implicit neural priors like DIP~\cite{ulyanov2018deep} avoid dataset bias by exploiting the inductive bias of network architectures. However, without explicit constraints, the optimization landscape of these purely implicit methods is often treacherous, leading to instability and high-frequency artifacts as the network overfits to noise.

\noindent\textbf{Implicit neural representations.}\;
Implicit neural representations (INRs) model continuous signals via neural networks and are widely applied in geometry and imaging.
Early works such as DeepSDF \cite{Park2019DeepSDF} and Occupancy Networks \cite{Mescheder2019OccupancyNetworks} established implicit 3D modeling.
Subsequent advances addressed spectral bias through oscillatory bases (SIREN \cite{sitzmann2020implicit}, WIRE \cite{saragadam2023wire}, MFN \cite{Fathony2021MFN}, Residual MFN \cite{Shekarforoush2022ResidualMFN}) and input encodings \cite{tancik2020fourier,muller2022instant}.
Further control over frequency and scale was introduced via progressive or band-limited encodings \cite{Lindell2022BACON,hertz2021sape} and adaptive multiscale \cite{Martel2021ACORN} or meta-learned frameworks \cite{lee2021meta}.

\noindent\textbf{Spectral bias of neural networks.}\; 
Previous studies \cite{rahaman2019spectral,xu2019training,fang2024addressing,fang2025computational} have shown that neural networks preferentially learn low-frequency components while underrepresenting high-frequency details. This phenomenon, known as \textit{spectral bias}, limits their performance in applications requiring fine high-frequency reconstruction, such as image restoration.

\noindent\textbf{INRs for image restoration.}\;
Existing INR-based methods incorporating explicit priors for denoising or inpainting often rely on non-local self-similarity \cite{luo2024revisiting} or TV regularization \cite{luo2025neurtv}.
In contrast, we introduce (i) a multi-grade strategy to mitigate spectral bias and enhance the INR’s representational accuracy, and (ii) an explicit sparsity prior in the \emph{high-resolution} image domain, extending beyond TV/$\ell_1$ to $\ell_0$ and MCP regularizations to directly suppress high-resolution artifacts and noise.

\section{Proposed Model and Algorithm}\label{sec:model}
In this section, we present the MG-SpaIR optimization model. We first formulate a unified image restoration setting, then describe our multi-grade strategy for enhancing image representation, and finally introduce our restoration model along with its optimization algorithm.

Let $\mathbf{Y} \in \mathbb{R}^{H \times W \times C}$ denote a latent clean image with height $H$, width $W$, and $C$ channels. We adopt a unified degradation model expressed as:
\begin{equation}
    \tilde{\mathbf{Y}} = \mathbf{M} \! \odot \! \left( \mathcal{N} (\mathcal{S}_s (\mathcal{A}(\mathbf{Y}))) \right),
    \label{eq:degradation}
\end{equation}
where $\mathcal{A}$ models optical blur, simulating the loss of high-frequency details due to lens aberration or motion; $\mathcal{S}_s$ denotes downsampling by factor $s$, covering cases such as digital zoom-out, finite sensor resolution, or pipeline decimation; $\mathcal{N}$ represents additive noise from sources like photon shot noise or sensor readout; and $\mathbf{M}$ is a binary mask modeling missing-pixel degradation (e.g., scratches or random sensor failures). The element-wise multiplication $\odot$ applies the mask to the observed signal.

This unified formulation captures a wide range of real-world degradations—blur, noise, downsampling, and missing data. The goal of image restoration is to recover the clean image $\mathbf{Y}$ from its degraded observation $\tilde{\mathbf{Y}}$.


\subsection{Building a Strong Representational Foundation}\label{sec:INR}
We review implicit neural representations (INRs) for images, introduce a multi-grade extension of a plain (single-grade) INR, and discuss their benefits.

Rather than optimizing pixel values directly, an image can be represented by a neural network mapping 2D coordinates to pixel values. Let $\Omega=[-1,1]^2$ denote the normalized coordinate domain, $\mathbf{x} \in \Omega$ a 2D coordinate, and $C$ output channels (1 for grayscale, 3 for RGB). A \textit{single-grade INR} defines a continuous function $f_{\theta}: \Omega \to \mathbb{R}^C$, where $f_{\theta}(\mathbf{x})$ returns the pixel value at $\mathbf{x}$. For a coordinate grid $\mathbf{X} \in \Omega^{H \times W}$, the latent image is obtained by evaluating the network at each coordinate:
\begin{equation*}
\mathbf{Y} = f_{\theta}(\mathbf{X}), \quad
\text{that is, }\ \mathbf{Y}_{i,j} = f_{\theta}(\mathbf{X}_{i,j}).
\end{equation*}
This shifts the optimization from pixel space to network parameter space, leveraging the expressivity of neural networks to produce continuous, plausible image reconstructions.

\noindent\textbf{Definition of Single-Grade INR.}\label{def:SGINR}
A \textit{single-grade implicit neural representation} (SG-INR) 
is a network modeling a continuous mapping
$f_\theta : \Omega \subset \mathbb{R}^2 \to \mathbb{R}^C$,
where \(C\) denotes the number of channels. The output at \(\mathbf{x}\) is computed via a sequence of parameterized layers
$\{\phi^{(l)}_{\theta_l}\}_{l=1}^L$ and an output layer $\psi_{\theta'}$:
\begin{equation}
\begin{aligned}
&\mathbf{a}^{(0)} = \mathbf{x}\in\Omega\subset\mathbb{R}^2, &
\mathbf{a}^{(1)} = \phi^{(1)}_{\theta_1}(\mathbf{a}^{(0)}),
\\
&\mathbf{a}^{(l)} = \phi^{(l)}_{\theta_l}(\mathbf{a}^{(l-1)}),
&
f_\theta(\mathbf{x}) = \psi_{\theta'}(\mathbf{a}^{(L)}),
\end{aligned}
\end{equation}
where $\phi^{(1)}_{\theta_1}:\mathbb{R}^2\to\mathbb{R}^{d_\text{w}}$, $\phi^{(l)}_{\theta_l}:\mathbb{R}^{d_\text{w}}\to\mathbb{R}^{d_\text{w}}$, $l=2,\dots,L,$ and $\psi_{\theta'}:\mathbb{R}^{d_\text{w}}\to\mathbb{R}^C.$
Here $\mathbf{a}^{(l-1)}$ and $\mathbf{a}^{(l)}$ are the input and output feature vectors of
the $l$-th layer, respectively, and 
each $\phi^{(l)}_{\theta_l}$ is a parameterized function
(e.g., a fully connected layer with nonlinearity). The specific choice of $\phi^{(l)}_{\theta_l}$ and $\psi_{\theta'}$ depends on the INR type (see Supp. \cref{sec:layer_func}).
All parameters $\theta=\{\theta_1,\dots,\theta_L,\theta'\}$ are optimized jointly
forming the baseline single-grade INR in our framework.

\noindent\textbf{Multi-Grade INR Structure and Training.}
The Multi-Grade INR (MG-INR) is an additive stage-wise construction: for an INR with $L$ hidden layers into $L$ sequential \textit{grades}. At grade $l$ we append a new shallow network with one hidden layer $\phi^{(l)}_{\theta_l}$ and one \textit{dedicated} output layer $\psi_{\theta_l'}$ that learns the residual w.r.t the frozen aggregated model from previous grades. This converts the single large non-convex optimization into a sequence of smaller, tractable subproblems. Each grade $l$ learns the \textbf{residual error} of the network up to grade $l-1$, and parameters from previous grades are frozen, acting as fixed adaptive ``basis'' functions \cite{xu2025multi}.

Let $\mathbf{x} \in \Omega \subset \mathbb{R}^2$ be an input coordinate. In grade $l$ ($l=1,\dots,L$), the grade network $g^{(l)}$ computes the residual output:
\begin{equation}
\begin{aligned}
&\mathbf{a}^{(l)} \coloneqq \phi^{(l)}_{\theta_l}(\mathbf{a}^{(l-1)*}), \quad \text{with } \mathbf{a}^{(0)*} = \mathbf{x} \\
&g^{(l)}_{\theta^{(l)}}(\mathbf{a}^{(l-1)*}) \coloneqq \psi_{\theta'_l}(\mathbf{a}^{(l)}).
\end{aligned}
\label{eq:multigrade_step1}
\end{equation}
Here, $\mathbf{a}^{(l-1)*}$ is the features learned in previous grades and serves as input in grade $l$,
and $\theta^{(l)} \coloneqq \{\theta_l, \theta'_l\}$ denotes the parameters specific to grade $l$.

The aggregated network function up to grade $l$, denoted $f^{(l)}$, is the sum of all individual grade outputs:
\begin{equation}
f^{(l)}_{\Theta_{\le l}^*}(\mathbf{x}) = \sum_{j=1}^{l} g^{(j)}_{\theta^{(j)*}}(\mathbf{a}^{(j-1)*}). \label{eq:multigrade_step2}
\end{equation}
where $\Theta_{\le l}^* \coloneqq \{\theta^{(1)*}, \dots, \theta^{(l)*}\}$ represents the set of all optimal (frozen) parameters learned up to grade $l$.

\noindent\textbf{Training.}
Training proceeds sequentially over grades $l=1,\dots,L$. 
At grade $l$, a new learnable function $g^{(l)}_{\theta^{(l)}}(\mathbf{a}^{(l-1)*})$ is added to the previously learned function, yielding the aggregated model
$$
f^{(l)}_{\theta^{(l)}; \ \Theta_{\le l-1}^*}(\mathbf{x})  = g^{(l)}_{\theta^{(l)}}(\mathbf{a}^{(l-1)*}) + f^{(l-1)}_{\Theta_{\le l-1}^*}(\mathbf{x}),
$$
where $\theta^{(l)}$ in  $g^{(l)}_{\theta^{(l)}}$ are the only parameters to be trained in grade $l$. 
Grade $l$ is trained by minimizing the discrepancy between the target $\mathbf{y}$ (e.g., pixel value at $\mathbf{x}$) and the aggregated output:
\begin{equation*}
\theta^{(l)*} = {\rm argmin}_{\theta^{(l)}} \mathcal{L}\left( f^{(l)}_{\theta^{(l)}; \ \Theta_{\le l-1}^*}(\mathbf{x}), \mathbf{y}\right).
\end{equation*}
Crucially, during this step, the parameters from previous grades, $\Theta_{\le l-1}^*$, remain frozen, and only $\theta^{(l)}$ is updated. The final MG-INR is the fully aggregated model, $f^{(L)}_{\Theta_{\le L}^*}(\mathbf{x})$.

The multi-grade strategy yields substantial gains over single-grade approaches. By training sequentially in a coarse-to-fine manner, it achieves higher fidelity with sharper details and more accurate colors on clean images (\cref{fig:sg vs mg fitting}). On restoration tasks, it consistently outperforms single-grade, improving PSNR/SSIM by +1.67 dB/+0.0699 on denoising (\cref{fig:sg vs mg denoising}) and +0.98 dB/+0.0115 on deblurring (\cref{fig:sg vs mg deblurring}). Moreover, it reduces VRAM usage by 30\%. This balance of accuracy, efficiency, and quality makes the multi-grade strategy essential to our framework.

\begin{figure}[htbp]
    \begin{subfigure}{0.24\linewidth}
        \caption{Single-grade}
        \centering
        \includegraphics[width=\linewidth]{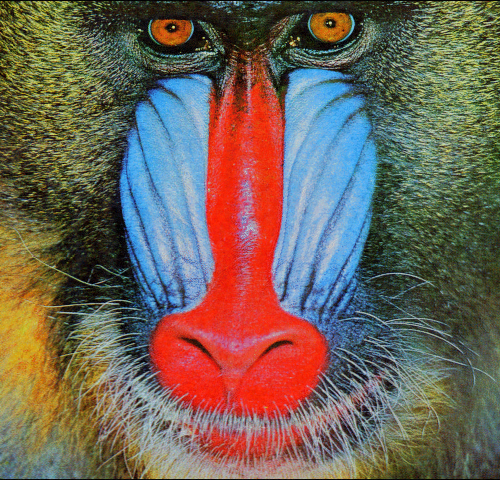}
        \caption*{(29.66 / 0.9236)}
    \end{subfigure}
    \begin{subfigure}{0.24\linewidth}
        \caption{Multi-grade}
        \centering
        \includegraphics[width=\linewidth]{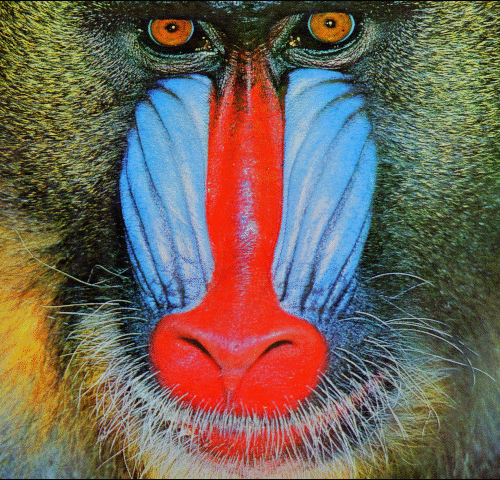}
        \caption*{\textbf{(44.03 / 0.9954)}}
    \end{subfigure}
    \begin{subfigure}{0.231\linewidth}
        \caption{Single-grade}
        \centering
        \includegraphics[width=\linewidth]{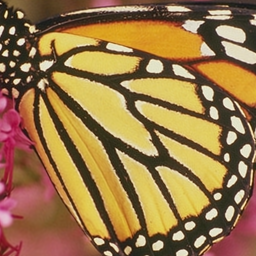}
        \caption*{(37.49 / 0.9837)}
    \end{subfigure}
    \begin{subfigure}{0.231\linewidth}
    \caption{Multi-grade}
        \centering
        \includegraphics[width=\linewidth]{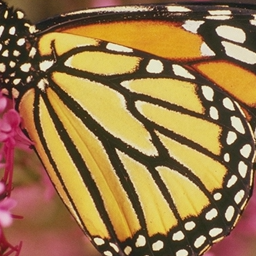}
        \caption*{\textbf{(43.44 / 0.9930)}}
    \end{subfigure}
    \caption{Multi-grade training improves image fitting quality using SIREN.}
    \label{fig:sg vs mg fitting}
\end{figure}

\begin{figure}[htbp]
\begin{subfigure}{0.32\linewidth}
    \caption{Ground truth}
    \centering
    \includegraphics[width=\linewidth]{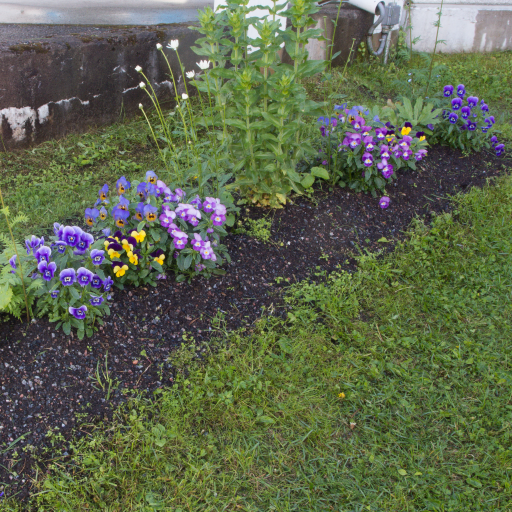}
    \caption*{\quad}
\end{subfigure}
\begin{subfigure}{0.32\linewidth}
    \caption{Single-grade}
    \centering
    \includegraphics[width=\linewidth]{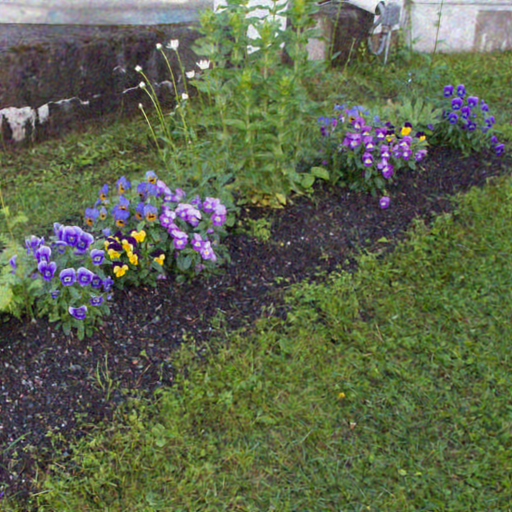}
    \caption*{(26.45 / 0.7844)}
\end{subfigure}
\begin{subfigure}{0.32\linewidth}
    \caption{Multi-grade}
    \centering
    \includegraphics[width=\linewidth]{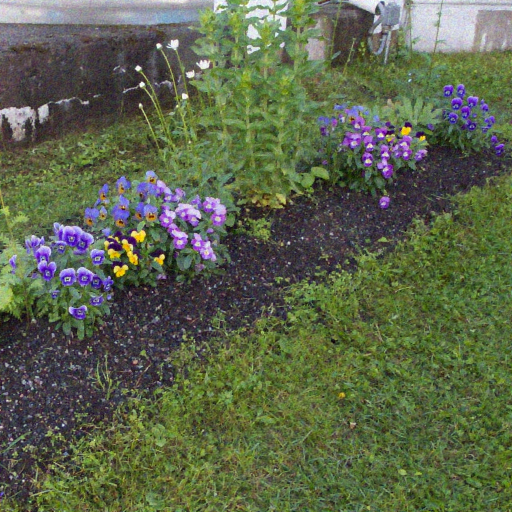}
    \caption*{\textbf{(28.12 / 0.8543)}}
\end{subfigure}
\caption{Multi-grade training improves denoising performance using SIREN with noise level $\sigma_\text{noise} = 15/255$.}
\label{fig:sg vs mg denoising}
\end{figure}

\begin{figure}[htbp]
\begin{subfigure}{0.32\linewidth}
    \caption{Ground truth}
    \centering
    \includegraphics[width=\linewidth]{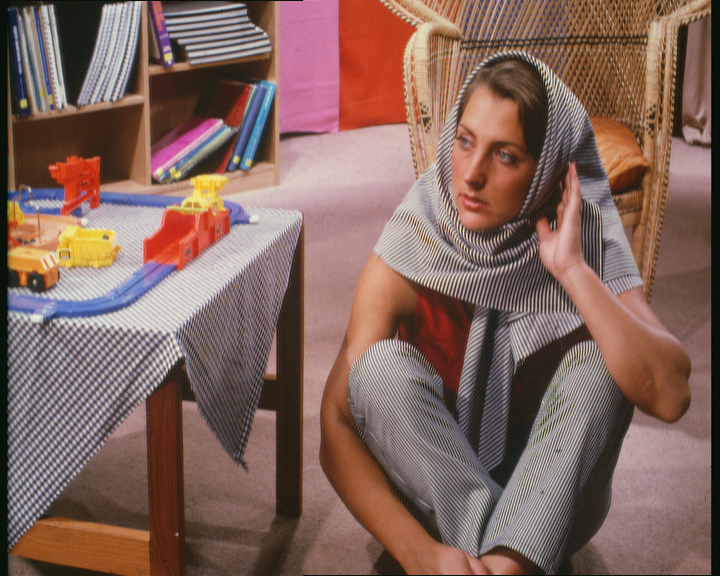}
    \caption*{\quad}
\end{subfigure}
\begin{subfigure}{0.32\linewidth}
    \caption{Single-grade}
    \centering
    \includegraphics[width=\linewidth]{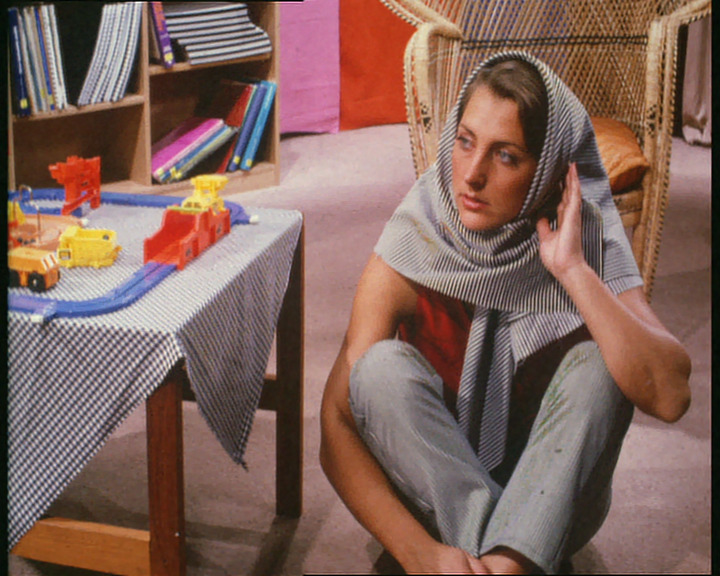}
    \caption*{(27.45 / 0.8443)}
\end{subfigure}
\begin{subfigure}{0.32\linewidth}
    \caption{Multi-grade}
    \centering
    \includegraphics[width=\linewidth]{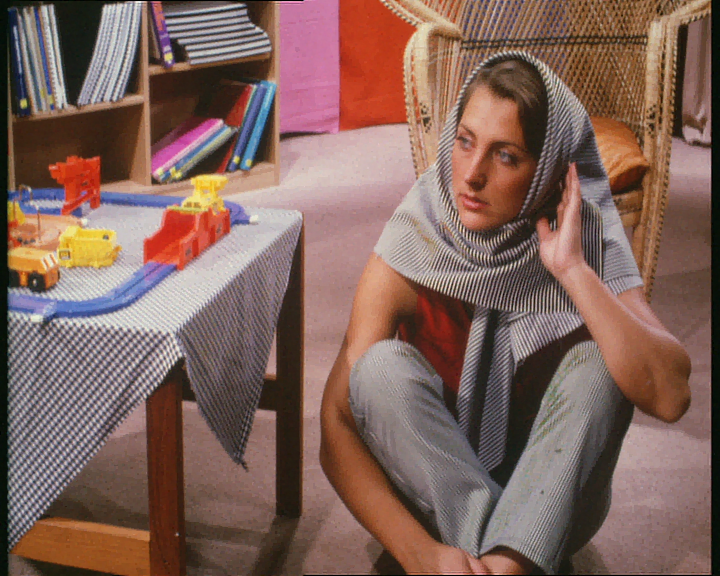}
    \caption*{\textbf{(28.43 / 0.8558)}}
\end{subfigure}
\caption{Multi-grade training improves deblurring performance using SIREN with $\sigma_\text{blur} = 1.0$ and $\sigma_\text{noise} = 3/255$.}
\label{fig:sg vs mg deblurring}
\end{figure}

\noindent\textbf{Advantages.}
MG strategy offers several advantages. By decomposing training into sequential grades, each solving a smaller and more tractable optimization problem, it alleviates gradient-related difficulties. With a single hidden layer and ReLU activation, each grade yields a convex optimization \cite{pilanci2020neural}, making MG strategy effectively a sequence of convex subproblems \cite{fang2025computational}. The incremental learning of high-frequency components mitigates the spectral bias of SG strategy and captures details often overlooked in single-grade training. Moreover, each grade can be  analyzed independently, revealing how features accumulate and residuals diminish, which enhances interpretability. Computationally, MG strategy scales linearly with the number of grades and requires storing only the current grade’s parameters, unlike SG strategy, which updates all layers jointly. This multi-grade strategy increases INR expressivity, enabling the recovery of fine details from sparse or low-resolution inputs. As illustrated in \cref{fig:INR vs bicubic comparison}, INRs yield smoother, more continuous structures than bicubic interpolation. Consequently, we adopt the multi-grade framework throughout this paper.


\begin{figure}[htbp]
    \begin{subfigure}{0.32\linewidth}
        \caption{Ground Truth}
        \centering
        \includegraphics[width=\linewidth]{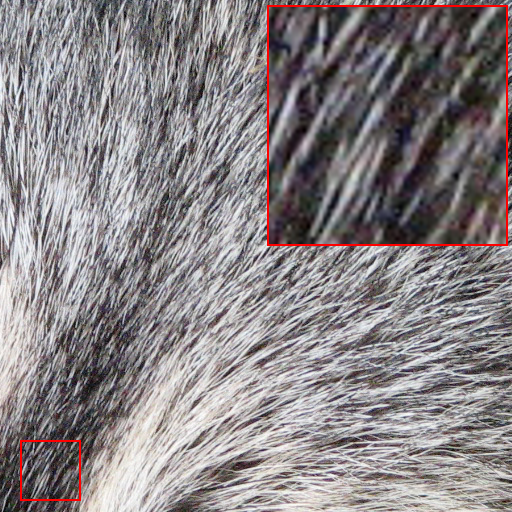}
    \end{subfigure}
    \begin{subfigure}{0.32\linewidth}
        \caption{Bicubic}
        \centering
        \includegraphics[width=\linewidth]{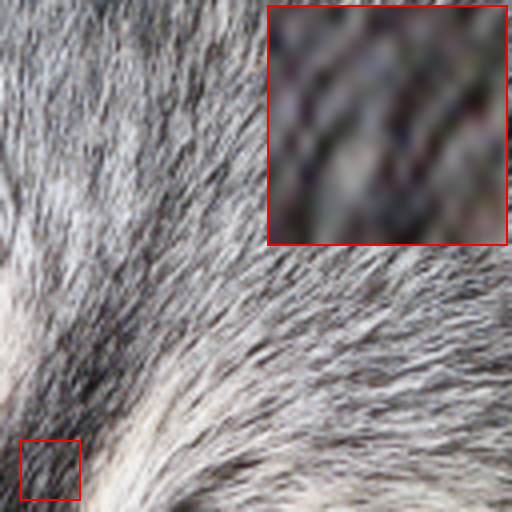}
    \end{subfigure}
    \begin{subfigure}{0.32\linewidth}
        \caption{Multi-grade INR}
        \centering
        \includegraphics[width=\linewidth]{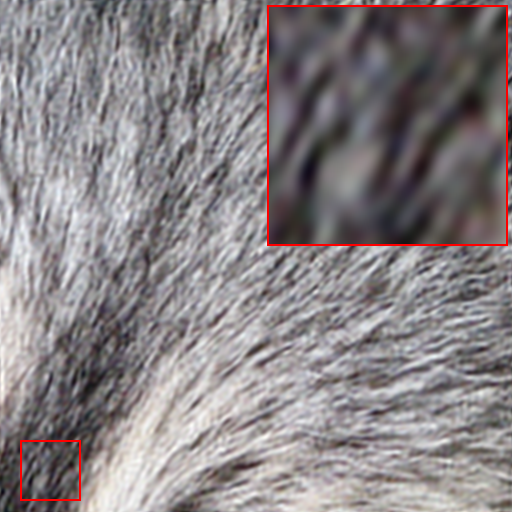} 
    \end{subfigure}
    \caption{Comparison of multi-grade INR and bicubic interpolation on the super-resolution task.}
    \label{fig:INR vs bicubic comparison}
\end{figure}

\subsection{Regularizing INRs} 
We further compare regularization strategies in \cref{fig:reg_vs_no_reg superres}. “Low-res reg” applies regularization to the low-resolution output (removing $\mathcal{S}_s$ and replacing $f_{\theta}(\mathbf{X}_\text{high})$ with $f_{\theta}(\mathbf{X}_\text{low})$ in \cref{eq:original}), whereas “High-res reg” applies it to the target high-resolution output. A small low-res regularization is insufficient to remove artifacts, while a large one causes oversmoothing. In contrast, our “Small high-res reg” (\cref{fig:reg_comp_siren sh}) effectively suppresses artifacts while preserving fine details. This advantage of high-resolution regularization holds across INR backbones, as shown for WIRE in Supp. \cref{sec:WIRE reg}.

These observations confirm that direct INR fitting requires explicit regularization, particularly in denoising, super-resolution, and inpainting tasks—where noise further amplifies instability.
We therefore propose a sparse high-resolution regularization that suppresses spurious high-frequency artifacts and encourages observation-consistent fine details while preserving edge sharpness and piecewise-smooth structures.  High-resolution regularizing achieves better artifact suppression with smaller weights than low-resolution regularization, avoiding oversmoothing.

\begin{figure}[htbp]
    \centering
    \begin{subfigure}{0.32\linewidth}
        \caption{Input ($2\times\downarrow$)}
        \centering
        \includegraphics[width=\linewidth]{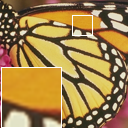}
        \caption*{\quad}
    \end{subfigure}
    \begin{subfigure}{0.32\linewidth}
    \caption{WO/reg}
        \centering
        \includegraphics[width=\linewidth]{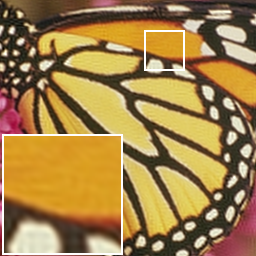}
        \caption*{(24.94 / 0.8816)}
        \label{fig:reg_comp_siren wo}
    \end{subfigure}
    \begin{subfigure}{0.32\linewidth}
        \caption{Small low-res reg}
        \centering
        \includegraphics[width=\linewidth]{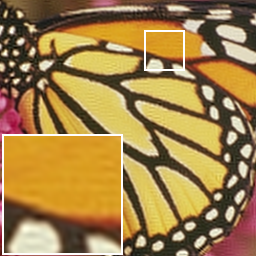}
        \caption*{(24.85 / 0.8722)}
        \label{fig:reg_comp_siren sl}
    \end{subfigure}
    \begin{subfigure}{0.32\linewidth}
        \caption{Large low-res reg}
        \centering
        \includegraphics[width=\linewidth]{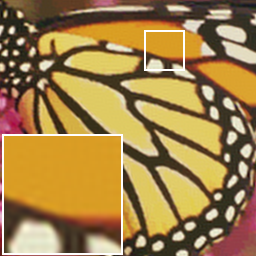}
        \caption*{(24.18 / 0.8207)}
        \label{fig:reg_comp_siren ll}
    \end{subfigure}
    \begin{subfigure}{0.32\linewidth}
           \caption{Small high-res reg}
        \centering
        \includegraphics[width=\linewidth]{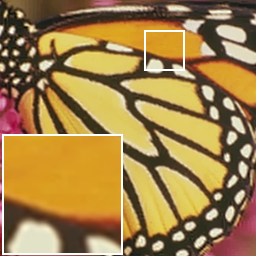}
        \caption*{\textbf{(25.33 / 0.9001)}}
        \label{fig:reg_comp_siren sh}
    \end{subfigure}
    \begin{subfigure}{0.32\linewidth}
            \caption{Ground truth}
        \centering
        \includegraphics[width=\linewidth]{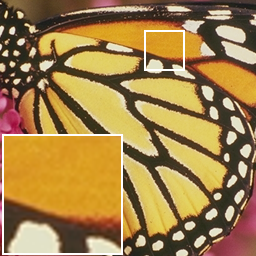}
        \caption*{\quad}
    \end{subfigure}
    \caption{The necessity of high-resolution regularization for super-resolution}
    \label{fig:reg_vs_no_reg superres}
\end{figure}

\begin{figure}[htbp]
    \centering
    \begin{subfigure}{0.32\linewidth}
        \caption{Input ($50\%$ mask)}
        \centering
        \includegraphics[width=\linewidth]{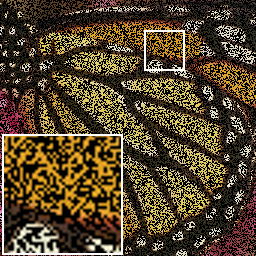}
        \caption*{\quad}
    \end{subfigure}
    \begin{subfigure}{0.32\linewidth}
            \caption{WO/reg}
        \centering
        \includegraphics[width=\linewidth]{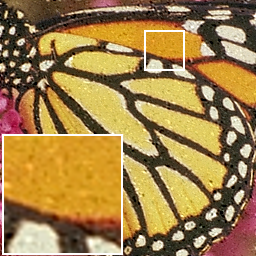}
        \caption*{(24.64 / 0.7754)}
        \label{fig:reg_comp_siren_in wo}
    \end{subfigure}
    \begin{subfigure}{0.32\linewidth}
            \caption{W/reg}
        \centering
        \includegraphics[width=\linewidth]{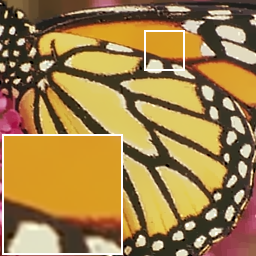}
        \caption*{\textbf{(28.71 / 0.9290)}}
        \label{fig:reg_comp_siren_in with}
    \end{subfigure}
    \caption{The necessity of regularization for inpainting.}
    \label{fig:reg_vs_no_reg inpainting}
\end{figure}


\subsection{The MG-SpaIR Model}\label{section:The MG-SpaIR Model}
We now formalize our model. The objective is to learn INR parameters $\theta$ such that $f_{\theta}$, evaluated on a uniform grid, generates a clean high-resolution image whose degraded version matches the observation $\tilde{\mathbf Y}$, while enforcing sparsity in its gradient field.
To make this optimization tractable, we introduce an auxiliary variable $\mathbf{U}$ representing the image gradients, leading to the following joint problem:
\begin{equation}
\begin{aligned}
    \min_{\mathbf U,\theta} &\underbrace{\mathcal{L}_{\text{fid}} (\tilde{\mathbf Y} - \mathbf M \!\odot\! \mathcal{S}_s \!\circ\! \mathcal{A} \!\circ\! f_{\theta}(\mathbf{X}_\text{high}))}_{\text{Fidelity Term}} \\ 
    + & \underbrace{\frac{\beta}{2}\|\mathbf U-\mathcal{B} \!\circ\! f_{\theta}(\mathbf{X}_\text{high})\|_{\text{F}}^2}_{\text{Surrogate Term}} \quad + \underbrace{\lambda \Phi(\mathbf U)}_{\text{Regularization Term}}.
    \label{eq:original}
\end{aligned}
\end{equation}

Here, $\tilde{\mathbf Y}\in \mathbb{R}^{H_\text{low} \times W_\text{low} \times C}$ is the degraded input, and $s\in \mathbb{N}_+$ denotes the super-resolution factor. The target resolution is $H_\text{high} = s H_\text{low}$ and $W_\text{high} = s W_\text{low}$,  with the coordinate grid $\mathbf{X}_\text{high}\in[-1,1]^2$ uniformly sampled into $H_\text{high}\times W_\text{high}$ points. The arguments of $\mathcal{L}_{\text{fid}}$ are in the low-resolution observed domain, i.e., $\mathbf{A} \in H_\text{low} \times W_\text{low} \times C$. The actual form of the loss $\mathcal{L}_{\text{fid}}$ depends on the assumed noise model. For additive Gaussian noise, we use the mean squared error (MSE):
$\mathcal{L}_{\text{fid}}(\mathbf{A}) := \frac{1}{2} \sum_{h,w,c}\mathbf{A}_{h,w,c}^2$, where $\mathbf{A}$ has dimension $H_\text{low} \times W_\text{low} \times C$.


The second term in \cref{eq:original} introduces a quadratic penalty linking the auxiliary variable $\mathbf{U}\in\mathbb{R}^{H_\text{high}\times W_\text{high}\times 2C}$ to the finite-difference gradient of the high-resolution INR output $f_\theta(\mathbf{X}_\text{high})$. Here, $\|\cdot\|_{\text{F}}$ denotes the Frobenius norm (i.e., the square root of the sum of squared entries).
This surrogate term serves two purposes: it provides a smooth relaxation of the sparsity prior, mitigating staircase artifacts, and it enables efficient optimization via a proximal update of the regularization term.

The operator $\mathcal{B}$ computes per-channel  forward finite differences with Neumann (replicate) padding:
\begin{equation}
\mathcal B \!\circ\! f_\theta(\mathbf{X}_\text{high}) = ( \mathcal{D}_x f_\theta(\mathbf{X}_\text{high}),\, \mathcal{D}_y f_\theta(\mathbf{X}_\text{high}) ), 
\end{equation}
where $\mathcal{D}_x$ and $\mathcal{D}_y$ denote per-channel horizontal and vertical differences, respectively.
Unlike the common isotropic formulation that merges these differences, we adopt an \textit{anisotropic} stacking approach, which, as shown in \cref{fig:ablation_prior}, yields superior restoration quality—likely due to better preservation of directional information. 


The final term, $\lambda \Phi(\mathbf{U})$, imposes a sparse prior on the reconstructed image. Our formulation admits proximal-style updates of $\mathbf{U}$, allowing the prior to be efficiently enforced whenever a closed-form update exists. The $\ell_0$ norm of $\mathbf{U}$, denoted by $\|\mathbf{U}\|_0$, counts the number of non-zero entries in $\mathbf{U}$. In our experiments, we observe that $\ell_0$-norm regularization \cite{fan2001variable, shen2016wavelet, fang2024inexact, xu2023sparse} consistently outperforms alternative choices of $\Phi$, such as $\ell_1$ \cite{chambolle2011first, beck2009fast, micchelli2011proximity} and MCP. Please refer to \cref{fig:effectiveness across sparse reg} for a comparison.

\subsection{MG-SpaIR Algorithm}
\Cref{alg} efficiently solves the highly non-convex problem in \cref{eq:original} via a multi-grade strategy~\cite{xu2025multi} with the proximal alternating minimization~\cite{attouch2010proximal}.

The algorithm solves \cref{eq:original} sequentially, starting from grade $1$. 
In grade $l \ (l=1,\ldots,L)$, the loss function is
\begin{equation}
    \begin{aligned}
\mathcal{L}_{l}(\mathbf{U}, \theta^{(l)})&:=\mathcal{L}_{\text{fid}} \big( \tilde{\mathbf Y} - \mathbf M \!\odot\! \mathcal{S}_s \!\circ\! \mathcal{A} \!\circ\! f^{(l)}_{\theta^{(l)}; \Theta_{\leq l-1}^*}(\mathbf{X}_\text{high})\big)\\
&+ \frac{\beta}{2}\|\mathbf U - \mathcal{B} \!\circ\! f^{(l)}_{\theta^{(l)}; \Theta_{\leq l-1}^*}(\mathbf{X}_\text{high})\|_\mathrm{F}^2
+ \lambda \Phi(\mathbf U).
\end{aligned}
\end{equation}
The algorithm minimizes $\mathcal{L}_{l}(\mathbf{U}, \theta^{(l)})$ using a Gauss-Seidel-type block coordinate descent: at each epoch, $\mathbf U$ is updated first using the current $\theta^{(l)}$, followed by updating $\theta^{(l)}$ with the new $\mathbf{U}$. These two steps are repeated iteratively until convergence.


\noindent\textbf{$\mathbf{U}$-step:}
In this step, $\mathbf{U}$ is updated via a proximal operator:
\begin{equation}
\mathbf{U} \gets \operatorname{prox}_{\frac{\gamma\lambda}{\beta}\Phi}(\gamma\mathcal{B} \!\circ\! f^{(l)}_{\theta^{(l)}; \Theta_{\leq l-1}^*}(\mathbf{X}_\text{high})+(1-\gamma)\mathbf{U}),
\label{eq:prox_update}
\end{equation}
to solve $\min_{\mathbf{U}} \ \mathcal{L}_l(\mathbf {U}, \theta^{(l)})$ with $\theta^{(l)}$ fixed.
The proximal operator $\operatorname{prox}_{\frac{\gamma\lambda}{\beta}\Phi}(\cdot)$ has simple closed forms for $\Phi$ corresponding to $\ell_0$, $\ell_1$, or MCP regularization (Supp.~\cref{sec:prox}).

\noindent\textbf{$\theta$-step:}
With $\mathbf{U}$ fixed,  $\theta^{(l)}$ is updated: 
\begin{equation}
     \theta^{(l)} \gets \operatorname{OptStep}\left(\theta^{(l)}, \frac{\partial \mathcal{L}_l}{\partial \theta^{(l)}}(\mathbf U, \theta^{(l)})\right),
\end{equation}
where $\operatorname{OptStep}(\cdot, \cdot)$ can be a single gradient descent or Adam step.


\begin{algorithm}[htbp!]
\small
\caption{MG-SpaIR Algorithm}
\label{alg}
\begin{algorithmic}
\Require Degraded image $\tilde{\mathbf{Y}}$; max grade $L$; target size $(H,W)$; combination coefficient $\gamma$
\Ensure Restored image $\hat{\mathbf{Y}}$

\State $\mathbf{X}_{\text{high}} \gets$ a uniform grid of size $(H,W)$; \State $\Theta_{\le 0}^* \gets \emptyset$; \quad $\mathbf{a}^{(0)*} \gets \mathbf{X}_{\text{high}}$; \quad $\hat{\mathbf{Y}}^{(0)} \gets \mathbf{0}$;  \quad $\mathbf{U} \gets \mathbf{0}$

\For{$l \gets 1$ \textbf{to} $L$}
    \State Initialize $\theta^{(l)}$
    \Repeat
        \State $\mathbf{a}^{(l)} \gets \phi^{(l)}_{\theta_l}(\mathbf{a}^{(l-1)*})$
        \State $\mathbf{y} \gets \psi^{(l)}_{\theta'_l}(\mathbf{a}^{(l)})$
        \State $\mathbf{Y}^{(l)} \gets \mathbf{y} + \hat{\mathbf{Y}}^{(l-1)}$

        \State $\mathbf{U} \gets \operatorname{prox}_{\frac{\gamma\lambda}{\beta}\,\Phi}\Big(\gamma\,\mathcal{B} \circ \mathbf{Y}^{(l)} + (1-\gamma)\,\mathbf{U}\Big)$

        \State $\theta^{(l)} \gets \operatorname{OptStep}\!\left(\theta^{(l)},\, \frac{\partial \mathcal{L}_l}{\partial \theta^{(l)}}(\mathbf{U},\theta^{(l)})\right)$
    \Until{$\mathcal{L}_l(\cdot,\cdot)$ converges.}

    \State $\theta^{(l)*} \gets \theta^{(l)}$; \quad $\Theta_{\le l}^* \gets \{\theta^{(l)*}\} \cup \Theta_{\le l-1}^*$; \quad $\mathbf{a}^{(l)*} \gets \mathbf{a}^{(l)}$; \quad $\hat{\mathbf{Y}}^{(l)} \gets \mathbf{Y}^{(l)}$
\EndFor

\State $\hat{\mathbf{Y}} \gets \hat{\mathbf{Y}}^{(L)}$
\end{algorithmic}
\end{algorithm}

\section{Convergence Analysis}\label{section: Convergence Analysis}

We analyze the convergence of the MG-SpaIR algorithm.

For a fixed grade $l \in \{1, \ldots, L\}$, let $\mathcal{L}_\text{fid}$ denote the MSE loss, $\Phi$ the $\ell_0$ regularization, and $\operatorname{OptStep}$ a gradient descent update with step size $\eta$.  
Define $f(\theta^{(l)}) \coloneqq f^{(l)}_{\theta^{(l)}; \Theta_{\le l-1}^*}(\mathbf{x})$ and, for brevity, omit the superscripts of $\mathcal{L}_l$ and $\theta^{(l)}$.  
Under these settings, the $k$-th iteration of \cref{alg} is

\begin{equation}\label{eq:proximal_gradient}
\begin{aligned}
&\mathbf{U}^{k+1} \in \operatorname{prox}_{\frac{\gamma\lambda}{\beta}\Phi}\!\bigl(\gamma\mathcal{B}\!\circ\! f(\theta^{k}) + (1-\gamma)\mathbf{U}^{k}\bigr),\\
&\theta^{k+1} = \theta^{k} - \eta\,\frac{\partial \mathcal{L}}{\partial \theta}(\mathbf{U}^{k+1}, \theta^k).
\end{aligned}
\end{equation}
Here $\mathbf{U}$ and $\theta$ (with or without superscripts) denote their flattened vectors of total dimension $d$.  

If the activation of $f$ is twice continuously differentiable, let $\bm{H}_\mathcal{L}(\mathbf{U}, \theta)$ be the Hessian of $\mathcal{L}$ with respect to $\theta$, and define
\begin{equation}\label{eq:alpha_hessian}
\alpha := \sup\{\|\bm{H}_\mathcal{L}(\mathbf{U}, \theta)\|_2 : (\mathbf{U}, \theta) \in \Omega\},
\end{equation}
where $\|\cdot\|_2$ denotes the spectral norm.  

\begin{theo}\label{theorem:convergence:algorithm}
Let $\{(\mathbf{U}^k, \theta^k)\}_{k=0}^\infty$ be generated by \eqref{eq:proximal_gradient} with initial $(\mathbf{U}^0, \theta^0)$.  
Assume:
\begin{enumerate}[label=\textup{(A\arabic*)}]
    \item the activation of $f$ is twice continuously differentiable;
    \item there exists a convex, compact $\Omega \subset \mathbb{R}^d$ such that 
    \[
    \{(\mathbf{U}^k, \theta^k)\}, \ \{(\mathbf{U}^{k+1}, \theta^k)\} \subset \Omega;
    \]
    \item $\alpha$ in \eqref{eq:alpha_hessian} is finite.
\end{enumerate}
If $\eta \in (0, 2/\alpha)$ and $\gamma \in (0,1)$, then:
\begin{enumerate}[label=\textup{(\roman*)}]
    \item $\displaystyle \lim_{k\to\infty}\mathcal{L}(\mathbf{U}^k, \theta^k) = L^*$ for some $L^* \ge 0$;
    \item $\displaystyle \lim_{k\to\infty}\|\mathbf{U}^{k+1} - \mathbf{U}^k\|_2 = 0$ and $\displaystyle \lim_{k\to\infty}\|\theta^{k+1} - \theta^k\|_2 = 0$;
    \item $\displaystyle \lim_{k\to\infty}\mathrm{dist}\!\left(\bm{0}, \tfrac{\partial\mathcal{L}}{\partial\mathbf{U}}(\mathbf{U}^{k+1}, \theta^k)\right) = 0$ and $\displaystyle \lim_{k\to\infty}\tfrac{\partial\mathcal{L}}{\partial\theta}(\mathbf{U}^{k+1}, \theta^k) = 0$.
\end{enumerate}
Here $\tfrac{\partial\mathcal{L}}{\partial\mathbf{U}}$ denotes the subdifferential of $\mathcal{L}$ with respect to $\mathbf{U}$,  
$\tfrac{\partial\mathcal{L}}{\partial\theta}$ its gradient with respect to $\theta$,  
and $\mathrm{dist}(\bm{p}, S) := \inf_{\bm{q}\in S}\|\bm{p}-\bm{q}\|_2$ is the distance from $\bm{p}$ to set $S$.
\end{theo}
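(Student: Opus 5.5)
The plan is a standard sufficient-decrease argument for proximal alternating minimization (in the spirit of \cite{attouch2010proximal}). I show that each half-step of \eqref{eq:proximal_gradient} decreases $\mathcal{L}$ by a multiple of the squared step length. Everything else then follows from telescoping and from the optimality conditions of each half-step. Write $\mathcal{L}(\mathbf{U},\theta)=h(\mathbf{U},\theta)+\lambda\Phi(\mathbf{U})$, where $h(\mathbf{U},\theta)=\mathcal{L}_{\text{fid}}(\cdots)+\frac{\beta}{2}\|\mathbf{U}-\mathcal{B}\circ f(\theta)\|_{\text{F}}^2$ is smooth by (A1). For fixed $\theta$, $h$ is quadratic in $\mathbf{U}$ with $\nabla_{\mathbf{U}}h=\beta(\mathbf{U}-\mathcal{B}\circ f(\theta))$.

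\emph{Step 1 (U-step decrease).} The U-update is exactly a proximal-gradient step on $h(\cdot,\theta^k)$ with step size $\gamma/\beta$. Indeed, $V^k:=\mathbf{U}^k-\frac{\gamma}{\beta}\nabla_{\mathbf{U}}h(\mathbf{U}^k,\theta^k)=\gamma\mathcal{B}\circ f(\theta^k)+(1-\gamma)\mathbf{U}^k$. The $\ell_0$ penalty is proper, lower semicontinuous and bounded below, so the prox is nonempty (hard thresholding). Comparing the prox objective at $\mathbf{U}^{k+1}$ and at $\mathbf{U}^k$ gives
\[
\lambda\Phi(\mathbf{U}^{k+1})+\tfrac{\beta}{2\gamma}\|\mathbf{U}^{k+1}-V^k\|^2\le\lambda\Phi(\mathbf{U}^{k})+\tfrac{\beta}{2\gamma}\|\mathbf{U}^{k}-V^k\|^2 .
\]
Because $h$ is exactly quadratic in $\mathbf{U}$, we have $h(\mathbf{U}^{k+1},\theta^k)=h(\mathbf{U}^k,\theta^k)+\langle\nabla_{\mathbf{U}}h,\Delta\rangle+\frac{\beta}{2}\|\Delta\|^2$ with $\Delta=\mathbf{U}^{k+1}-\mathbf{U}^k$. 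Combining the two relations yields
\[
\mathcal{L}(\mathbf{U}^{k+1},\theta^k)\le\mathcal{L}(\mathbf{U}^k,\theta^k)-\tfrac{\beta(1-\gamma)}{2\gamma}\|\mathbf{U}^{k+1}-\mathbf{U}^k\|^2 .
\]
The coefficient is positive since $\gamma\in(0,1)$.

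\emph{Step 2 ($\theta$-step decrease).} By (A2), both $(\mathbf{U}^{k+1},\theta^k)$ and $(\mathbf{U}^{k+1},\theta^{k+1})$ lie in the convex set $\Omega$, so the segment joining them does too. On that segment $\|\bm{H}_\mathcal{L}\|_2\le\alpha$ by (A3). A second-order Taylor bound then gives the descent lemma $\mathcal{L}(\mathbf{U}^{k+1},\theta^{k+1})\le\mathcal{L}(\mathbf{U}^{k+1},\theta^k)-\eta g^\top g+\frac{\alpha\eta^2}{2}\|g\|^2$, where $g=\frac{\partial\mathcal{L}}{\partial\theta}(\mathbf{U}^{k+1},\theta^k)$. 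Using $\theta^{k+1}-\theta^k=-\eta g$, this becomes a decrease of $(\frac1\eta-\frac\alpha2)\|\theta^{k+1}-\theta^k\|^2$, which is positive for $\eta<2/\alpha$.

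\emph{Step 3 (conclusions).} The sequence $\mathcal{L}(\mathbf{U}^k,\theta^k)$ is nonincreasing and bounded below by $0$, since every term of $\mathcal{L}$ is nonnegative. This gives (i). Summing the two decreases over $k$ shows that $\sum_k\|\mathbf{U}^{k+1}-\mathbf{U}^k\|^2$ and $\sum_k\|\theta^{k+1}-\theta^k\|^2$ are finite, which gives (ii). For (iii), the $\theta$-part is immediate: $\frac{\partial\mathcal{L}}{\partial\theta}(\mathbf{U}^{k+1},\theta^k)=(\theta^k-\theta^{k+1})/\eta\to0$. For the U-part, I use the first-order condition of the prox: $0\in\lambda\partial\Phi(\mathbf{U}^{k+1})+\frac{\beta}{\gamma}(\mathbf{U}^{k+1}-V^k)$, a Fréchet subgradient inclusion. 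Since $h$ is smooth, the subdifferential sum rule gives $\frac{\partial\mathcal{L}}{\partial\mathbf{U}}(\mathbf{U}^{k+1},\theta^k)=\beta(\mathbf{U}^{k+1}-\mathcal{B}\circ f(\theta^k))+\lambda\partial\Phi(\mathbf{U}^{k+1})$. Substituting $V^k$, this set contains
\[
\beta(\mathbf{U}^{k+1}-\mathcal{B}\circ f(\theta^k))-\tfrac{\beta}{\gamma}(\mathbf{U}^{k+1}-V^k)=-\beta\bigl(\tfrac1\gamma-1\bigr)(\mathbf{U}^{k+1}-\mathbf{U}^k).
\]
Hence $\mathrm{dist}\bigl(\bm{0},\frac{\partial\mathcal{L}}{\partial\mathbf{U}}(\mathbf{U}^{k+1},\theta^k)\bigr)\le\beta(\frac1\gamma-1)\|\mathbf{U}^{k+1}-\mathbf{U}^k\|\to0$ by (ii).

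I expect the main obstacle to be Step 1 and the last part of Step 3. Because $\ell_0$ is nonconvex and discontinuous, no convexity-based prox inequality is available, and the prox may be set-valued (hence the $\in$ in \eqref{eq:proximal_gradient}). Both steps must therefore rely only on global minimality of the prox objective, plus the correct (Fréchet or limiting) subdifferential calculus for a smooth term plus a lower semicontinuous term. A secondary point is to check that (A2) places exactly the right points in $\Omega$ for the Hessian bound to apply along the $\theta$-segment.
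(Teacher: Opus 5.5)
Your proposal is correct and follows essentially the same route as the paper: a sufficient decrease for the $\mathbf{U}$-step, a Hessian-bounded Taylor estimate for the $\theta$-step, telescoping, and reading off the subgradient $-\frac{\beta(1-\gamma)}{\gamma}(\mathbf{U}^{k+1}-\mathbf{U}^k)$ from the $\mathbf{U}$-step optimality condition. The paper gets the $\mathbf{U}$-step decrease by rewriting the prox objective as $\mathcal{L}(\cdot,\theta^k)+\frac{\beta(1-\gamma)}{2\gamma}\|\cdot-\mathbf{U}^k\|_2^2$ plus a constant, which is the same identity as your forward--backward reading with the exact quadratic expansion of $h$; and your choice of $(\mathbf{U}^{k+1},\theta^k)$ and $(\mathbf{U}^{k+1},\theta^{k+1})$ as segment endpoints is the precise justification for $(\mathbf{U}^{k+1},\bar\theta^k)\in\Omega$, which the paper attributes somewhat loosely to $(\mathbf{U}^k,\theta^k)$ and $(\mathbf{U}^{k+1},\theta^k)$.
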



The proof of \cref{theorem:convergence:algorithm} is given in Appendix~\ref{Supplement: MG-SpaIR Algorithm}.
\cref{theorem:convergence:algorithm} establishes the convergence of \cref{alg} under mild conditions when $\eta < 2 / \alpha$, where $\eta$ is the learning rate and $\alpha$ the Hessian spectral norm. This result indicates that a smaller Hessian spectral norm permits a larger stable learning rate. Within each grade, the multi-grade strategy trains a shallow network with fewer parameters, resulting in a smaller Hessian spectral norm than its single-grade counterpart and thus allowing a higher learning rate. Moreover, the assumption that $f$ is twice continuously differentiable is satisfied by the sinusoidal and Gabor-type activations used in our backbones (SIREN/WIRE), aligning the theory with our experimental setting. 

To demonstrate this effect, we perform an image denoising experiment (\cref{fig: Compare SGDL and MGDL main text}).
Both the single-grade and multi-grade networks, using a SIREN~\cite{sitzmann2020implicit} backbone, are applied to the image denoising problem described in \cref{section:The MG-SpaIR Model}. The input image is corrupted by additive Gaussian noise with zero mean and a standard deviation of $10/255$.
For the single-grade network, four hidden layers are used. For the multi-grade network, three grades are employed: the first and second grades each contain one hidden layer, while the third grade contains two hidden layers. The learning rate is varied from $1\times10^{-6}$ to $5\times10^{-5}$. The single-grade model achieves its highest PSNR at $\eta = 5\times10^{-6}$, whereas the multi-grade model attains its peak PSNR at a larger learning rate, $\eta = 2\times10^{-5}$. 
This observation indicates that the multi-grade model exhibits greater training stability under larger learning rates.

As shown in \cref{fig: Compare SGDL and MGDL main text (a)}, the multi-grade strategy consistently yields a smaller Hessian spectral norm, while \cref{fig: Compare SGDL and MGDL main text (b)} shows that, under the same learning rate, the single-grade strategy diverges whereas the multi-grade one remains stable and continues to improve. These results confirm that the multi-grade strategy enables a larger stable learning rate.



\begin{figure}[htbp]
\begin{subfigure}{0.32\linewidth}
    \centering
    \includegraphics[width=\linewidth]{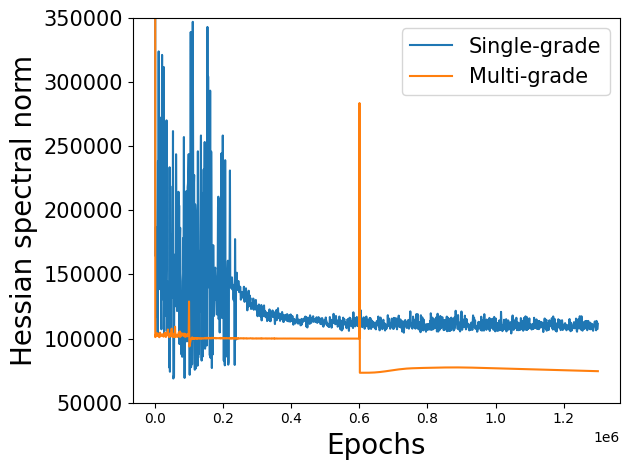}
    \caption{Hessian spectral norm}
    \label{fig: Compare SGDL and MGDL main text (a)}
\end{subfigure}
\begin{subfigure}{0.32\linewidth}
    \centering
    \includegraphics[width=\linewidth]{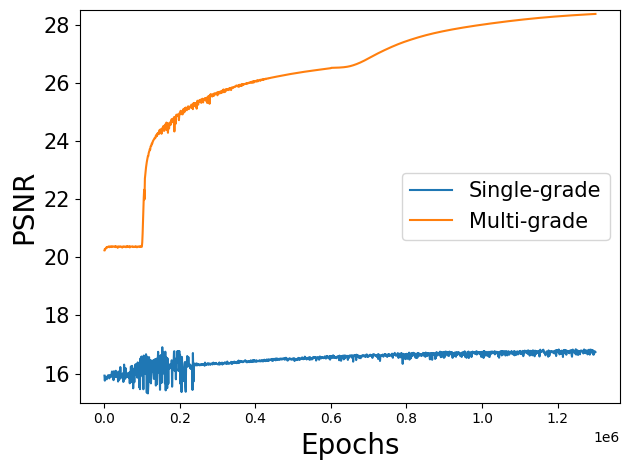}
    \caption{PSNR (in dB)}
    \label{fig: Compare SGDL and MGDL main text (b)}
\end{subfigure}
\begin{subfigure}{0.32\linewidth}
    \centering
    \includegraphics[width=\linewidth]{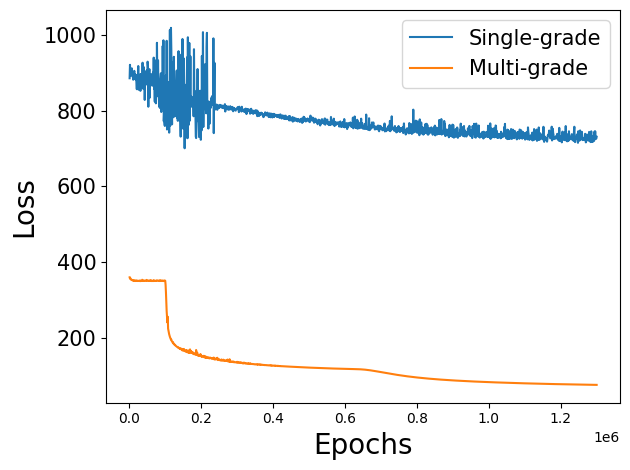}
    \caption{Loss value}
    \label{fig: Compare SGDL and MGDL main text (c)}
\end{subfigure}    
\caption{Evolution of Hessian spectral norm, PSNR, and loss during optimization for single-grade vs. multi-grade strategies ($\eta = 2\times10^{-5}$).}
\label{fig: Compare SGDL and MGDL main text}
\end{figure}

\section{Experiments}\label{sec:experiments}
We conduct experiments to evaluate MG-SpaIR by: (1) demonstrating its effectiveness on mixed-degradation tasks compared with neural network--based and traditional methods, (2) comparing with data-driven methods to illustrate the trade-off between reconstruction fidelity and hallucinated textures, and (3) validating key design choices through ablation studies. Additional ablations are provided in Supp.~\cref{sec:supp_exp}.

\noindent\textbf{Experimental Setup.}
Center-cropped images from Set14 and Flickr2K are used as ground truths, degraded according to the mixed-degradation model in \cref{eq:degradation}. The degradation combines several corruptions: $\mathcal{A}$, a Gaussian blur with kernel standard deviation $\sigma_\text{blur}=0.5$ or $1$, and kernel size $2\lceil 3\sigma_\text{blur}\rceil + 1$; $\mathcal{S}_s$, a $2\times$ bicubic downsampling; $\mathcal{N}$, additive Gaussian noise with $\sigma_{\text{noise}}=5/255$ or $10/255$; and $\mathbf{M}$, a random pixel mask with dropout probability $p_{\text{missing}}=0$ or $0.2$.
The objective uses MSE as $\mathcal{L}_{\text{fid}}$ and an $\ell_0$ norm as $\Phi$. PSNR and SSIM values are reported below each image. The framework is implemented in PyTorch and executed on an RTX~4090 GPU with 24 GB of VRAM.


\subsection{Comparison with Regularized Methods}
We evaluate MG-SpaIR on a challenging mixed-degradation task defined in \cref{eq:degradation}, comparing it with two representative baselines: (1) a traditional restoration pipeline built from classical algorithms, and (2) Deep Image Prior (DIP)~\cite{ulyanov2018deep}, which also exploits neural network structures for image restoration. To ensure fairness, DIP is optimized using a unified end-to-end loss rather than a sequential pipeline, which can accumulate errors.

The traditional baseline follows a sequential restoration process that inverts the degradation model. It consists of four stages: inpainting via the fast marching method~\cite{telea2004image}, denoising with BM3D~\cite{dabov2007image}, bicubic super-resolution~\cite{keys2003cubic}, and Wiener deblurring~\cite{10.7551/mitpress/2946.001.0001}. All degradation parameters are provided to their respective algorithms for a fair comparison. Further implementation details of this pipeline are given in the supplementary material.


\begin{figure}[htbp]
    \centering
    \begin{subfigure}{0.32\linewidth}
        \caption{Input}
        \centering
        \includegraphics[width=\linewidth]{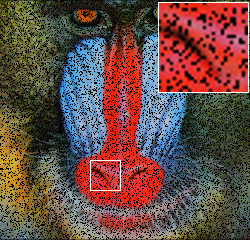}
        \caption*{\quad}
    \end{subfigure}
    \begin{subfigure}{0.32\linewidth}
        \caption{Deep Image Prior}
        \centering
        \includegraphics[width=\linewidth]{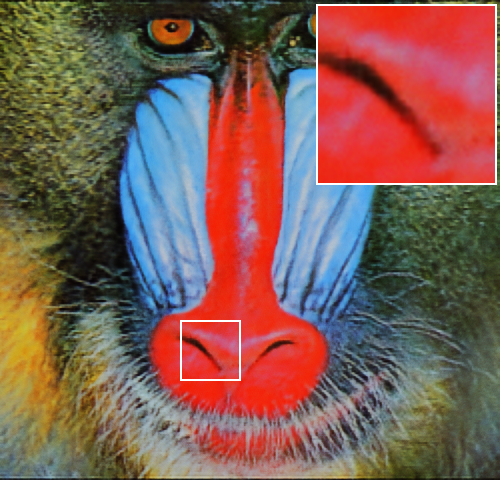}
        \caption*{(21.18 / 0.5321)}
    \end{subfigure}
    \begin{subfigure}{0.32\linewidth}
        \caption{Pipeline}
        \centering
        \includegraphics[width=\linewidth]{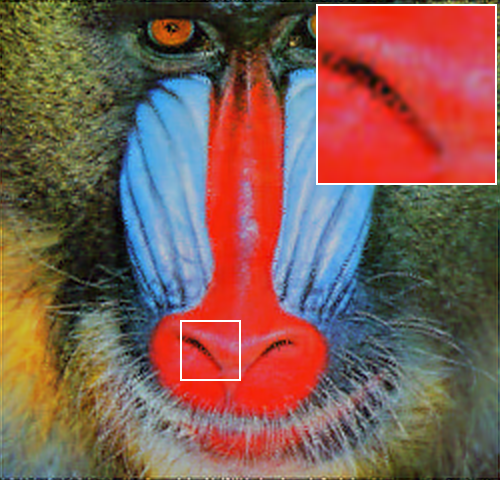}
        \caption*{(21.19 / 0.5364)}
    \end{subfigure}
    \begin{subfigure}{0.32\linewidth}
        \caption{MG-SpaIR-WIRE}
        \centering
        \includegraphics[width=\linewidth]{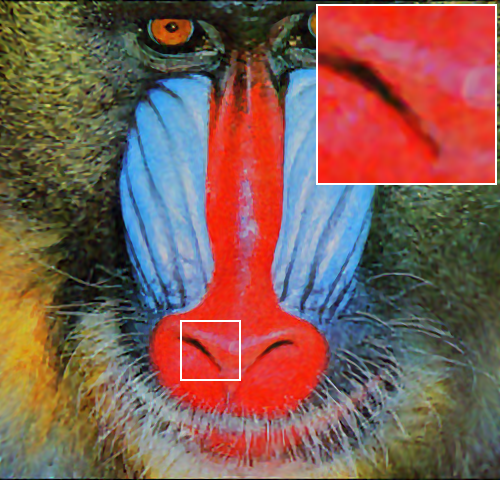}
        \caption*{(21.52 / \textbf{0.5634})}
    \end{subfigure}
    \begin{subfigure}{0.32\linewidth}
        \caption{MG-SpaIR-SIREN}
        \centering
        \includegraphics[width=\linewidth]{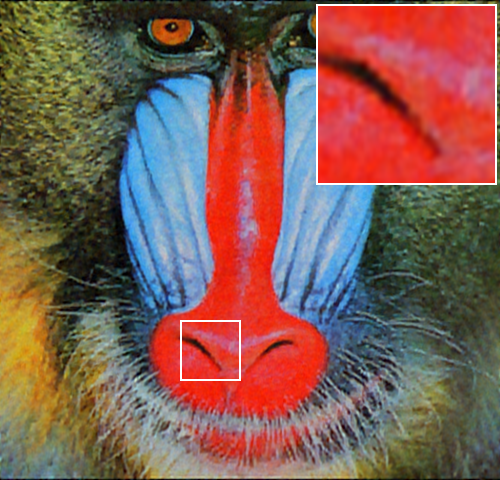}
        \caption*{(\textbf{21.55} / 0.5616)}
    \end{subfigure}
    \begin{subfigure}{0.32\linewidth}
        \caption{Ground Truth}
        \centering
        \includegraphics[width=\linewidth]{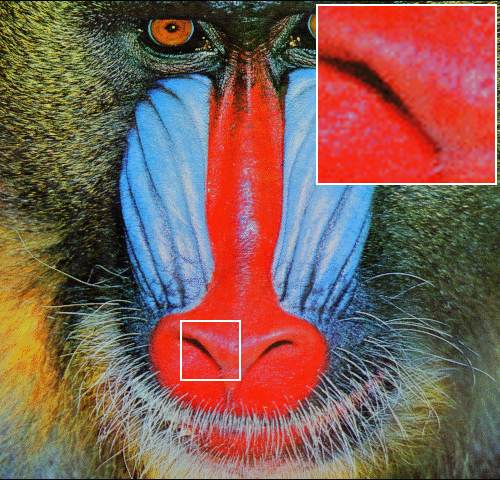}
        \caption*{\quad}
    \end{subfigure}
    \caption{Image restoration of the Baboon image, showing finer detail recovery in irregular regions. Settings: $\sigma_\text{blur}=1$, $2\times$ downsampling, $\sigma_\text{noise}=5$, $p_\text{missing}=0.2$.}
    \label{fig:baboon_comparison}
\end{figure}

\begin{figure}[htbp]
    \centering
    \begin{subfigure}{0.32\linewidth}
        \caption{Input}
        \centering
        \includegraphics[width=\linewidth]{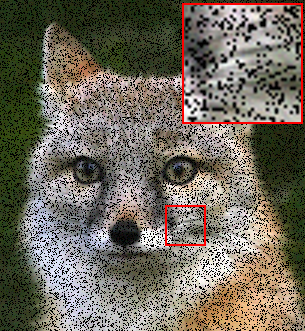}
        \caption*{\quad}
    \end{subfigure}
    \begin{subfigure}{0.32\linewidth}
        \caption{Deep Image Prior}
        \centering
        \includegraphics[width=\linewidth]{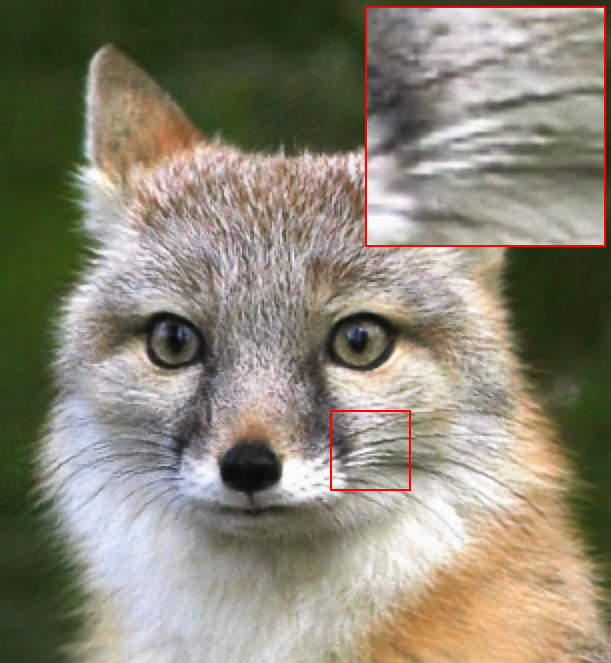} 
        \caption*{(26.15 / 0.7497)}
    \end{subfigure}
    \begin{subfigure}{0.32\linewidth}
        \caption{Pipeline}
        \centering
        \includegraphics[width=\linewidth]{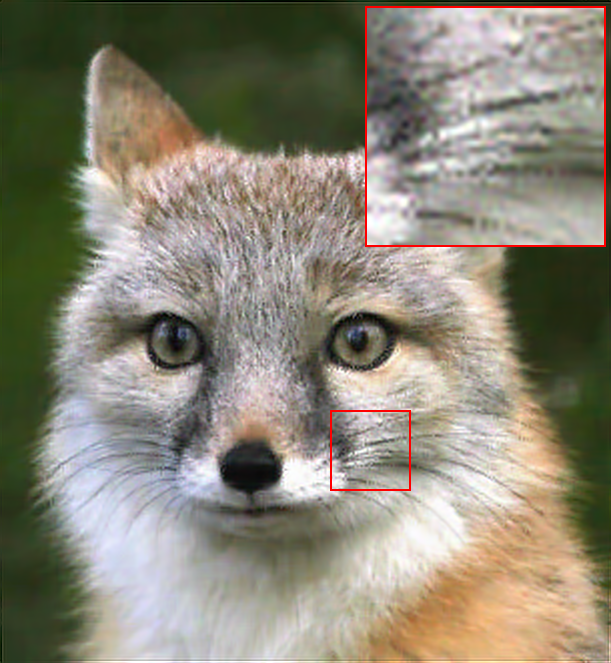}
        \caption*{(26.14 / 0.7618)}
    \end{subfigure}
    \begin{subfigure}{0.32\linewidth}
        \caption{MG-SpaIR-WIRE}
        \centering
        \includegraphics[width=\linewidth]{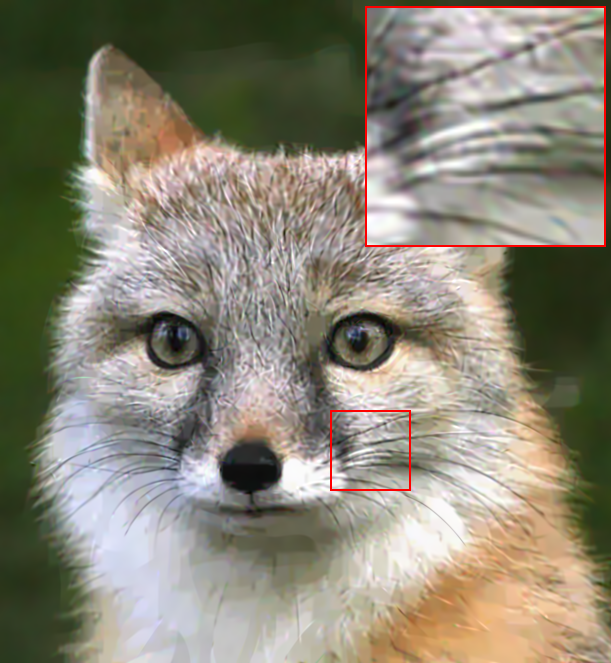}
        \caption*{(26.82 / 0.7803)}
    \end{subfigure}
    \begin{subfigure}{0.32\linewidth}
        \caption{MG-SpaIR-SIREN}
        \centering
        \includegraphics[width=\linewidth]{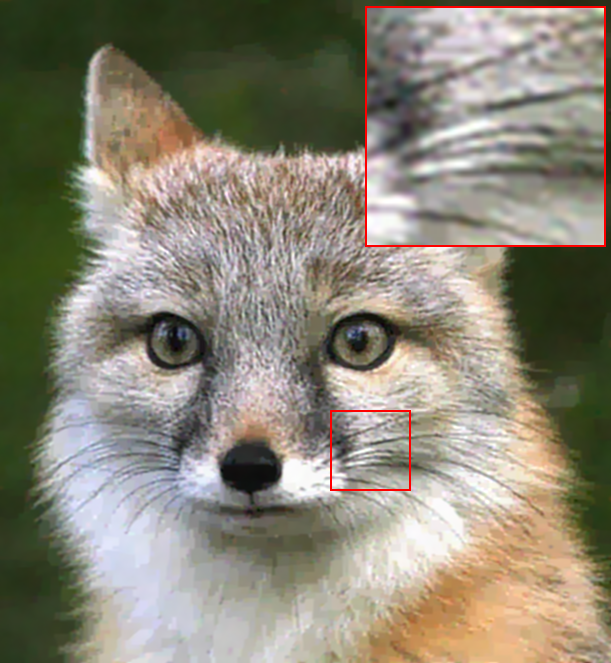}
        \caption*{\textbf{(26.87 / 0.7823)}}
    \end{subfigure}
    \begin{subfigure}{0.32\linewidth}
        \caption{Ground Truth}
        \centering
        \includegraphics[width=\linewidth]{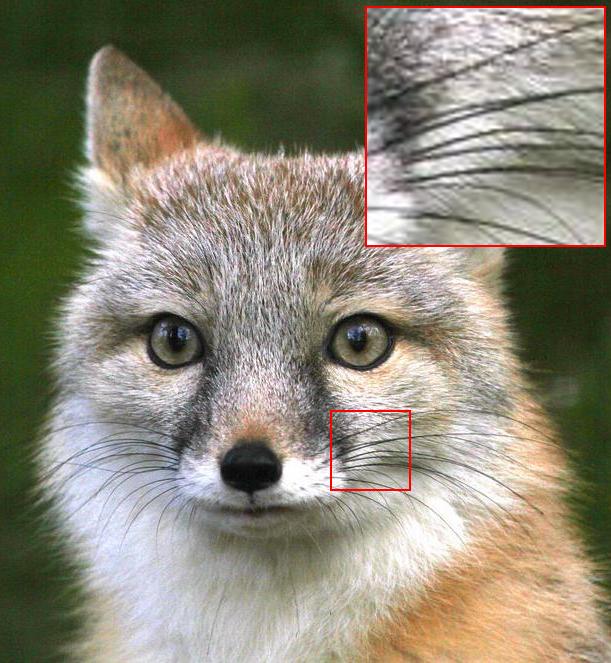}
        \caption*{\quad}
    \end{subfigure}
    \caption{Image restoration of the Fox image, showing smoother and less pixelated line reconstruction. Settings: $\sigma_\text{blur}=1$, $2\times$ downsampling, $\sigma_\text{noise}=5$, $p_\text{missing}=0.2$.}
    \label{fig:fox_comparison}
\end{figure}


\begin{figure}[htbp]
    \centering
    \begin{subfigure}{0.49\linewidth}
        \caption{Input}
        \centering
        \includegraphics[width=\linewidth]{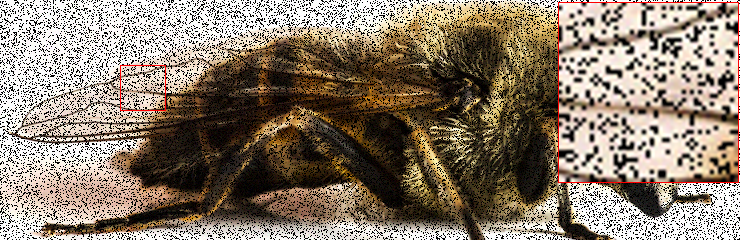}
        \caption*{\quad}
    \end{subfigure}
    \begin{subfigure}{0.49\linewidth}
        \caption{Deep Image Prior}
        \centering
        \includegraphics[width=\linewidth]{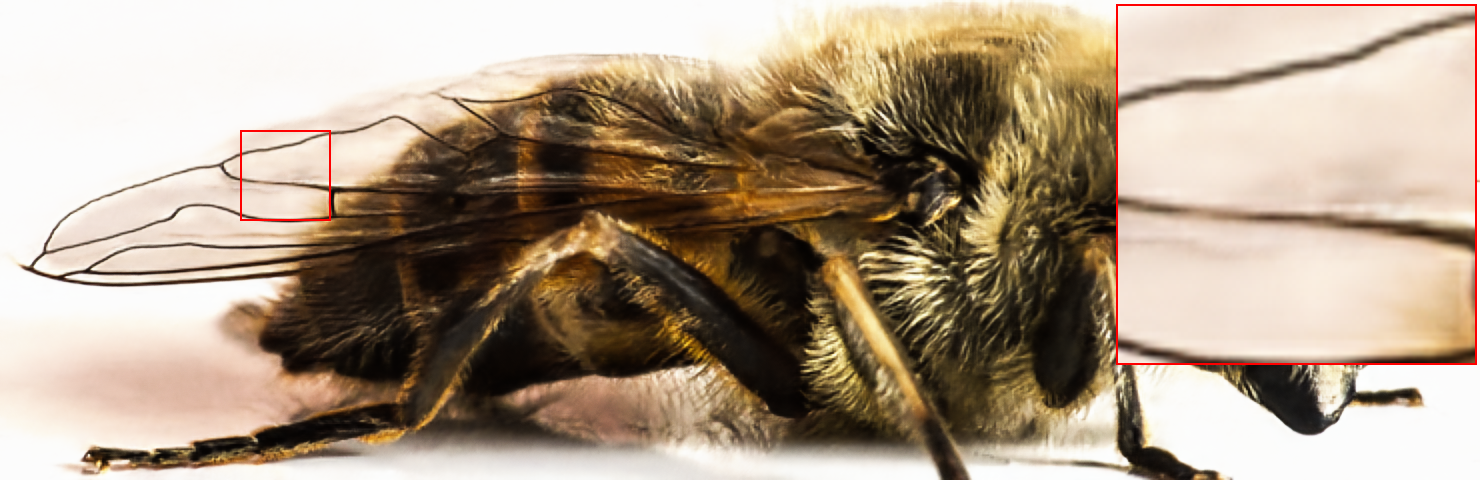}
        \caption*{(25.7 / 0.7924)}
    \end{subfigure}
    \begin{subfigure}{0.49\linewidth}
        \caption{Pipeline}
        \centering
        \includegraphics[width=\linewidth]{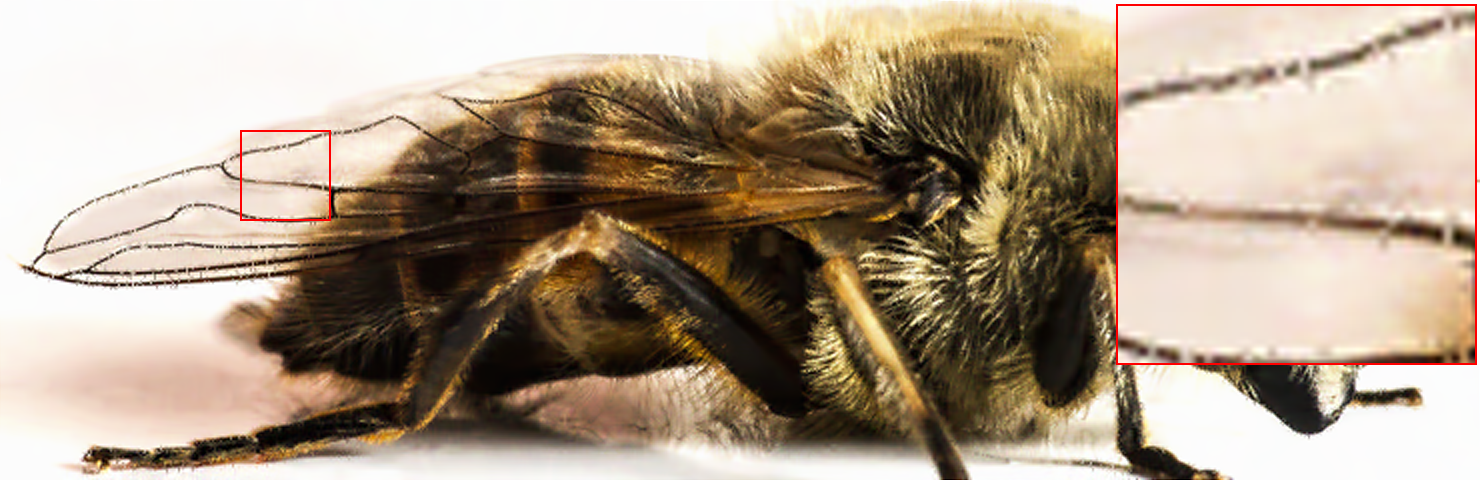}
        \caption*{(25.30 / 0.8021)}
    \end{subfigure}
    \begin{subfigure}{0.49\linewidth}
        \caption{MG-SpaIR-WIRE}
        \centering
        \includegraphics[width=\linewidth]{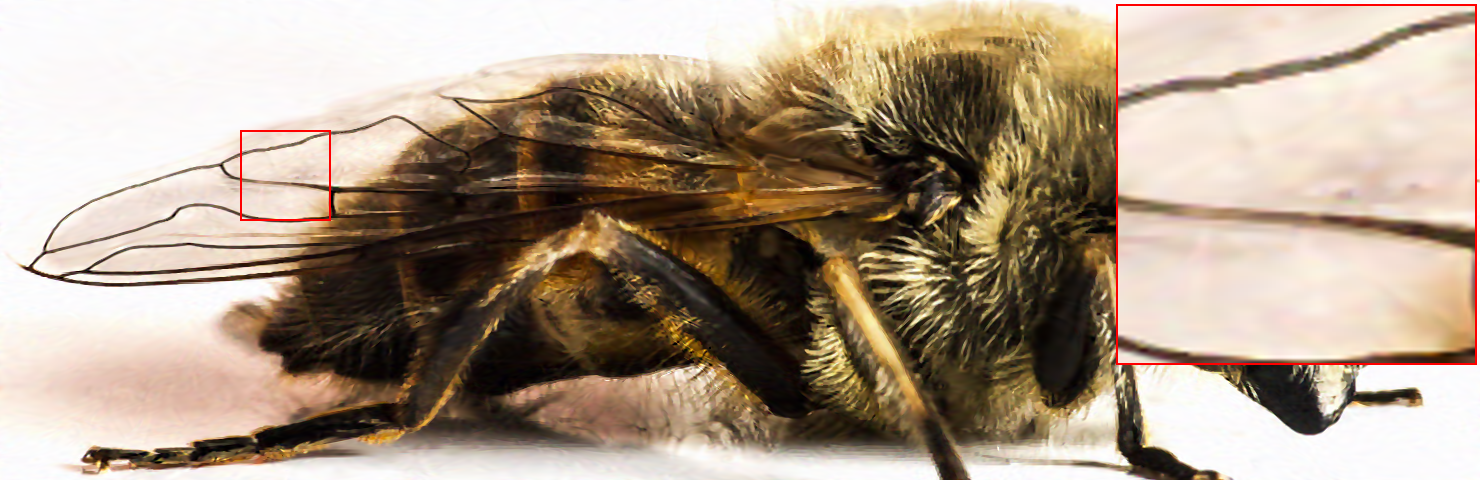}
        \caption*{(25.93 / \textbf{0.8070})}
    \end{subfigure}
    \begin{subfigure}{0.49\linewidth}
        \caption{MG-SpaIR-SIREN}
        \centering
        \includegraphics[width=\linewidth]{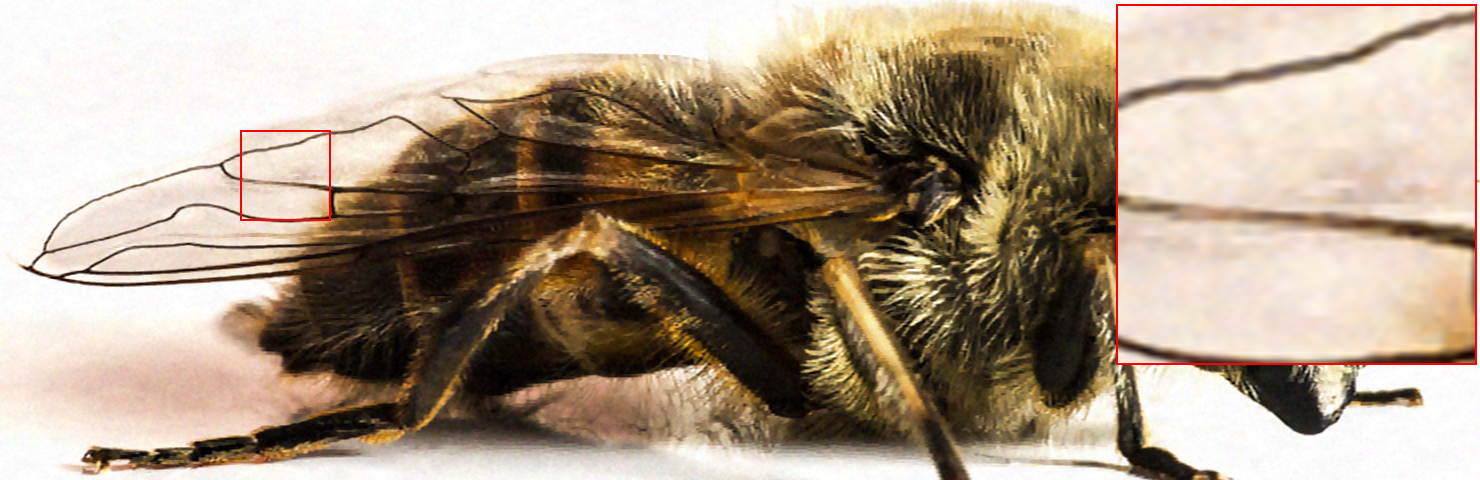}
        \caption*{(\textbf{25.97} / 0.8016)}
    \end{subfigure}
    \begin{subfigure}{0.49\linewidth}
        \caption{Ground Truth}
        \centering
        \includegraphics[width=\linewidth]{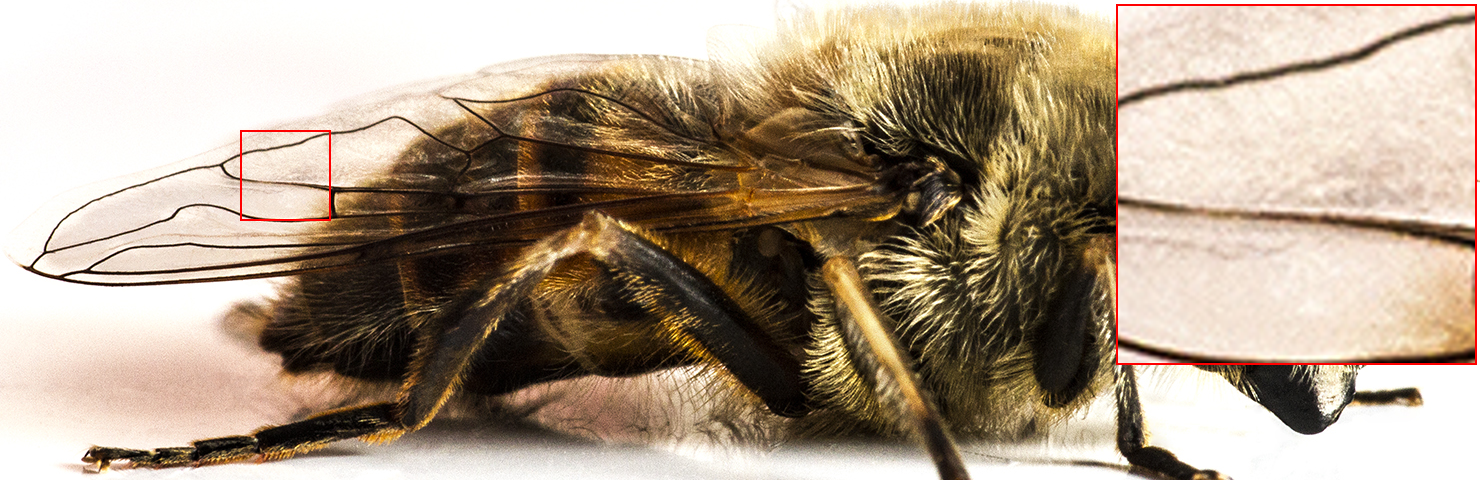}
        \caption*{\quad}
    \end{subfigure}
    \caption{Image restoration of the Bee image, highlighting recovery of fine textures. Settings: $\sigma_\text{blur}=0.5$, $2\times$ downsampling, $\sigma_\text{noise}=5$, $p_\text{missing}=0.2$.}
    \label{fig:bee_comparison}
\end{figure}

\noindent\textbf{Implementation.}
MG-SpaIR can be instantiated with different implicit network backbones. Specifically, \textbf{MG-SpaIR-WIRE} corresponds to the case where $\phi_{\theta_l}^{(l)}$ and $\psi_{\theta_l'}$ follow \cref{WIRE-hidden-layer,WIRE-output-layer}, respectively, while \textbf{MG-SpaIR-SIREN} uses the definitions in  \cref{SIREN-hidden-layer,SIREN-output-layer}, respectively. Both models are trained under the sparse-proximal regularization framework introduced in \cref{sec:model}.


Our methods, MG-SpaIR-WIRE and MG-SpaIR-SIREN, effectively reconstruct fine details (zoom-ins in \cref{fig:baboon_comparison}) and smooth structures (zoom-ins in \cref{fig:fox_comparison,fig:bee_comparison}). DIP fails to recover irregular textures (\cref{fig:baboon_comparison,fig:bee_comparison}) and produces pixelated lines (\cref{fig:fox_comparison}). The traditional pipeline oversmooths fine details (\cref{fig:baboon_comparison}) and fails to preserve structural lines (\cref{fig:bee_comparison}). The combined ability of our methods to preserve structure and recover detail results in the highest average PSNR and SSIM among all compared approaches.

\subsection{Comparison with Data-driven Methods}
\label{sec:compar_data-driven}
We include SwinIR~\cite{liang2021swinir}, a representative data-driven transformer baseline, for comparison under the same composite degradation setting (blur, down-sampling, and noise) without missing pixels, as SwinIR does not support inpainting. Results are reported for two pretrained variants: SwinIR-GAN (perceptual) and SwinIR-PSNR (distortion).

\begin{figure}[htbp!]
    \centering
    \begin{subfigure}{0.32\linewidth}
        \caption{Input}
        \centering
        \includegraphics[width=\linewidth]{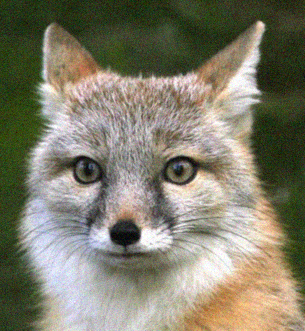}
        \caption*{\quad}
        \label{fig:data_driven input}
    \end{subfigure}
    \begin{subfigure}{0.32\linewidth}
        \caption{SwinIR-GAN}
        \centering
        \includegraphics[width=\linewidth]{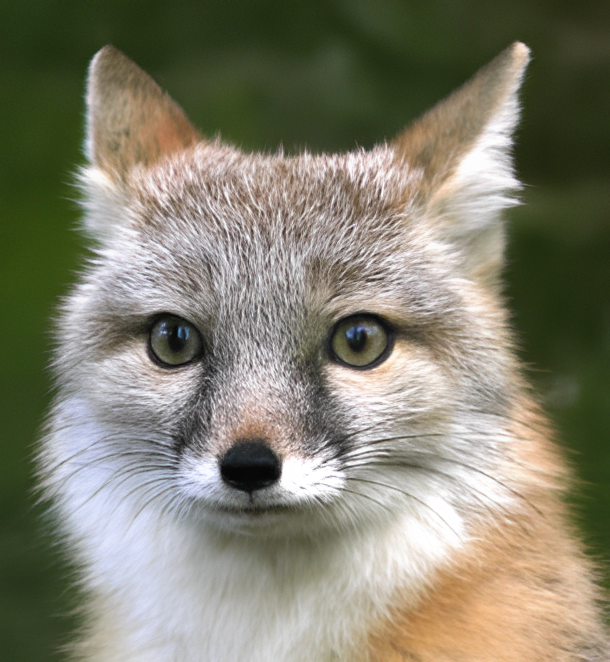}
        \caption*{(24.83 / 0.7107)}
        \label{fig:data_driven GAN}
    \end{subfigure}
    \begin{subfigure}{0.32\linewidth}
    \caption{SwinIR-PSNR}
        \centering
        \includegraphics[width=\linewidth]{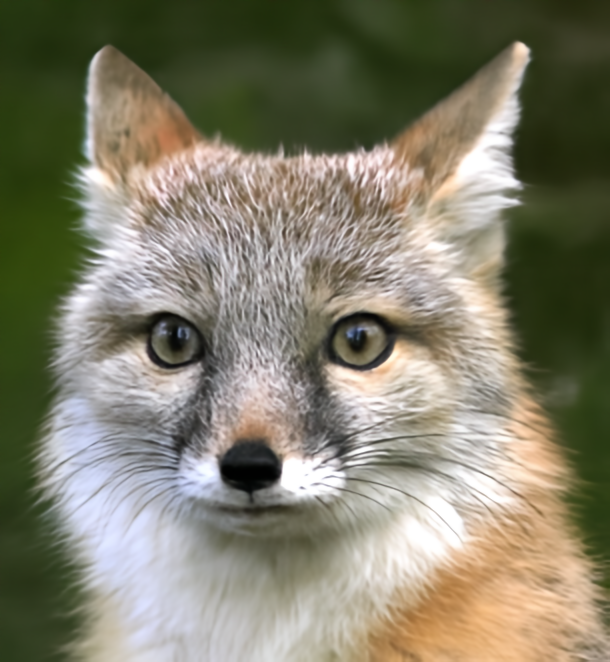}
        \caption*{(25.91 / \textbf{0.7426})}
        \label{fig:data_driven PSNR}
    \end{subfigure}
    \begin{subfigure}{0.32\linewidth}
        \caption{MG-SpaIR-WIRE}
        \centering
        \includegraphics[width=\linewidth]{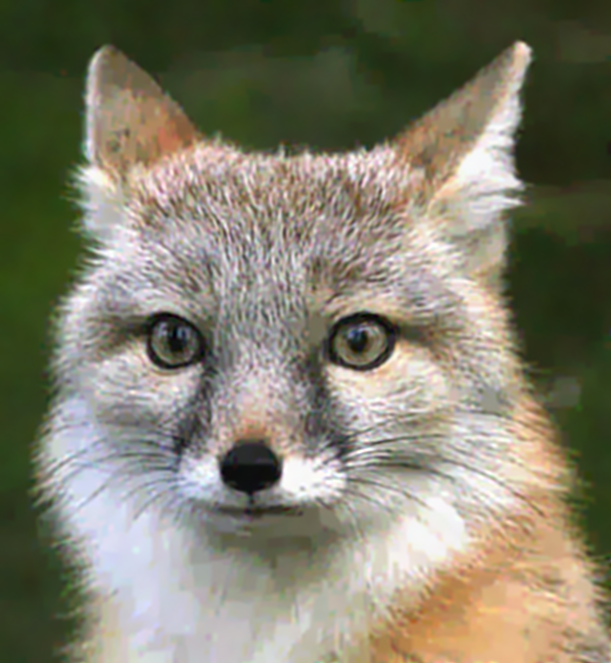}
        \caption*{(25.91 / 0.7301)}
        \label{fig:data_driven WIRE}
    \end{subfigure}
    \begin{subfigure}{0.32\linewidth}
        \caption{MG-SpaIR-SIREN}
        \centering
        \includegraphics[width=\linewidth]{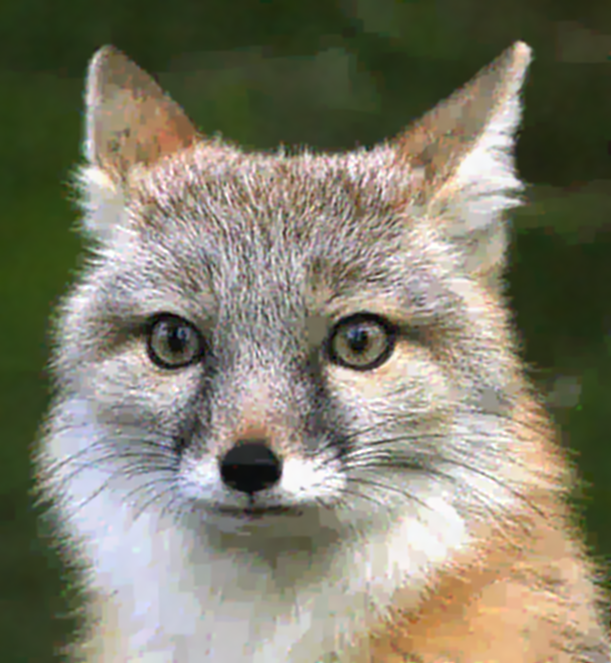}
        \caption*{(\textbf{26.02} / 0.7364)}
        \label{fig:data_driven SIREN}
    \end{subfigure}
    \begin{subfigure}{0.32\linewidth}
        \caption{Ground Truth}
        \centering
        \includegraphics[width=\linewidth]{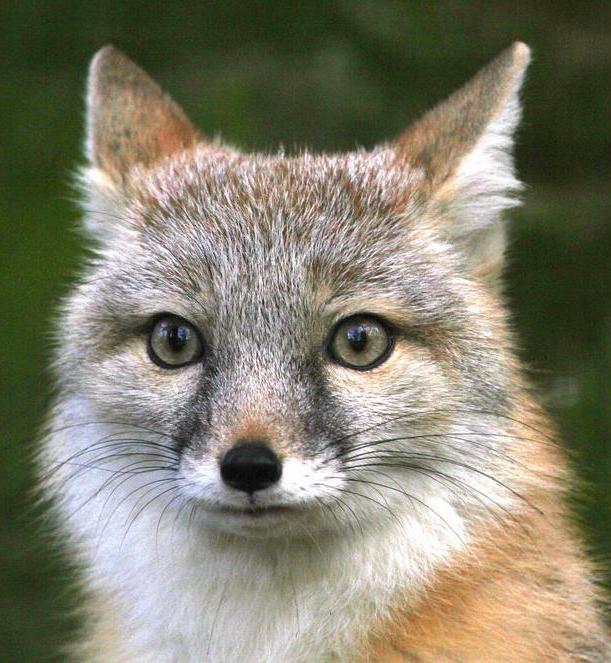}
        \caption*{\quad}
        \label{fig:data_driven GT}
    \end{subfigure}
    \caption{$\sigma_\text{blur}=1$, $2\times$ downsampled, $\sigma_\text{noise}=10$, $p_\text{missing}=0$ (No missing pixels).}
    \label{fig:main_comparison_fox}
\end{figure}

\cref{fig:main_comparison_fox} presents a qualitative comparison on a representative test example. Our MG-SpaIR-based results (\cref{fig:data_driven WIRE,fig:data_driven SIREN}) achieve slightly higher PSNR than the SwinIR baselines, though they preserve mild high-frequency noise and appear less smooth. The SwinIR variants (\cref{fig:data_driven GAN,fig:data_driven PSNR}) produce visually cleaner and sharper outputs; however, part of this sharpness comes from hallucinated high-frequency textures learned from external datasets. These hallucinated details increase perceived sharpness but do not correspond to the true underlying ground-truth structure, as they stem from learned dataset priors rather than information actually present in the ground truth. Consequently, they penalize PSNR and SSIM, explaining why data-driven methods’ visually sharper outputs do not necessarily achieve higher distortion metrics.

Unlike data-driven models such as SwinIR, which may hallucinate textures due to reliance on external training data, our method is guided solely by implicit priors from the INR backbone and explicit handcrafted sparsity priors applied in the image domain.
These two complementary priors jointly steer the optimization toward faithful structures rather than invented details.

\subsection{Ablation Studies}
\textbf{Necessity of Anisotropic $\ell_0$.}
To evaluate our key design choices, we perform an ablation study on the regularization prior. The visual and quantitative results in \cref{fig:ablation_prior} support two main conclusions.

\begin{figure}[htbp]
    \centering
    \begin{subfigure}{0.24\linewidth}
        \caption{Input}
        \centering
        \includegraphics[width=\linewidth]{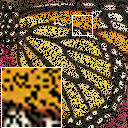}
        \caption*{\quad}
        \label{fig:degraded}
    \end{subfigure}
    \begin{subfigure}{0.24\linewidth}
        \caption{W/O Reg}
        \centering
        \includegraphics[width=\linewidth]{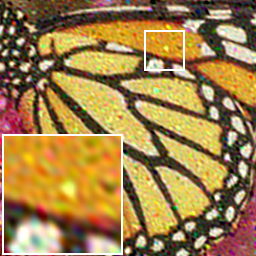}
        \caption*{(20.23 / 0.5842)}
        \label{fig:wo_reg}
    \end{subfigure}
    \begin{subfigure}{0.24\linewidth}
        \caption{W/Iso-$\ell_0$}
        \centering
        \includegraphics[width=\linewidth]{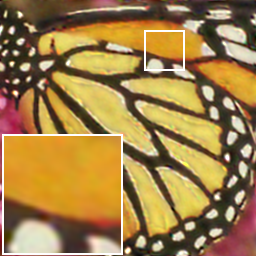}
        \caption*{(22.55 / 0.7495)}
        \label{fig:iso_l0}
    \end{subfigure}
    \begin{subfigure}{0.24\linewidth}
        \caption{W/Aniso-$\ell_0$}
        \centering
        \includegraphics[width=\linewidth]{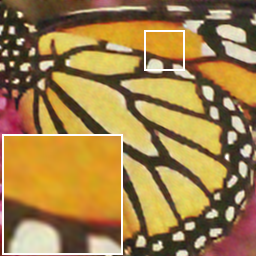}
        \caption*{\textbf{(22.82 / 0.7788)}}
        \label{fig:aniso_l0}
    \end{subfigure}
    \caption{Ablation study on the regularization prior. From left to right: degraded input, reconstruction without an explicit prior, with isotropic $\ell_0$ regularization, and with our proposed anisotropic $\ell_0$ regularization (Ours). Results demonstrate the need for an explicit prior and the superiority of the anisotropic formulation.}
    \label{fig:ablation_prior}
\end{figure}

First, the results confirm the necessity of an explicit prior. As shown in \cref{fig:wo_reg}, relying solely on the INR's implicit regularization is insufficient, producing reconstructions with significant residual noise and low PSNR.

Second, the formulation of the $\ell_0$ prior is critical. \Cref{fig:aniso_l0} demonstrates that the anisotropic version achieves superior restoration quality with the highest PSNR and SSIM, thanks to its ability to preserve directional information. In contrast, the isotropic $\ell_0$ norm (\cref{fig:iso_l0}), applied to the gradient magnitude by combining horizontal and vertical derivatives, can lose orientation cues that guide INR optimization. Our anisotropic formulation decouples these components, allowing independent regularization and improved reconstruction.

\subsection{Effectiveness across Different Sparsity Regularizers} 

To demonstrate the versatility of our proposed method, we apply it to several distinct sparsity regularization including $\ell_0, \ell_1$, and MCP. As shown in \cref{fig:effectiveness across sparse reg}, our method performs consistently across different sparsity regularizers, with $\ell_0$ achieving the best performance.

\begin{figure}[htbp]
\begin{subfigure}{0.32\linewidth}
    \caption{$\ell_0$}
    \centering
    \includegraphics[width=\linewidth]{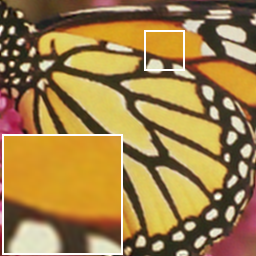}
    \caption*{\textbf{(24.65 / 0.8598)}}
\end{subfigure}
\begin{subfigure}{0.32\linewidth}
    \caption{$\ell_1$}
    \centering
    \includegraphics[width=\linewidth]{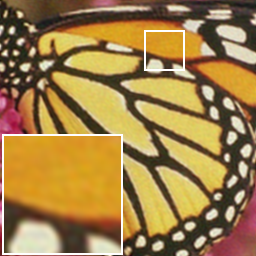}
    \caption*{(24.32 / 0.8457)}
\end{subfigure}
\begin{subfigure}{0.32\linewidth}
    \caption{MCP}
    \centering
    \includegraphics[width=\linewidth]{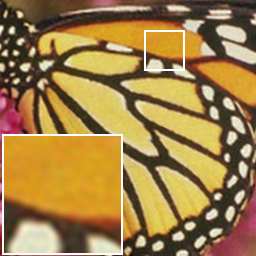}
    \caption*{(24.61 / 0.8513)}
\end{subfigure}
\caption{Effectiveness of our method with different sparse regularizations.}
\label{fig:effectiveness across sparse reg}
\end{figure}

\subsection{Effectiveness across Different INR Backbones.} To demonstrate the versatility of our proposed regularization, we apply it to several distinct INR backbones. As shown in \cref{fig:effectiveness across INR}, our method consistently improves the performance of MLP-based backbones such as SIREN, Gabor-based backbones such as WIRE, and hash-grid-based backbones such as Instant-NGP (``INGP'') \cite{muller2022instant}. This confirms that our proposed prior is a model-agnostic module beneficial for a wide range of INR frameworks. Although the INR itself has implicit regularization \cite{saragadam2023wire}, this experiment shows that adding an explicit regularization for image restoration is helpful and necessary.

\begin{figure}[htbp]
\begin{subfigure}{0.24\linewidth}
    \caption{\tiny{WIRE}}
    \centering
    \includegraphics[width=\linewidth]{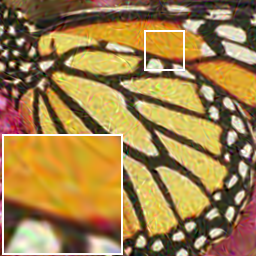}
    \caption*{(21.90 / 0.6725)}
\end{subfigure}
\begin{subfigure}{0.24\linewidth}
    \caption{\tiny{MG-SpaIR-WIRE}}
    \centering
    \includegraphics[width=\linewidth]{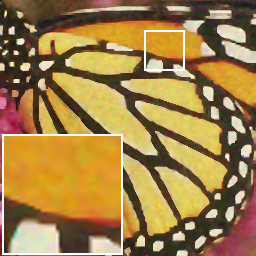}
    \caption*{\textbf{(22.07 / 0.7705)}}
\end{subfigure}
\begin{subfigure}{0.24\linewidth}
    \caption{\tiny{INGP}}
    \centering
    \includegraphics[width=\linewidth]{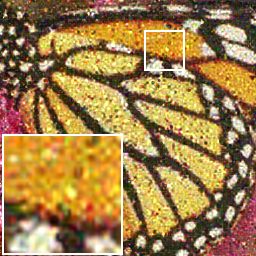}
    \caption*{(17.80 / 0.4736)}
\end{subfigure}
\begin{subfigure}{0.24\linewidth}
    \caption{\tiny{MG-SpaIR-INGP}}
    \centering
    \includegraphics[width=\linewidth]{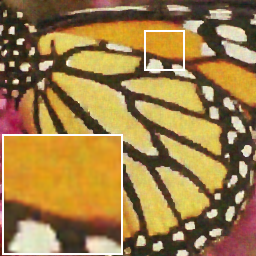}
    \caption*{\textbf{(21.67 / 0.7346)}}
\end{subfigure}
\caption{Effectiveness of our regularization across different INR backbones.}
\label{fig:effectiveness across INR}
\end{figure}

\noindent\textbf{Effectiveness of the Multi-Grade Strategy.} 
To evaluate the multi-grade strategy, we compare it using SIREN against two counterparts: a plain single-grade SIREN and an architectural variant with residual connections (“Residual”)~\cite{he2016deep}. The visual results in \cref{fig:MGDL ablation} show that the multi-grade strategy achieves higher PSNR and SSIM than the single-grade approach. Notably, the residual connection variant suffers from over-smoothing and loss of fine textures, resulting in worse scores than the plain single-grade model.

Regarding memory efficiency, the multi-grade strategy consumes only 5608 MB of VRAM, a reduction of approximately 30\% compared to the 8024 MB required by both the single-grade and residual-connection counterparts.
By achieving both higher reconstruction quality and lower VRAM usage, we conclude that the multi-grade strategy—rather than simple residual connections—is the proper way to implement the residual principle in INRs.


\begin{figure}[htbp]
\centering
\begin{subfigure}{0.32\linewidth}
    \caption{Single-grade}
    \centering
    \includegraphics[width=\linewidth]{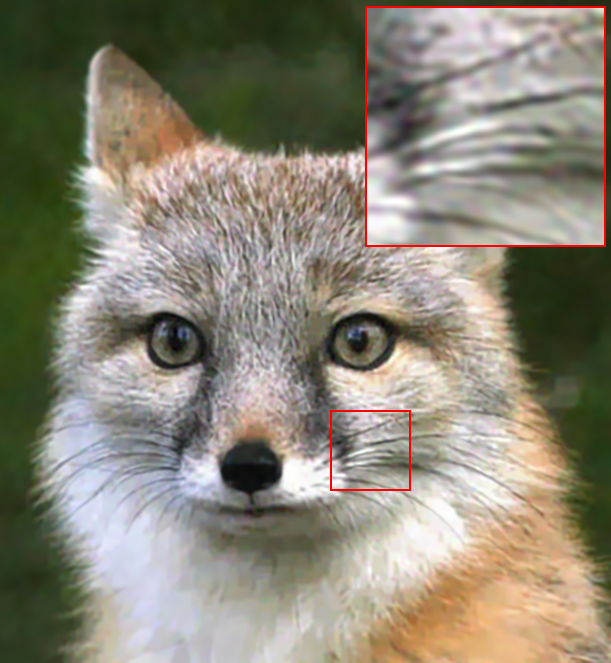}
    \caption*{(26.39 / 0.7586)}
    \label{fig:abl_plain_inr}
\end{subfigure}
\begin{subfigure}{0.32\linewidth}
    \caption{Residual}
    \centering
    \includegraphics[width=\linewidth]{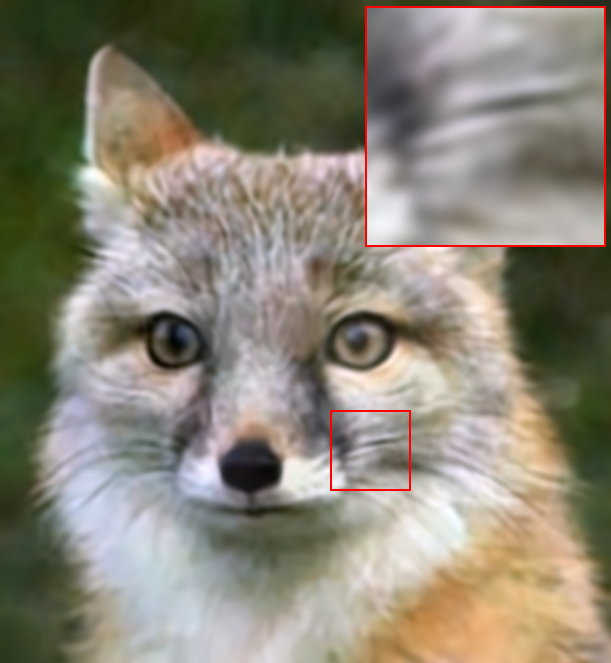}
    \caption*{(23.07 / 0.6278)}
    \label{fig:abl_res_con}
\end{subfigure}
\begin{subfigure}{0.32\linewidth}
    \caption{Multi-grade}
    \centering
    \includegraphics[width=\linewidth]{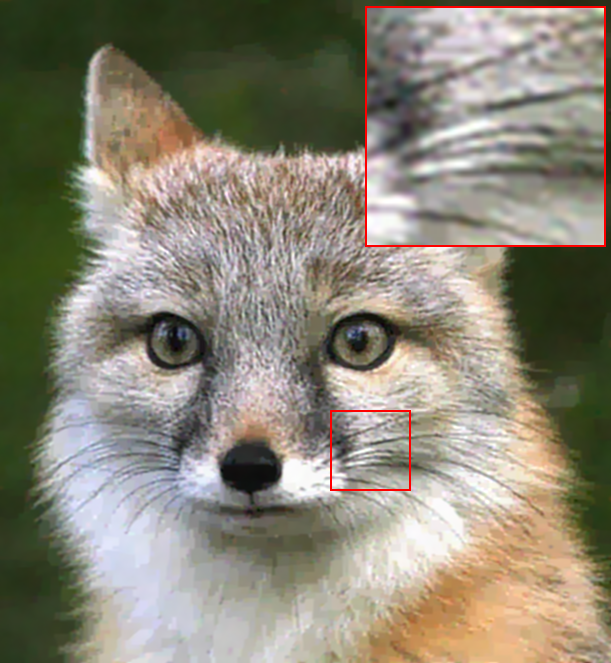}
    \caption*{\textbf{(26.87 / 0.7823)}}
    \label{fig:abl_mgdl}
\end{subfigure}
\caption{Effectiveness of the proposed multi-grade strategy.}
\label{fig:MGDL ablation}
\end{figure}

\section{Conclusion}\label{sec:conclusion}
We introduced MG-SpaIR, a training-data-free image restoration framework that unifies denoising, deblurring, super-resolution, and inpainting within a single self-supervised optimization model. MG-SpaIR combines a multi-grade implicit representation—trained sequentially in a coarse-to-fine residual manner—with a sparse proximal regularization applied in the high-resolution image domain to stabilize fitting and suppress INR-induced artifacts. The multi-grade construction improves representational fidelity and optimization stability by decomposing training into a sequence of smaller subproblems, enabling sharper reconstructions and reduced sensitivity to learning-rate choices. The sparse proximal prior further steers the optimization toward clean, structurally faithful solutions, reducing spurious high-frequency artifacts and residual noise. Experiments on mixed-degradation settings demonstrate that MG-SpaIR achieves state-of-the-art performance among training-data-free INR-based methods and offers a stable, interpretable alternative to data-driven restoration. 

\section*{Acknowledgments}
L. Shen is supported in part by the US National Science Foundation under grant DMS-2208385. 
Y. Xu is supported in part by the US National Science Foundation under grant DMS-2208386, and by the US National Institutes of Health under grant R21CA263876.

\section*{Data Availability}
The datasets analysed during the current study are publicly available. 
The Set5 and Set14 datasets are standard benchmarks in the field and are widely available in public repositories (e.g., \url{https://www.kaggle.com/datasets/ll01dm/set-5-14-super-resolution-dataset}). 
The Flickr2K dataset was introduced by Lim et al. and is available for download in public repositories (e.g., \url{https://www.kaggle.com/datasets/daehoyang/flickr2k}). 
Additionally, the specific image ``The quick brown fox'' used in this study is available on Wikimedia Commons (\url{https://commons.wikimedia.org/wiki/File:The_quick_brown_fox....._(15677707699).jpg}).

\begin{appendices}
\section{Implementation Details: INR Backbones and Proximal Operators}
This section presents the implementation details,
including the INR backbones and the closed-form proximal
operators required by our optimization algorithm.
\subsection{Layer Functions in INR Backbones}
\label{sec:layer_func}
The proposed framework can be directly applied to various foundational INR backbones~\cite{tancik2020fourier,ramasinghe2022beyond,muller2022instant} for image restoration. 
In particular, we adopt two representative backbones, WIRE~\cite{saragadam2023wire} and SIREN~\cite{sitzmann2020implicit}, as illustrative backbones in our experiments. 
WIRE introduces localized frequency modulation inspired by wavelets, while SIREN captures globally coherent high-frequency details through sinusoidal activations. 
Both backbones have been explicitly evaluated on image generalization tasks in their original works and achieved state-of-the-art performance, making them ideal backbones for our framework.

\subsubsection{WIRE.} The WIRE architecture \cite{saragadam2023wire} employs a layer function based on the continuous complex 2D Gabor wavelet. For the WIRE network, each hidden layer is defined as 
\begin{equation}\label{WIRE-hidden-layer}
\begin{aligned}
\phi^{(l)}_{\theta_l}(\mathbf{a}^{(l-1)}) = \sigma(\mathbf{W}_1^{(l)} \mathbf{a}^{l-1} + \mathbf{b}_1^{(l)}; \omega_0, s_0) \cdot \\
e^{-|s_0(\mathbf{W}_2^{(l)} \mathbf{a}^{l-1} + \mathbf{b}_2^{(l)})|^2},
\end{aligned}
\end{equation}
where $\sigma(x; \omega_0, s_0) = e^{j\omega_0 x} e^{-|s_0 x|^2}$,
in which $\omega_0$ and $s_0$ control the frequency and scale of the complex Gabor wavelet. 
In that case, $\theta_l=\{\mathbf{W}_1^{(l)},\mathbf{b}_1^{(l)},\mathbf{W}_2^{(l)},\mathbf{b}_2^{(l)}\}$.
The output layer is defined as
\begin{equation}\label{WIRE-output-layer}
    \psi_{\theta'}(\mathbf{a}^{L}) = \mathbf{W}' \mathbf{a}^{(L)} + \mathbf{b}'
\end{equation}
with parameters $\theta' = \{\mathbf{W}', \mathbf{b}'\}$.

\subsubsection{SIREN.}
SIREN~\cite{sitzmann2020implicit} is a Multi-Layer Perceptron (MLP) with sinusoidal activations, 
demonstrated to effectively represent complex and high-frequency image structures.
For the SIREN network, each hidden layer is defined as 
\begin{equation}\label{SIREN-hidden-layer}
    \phi_{\theta_l}^{(l)}(\mathbf{a}^{(l-1)}) = \sigma_l(\mathbf{W}^{(l)} \mathbf{a}^{(l-1)} + \mathbf{b}^{(l)}),
\end{equation}
where $\theta_l = \{\mathbf{W}^{(l)}, \mathbf{b}^{(l)}\}$ and $\sigma_l$ denotes the activation function at layer $l$. To capture high-frequency variations, the activation functions are chosen as 
\begin{equation}
    \sigma_1(z) = \sin(w_0 z), \quad 
    \sigma_l(z) = \sin(z), \; l \geq 2.  
\end{equation}
where $w_0$ is a scaling constant applied to the first layer to enhance the representation of fine details. Finally, the output layer is defined as
\begin{equation}\label{SIREN-output-layer}
    \psi_{\theta'}(\mathbf{a}^{L}) = \mathbf{W}' \mathbf{a}^{(L)} + \mathbf{b}'
\end{equation}
with parameters $\theta' = \{\mathbf{W}', \mathbf{b}'\}$.

\subsection{Closed-form Expressions of Proximal Operators}

\label{sec:prox}

In the proximal update of \cref{eq:prox_update}, each element of $\mathbf{U}$ is updated via the proximal operator of $\Phi$.
To clarify this step and to facilitate implementation, we provide the explicit closed-form expressions of the proximal mappings corresponding to common sparse regularizers used in our framework.
For a scalar variable $v \in \mathbb{R}$, the proximal mapping associated with $\Phi$ is defined as
\begin{equation}
\operatorname{prox}_{\frac{\gamma\lambda}{\beta}\Phi}(v)
\coloneqq \arg\min_{u \in \mathbb{R}}
\frac{1}{2}(u-v)^2 + \frac{\gamma\lambda}{\beta}\Phi(u).
\end{equation}
\noindent
\textit{Note.} The proximal mapping is, in general, set-valued.
In practice, we use its standard single-valued form (e.g., hard or soft thresholding) to ensure deterministic element-wise updates of $\mathbf{U}$.

We next summarize the closed-form proximal mappings for three commonly used regularizers—$\ell_0$, $\ell_1$, and MCP—which correspond to hard, soft, and firm thresholding, respectively.
These operators are essential in our sparse regularization module and are applied element-wise to the tensor $\mathbf{U}$ during each iteration of the optimization.

\medskip
\noindent
\textbf{(a) $\ell_0$ regularization (hard thresholding):}
\begin{equation}
\Phi(u) = \mathbb{I}[u \neq 0], \qquad
\operatorname{prox}_{\frac{\gamma\lambda}{\beta}\Phi}(v) =
\begin{cases}
0, & |v| \le \sqrt{\frac{2\gamma\lambda}{\beta}},\\
v, & |v| > \sqrt{\frac{2\gamma\lambda}{\beta}}.
\end{cases}
\end{equation}

\medskip
\noindent
\textbf{(b) $\ell_1$ regularization (soft thresholding):}
\begin{equation}
\Phi(u) = |u|, \qquad
\operatorname{prox}_{\frac{\gamma\lambda}{\beta}\Phi}(v) =
\operatorname{sign}(v)\max\{|v| - \frac{\gamma\lambda}{\beta},0\}.
\end{equation}

\medskip
\noindent
\textbf{(c) MCP regularization (firm thresholding):}
With shape parameter $\gamma_\text{shape} > 0$,
\begin{equation}
\Phi(u) =
\begin{cases}
|u| - \dfrac{u^2}{2\gamma_\text{shape}}, & |u| \le \gamma_\text{shape},\\
\dfrac{\gamma_\text{shape}}{2}, & |u| > \gamma_\text{shape},
\end{cases}
\end{equation}
and the corresponding proximal operator is
\begin{equation}
\operatorname{prox}_{\frac{\gamma\lambda}{\beta}\Phi}(v) =
\begin{cases}
0, & |v| \le \frac{\gamma\lambda}{\beta}\\
\dfrac{\operatorname{sign}(v),(|v| - \frac{\gamma\lambda}{\beta})}{1 - (\frac{\gamma\lambda}{\beta})/\gamma_\text{shape}}, 
& \frac{\gamma\lambda}{\beta} < |v| \le \gamma_\text{shape},\\
v, & |v| > \gamma_\text{shape},
\end{cases}
\end{equation}
where $\frac{\gamma\lambda}{\beta} < \gamma_\text{shape}$ ensures $1 - (\frac{\gamma\lambda}{\beta})/\gamma_\text{shape} > 0$. More detailed description on the proximity operator of  the MCP can be found in \cite{Shen-Suter-Tripp:JOTA:2019}.

\noindent
For a tensor input $\mathbf{V}$ of the same dimension as $\mathbf{U}$, 
the proximal mappings of $\ell_0$, $\ell_1$, and MCP are applied element-wise, so that each spatial position and channel in $\mathbf{V}$ is processed independently. 
This element-wise property allows efficient and parallel computation of the updates during optimization.

\section{Analysis of MG-SpaIR Algorithm}\label{Supplement: MG-SpaIR Algorithm}
We first establish a necessary condition characterizing all global minimizers of~\cref{eq:original} under the assumptions that $\mathcal{L}_\text{fid}$ is MSE loss, $\Phi$ is $\ell_0$ regularization, and $\operatorname{OptStep}$ is a gradient descent update with step size $\eta$. Building on this condition, we derive a proximal–gradient update tailored to the model’s sparsity structure and prove its convergence. Finally, we extend these results to the multi-grade setting to characterize the convergence properties of \cref{alg}.

\subsection{Characterizing Global Minimizers: A Necessary Condition}

We first derive a necessary condition for the global minimizers of \cref{eq:original} under a single-grade setting. Consider a neural network with $L$ hidden layers. For convenience, we vectorize the auxiliary variable $\mathbf{U}$, the network parameters $\theta$, and the network output $f_{\theta}(\mathbf{X}_{\text{high}})$.

Define
\begin{equation}
\begin{aligned}
d_1 &:= H_{\text{high}} W_{\text{high}} (2C),\\
d_2 &:= \text{(number of trainable parameters)},\\
d_3 &:= H_{\text{high}} W_{\text{high}} C.
\end{aligned}
\end{equation}
Let $\bm{u}\in\mathbb{R}^{d_1}$ be the vectorized form of $\mathbf{U}$ obtained by stacking all tensor columns.
For each layer $l$, let $\mathbf{W}_l$ and $\mathbf{b}_l$ denote the weight matrix and bias vector, and let $\bm{W}_l$ and $\bm{b}_l$ be their vectorized forms.
The full network parameter vector is
\begin{equation}
\bm{\theta}=(\bm{W}_1^T,\bm{b}_1^T,\ldots,\bm{W}_L^T,\bm{b}_L^T)^T\in\mathbb{R}^{d_2}.
\end{equation}
The output $f_{\theta}(\mathbf{X}_{\text{high}})$ is likewise vectorized into $\bm{f}(\bm{\theta})\in\mathbb{R}^{d_3}$.

Because $\mathbf{M}$, $\mathcal{S}_s$, $\mathcal{A}$, and $\mathcal{B}$ are linear operators, there exist matrices $\mathbf{A}$ and $\mathbf{B}$ such that
$\mathbf{A}\bm{f}(\bm{\theta})$ and $\mathbf{B}\bm{f}(\bm{\theta})$ correspond to the vectorized forms of $\mathbf{M}\odot(\mathcal{S}_s\circ\mathcal{A})$ and $\mathcal{B}$ applied to the network output.
Let $\bm{g}$ be the vectorized $\tilde{\mathbf{Y}}$.
Then \cref{eq:original} is equivalent to
\begin{equation}\label{eq:model:vector}
\begin{aligned}
\min_{(\bm{u}, \bm\theta) \in \mathbb{R}^{d_1} \times \mathbb{R}^{d_2}}&\mathcal{L}(\bm{u}, \bm\theta):=\frac{1}{2}\|\bm{g} - \mathbf{A}\bm{f}(\bm\theta)\|_2^2 \\
&
+ \frac{\beta}{2}\|\bm{u} - \mathbf{B}\bm{f}(\bm\theta)\|_2^2
+ \lambda \|\bm{u}\|_0
\end{aligned},
\end{equation}
where $\|\cdot\|_2$ denotes the Euclidean norm and $\|\cdot\|_0$ the $\ell_0$ norm. For a proper, lower semi-continuous function $h: \mathbb{R}^{d_2} \to \mathbb{R}\cup\{\infty\}$ and $\gamma>0$, the proximal operator of $h$ at $\bm{u}\in\mathbb{R}^{d_1}$ is
\[
    \mathrm{prox}_{\gamma h}(\bm{u}) := \mathrm{argmin}_{\bm{v}\in\mathbb{R}^{d_1}}
\left\{\frac{1}{2\gamma}\|\bm{v}-\bm{u}\|_2^2 + h(\bm{v})\right\}.
\]

The following lemma is instrumental for characterizing the global minimizers of \cref{eq:model:vector}.

\begin{lem}\label{lemma_inequality1}
For $\gamma \in (0,1]$ and $\bm{a},\bm{b} \in \mathbb{R}^{d_1}$,
\begin{equation}\label{TheClaim}
\frac{1}{\gamma}\|\gamma\bm{a}\|_2^2 - \|\bm{a}\|_2^2 \;\leq\; 
\frac{1}{\gamma}\|\bm{b} - (1-\gamma)\bm{a}\|_2^2 - \|\bm{b}\|_2^2.
\end{equation}
\end{lem}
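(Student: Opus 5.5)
The plan is to expand both sides and reduce \eqref{TheClaim} to the nonnegativity of a single perfect square. There are no earlier results to invoke; this is pure algebra on inner products in $\mathbb{R}^{d_1}$.

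The left-hand side simplifies immediately:
\[
\frac{1}{\gamma}\|\gamma\bm{a}\|_2^2-\|\bm{a}\|_2^2=(\gamma-1)\|\bm{a}\|_2^2 .
\]
For the right-hand side, expand the square:
\[
\frac{1}{\gamma}\bigl(\|\bm{b}\|_2^2-2(1-\gamma)\langle\bm{a},\bm{b}\rangle+(1-\gamma)^2\|\bm{a}\|_2^2\bigr)-\|\bm{b}\|_2^2 .
\]

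Next, subtract the left-hand side from the right-hand side and multiply by $\gamma>0$. The $\|\bm{b}\|_2^2$ terms combine to $(1-\gamma)\|\bm{b}\|_2^2$. The $\|\bm{a}\|_2^2$ terms give
\[
(1-\gamma)^2+\gamma(1-\gamma)=1-\gamma .
\]
The cross term is $-2(1-\gamma)\langle\bm{a},\bm{b}\rangle$. The difference, multiplied by $\gamma$, is therefore
\[
(1-\gamma)\bigl(\|\bm{a}\|_2^2-2\langle\bm{a},\bm{b}\rangle+\|\bm{b}\|_2^2\bigr)=(1-\gamma)\|\bm{a}-\bm{b}\|_2^2 .
\]
This is nonnegative because $\gamma\in(0,1]$. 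Dividing back by $\gamma>0$ gives \eqref{TheClaim}, with equality exactly when $\gamma=1$ or $\bm{a}=\bm{b}$.

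The only step that needs care is tracking the coefficient of $\|\bm{a}\|_2^2$ so that the three terms collapse to $(1-\gamma)\|\bm{a}-\bm{b}\|_2^2$. Once that identity is written down, the rest is immediate, so I expect no real obstacle.
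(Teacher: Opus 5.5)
Your proposal is correct and follows essentially the same route as the paper: simplify the left-hand side to $(\gamma-1)\|\bm{a}\|_2^2$, clear the factor $1/\gamma$, and show the difference collapses to $(1-\gamma)\|\bm{a}-\bm{b}\|_2^2\ge 0$. Your coefficient bookkeeping $(1-\gamma)^2+\gamma(1-\gamma)=1-\gamma$ is right, and the equality case you note ($\gamma=1$ or $\bm{a}=\bm{b}$) is a correct small addition.
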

\begin{proof}
The left-hand side equals $(\gamma-1)\|\bm{a}\|_2^2$. Thus, it suffices to show
\begin{equation}
\|\bm{b} - (1-\gamma)\bm{a}\|_2^2 - \gamma\|\bm{b}\|_2^2 - \gamma(\gamma-1)\|\bm{a}\|_2^2 \geq 0.
\end{equation}
Expanding the squares using the inner product $\langle\cdot,\cdot\rangle$ on $\mathbb{R}^{d_1}$ and combining similar terms leads to
\begin{equation}
\begin{aligned}
&\|\bm{b} - (1-\gamma)\bm{a}\|_2^2 - \gamma\|\bm{b}\|_2^2 - \gamma(\gamma-1)\|\bm{a}\|_2^2\\
&= (1-\gamma)(\|\bm{b}\|_2^2 - 2\langle \bm{b},\bm{a}\rangle + \|\bm{a}\|_2^2) \\
&= (1-\gamma)\|\bm{b}-\bm{a}\|_2^2 \;\geq\; 0,
\end{aligned}
\end{equation}
since $\gamma \in (0,1]$. This proves \cref{TheClaim}.
\end{proof} 

Using the lemma above, we now establish a necessary condition for the global minimizers of \cref{eq:model:vector}.
\begin{theo}
\label{theorem:necessary_condition}
If the activation function of the neural network is differentiable, and let $\lambda,\beta>0$.  
If $(\bm{u}^*, \bm\theta^*) \in \mathbb{R}^{d_1}\times\mathbb{R}^{d_2}$ is a global minimizer of \cref{eq:model:vector}, then for all $\gamma \in (0,1]$,
\begin{equation}\label{eq:characterization}
\begin{aligned}
&\bm{u}^* \in \mathrm{prox}_{\frac{\gamma\lambda}{\beta}\|\cdot\|_0}\!\left(\gamma\mathbf{B}\bm{f}(\bm\theta^*) + (1-\gamma)\bm{u}^*\right), \\
&\frac{\partial\mathcal{L}}{\partial\bm\theta}(\bm{u}^*, \bm\theta^*) = 0.
\end{aligned}
\end{equation}
\end{theo}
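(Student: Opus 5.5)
The plan is to treat the two conditions in \cref{eq:characterization} separately, since they come from minimality in the two blocks of variables. The condition on $\bm\theta$ is routine. Because $(\bm u^*,\bm\theta^*)$ is a global minimizer, $\bm\theta^*$ is in particular a global minimizer of the partial function $\bm\theta\mapsto\mathcal{L}(\bm u^*,\bm\theta)$. The $\ell_0$ term $\lambda\|\bm u^*\|_0$ is constant in $\bm\theta$. The remaining two quadratic terms are differentiable in $\bm\theta$: the activation is differentiable, so $\bm f$ is differentiable by the chain rule, and $\mathbf A,\mathbf B$ are linear. Hence Fermat's rule gives $\frac{\partial\mathcal{L}}{\partial\bm\theta}(\bm u^*,\bm\theta^*)=0$.

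For the $\bm u$-condition, fix $\gamma\in(0,1]$ and set $\bm w:=\mathbf B\bm f(\bm\theta^*)$. By minimality of $\bm u^*$ in the partial problem, for every $\bm v\in\mathbb{R}^{d_1}$ we have
\[
\frac{\beta}{2}\|\bm u^*-\bm w\|_2^2+\lambda\|\bm u^*\|_0\le \frac{\beta}{2}\|\bm v-\bm w\|_2^2+\lambda\|\bm v\|_0 .
\]
The goal is to show that $\bm u^*$ minimizes
\[
\bm v\mapsto \frac{\beta}{2\gamma}\|\bm v-\gamma\bm w-(1-\gamma)\bm u^*\|_2^2+\lambda\|\bm v\|_0,
\]
since this objective is $\beta/\gamma$ times the one defining $\mathrm{prox}_{\frac{\gamma\lambda}{\beta}\|\cdot\|_0}(\gamma\bm w+(1-\gamma)\bm u^*)$. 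It therefore suffices to prove, for all $\bm v$,
\[
\frac{1}{\gamma}\|\gamma(\bm u^*-\bm w)\|_2^2-\|\bm u^*-\bm w\|_2^2\le \frac{1}{\gamma}\|\bm v-\gamma\bm w-(1-\gamma)\bm u^*\|_2^2-\|\bm v-\bm w\|_2^2 .
\]
Indeed, multiplying this by $\beta/2$ and adding it to the minimality inequality above cancels the terms $\|\cdot-\bm w\|_2^2$ on both sides, leaving exactly
\[
\frac{\beta}{2\gamma}\|\bm u^*-\gamma\bm w-(1-\gamma)\bm u^*\|_2^2+\lambda\|\bm u^*\|_0\le \frac{\beta}{2\gamma}\|\bm v-\gamma\bm w-(1-\gamma)\bm u^*\|_2^2+\lambda\|\bm v\|_0 .
\]
The left-hand side uses $\bm u^*-\gamma\bm w-(1-\gamma)\bm u^*=\gamma(\bm u^*-\bm w)$.

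The key step is matching this required inequality to \cref{lemma_inequality1}, which I will apply with $\bm a:=\bm u^*-\bm w$ and $\bm b:=\bm v-\bm w$. Then
\[
\bm b-(1-\gamma)\bm a=\bm v-\gamma\bm w-(1-\gamma)\bm u^*,
\]
so \cref{TheClaim} becomes precisely the required inequality, and the proof is complete.

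The only real obstacle is spotting this change of variables. Everything else is bookkeeping: being careful with the $\beta/\gamma$ scaling of the prox, and noting that the prox is set-valued, which is why the statement uses membership "$\in$".
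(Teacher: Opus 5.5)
Your proposal is correct and follows the paper's proof essentially step for step. You use Fermat's rule on $\bm\theta\mapsto\mathcal{L}(\bm u^*,\bm\theta)$ for the $\bm\theta$-condition. For the $\bm u$-condition you apply \cref{lemma_inequality1} with $\bm a=\bm u^*-\mathbf B\bm f(\bm\theta^*)$ and $\bm b=\bm u-\mathbf B\bm f(\bm\theta^*)$, then add the result to the partial-minimality inequality. The only cosmetic difference is that the paper first divides by $\lambda$ and scales the lemma by $\beta/(2\lambda)$, whereas you keep $\lambda$ and scale by $\beta/2$.
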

\begin{proof}
Since $(\bm u^*,\theta^*)$ is a global minimizer of \cref{eq:model:vector}, we have
\begin{equation}\label{eq:global_min_def}
\mathcal L(\bm u^*,\theta^*) \le \mathcal L(\bm u,\theta)
\quad\text{for all }(\bm u,\theta)\in\mathbb{R}^{d_1}\times\mathbb{R}^{d_2}.
\end{equation}
The proof proceeds in two main steps.

\medskip\noindent\textbf{Step 1: Proximal inclusion for \(\bm u^*\).}
Fix $\theta=\theta^*$ in \cref{eq:global_min_def}. Writing out the objective terms that depend on \(\bm u\) yields
\[
\frac{\beta}{2}\|\bm u^*-\mathbf B\bm f(\theta^*)\|_2^2 + \lambda\|\bm u^*\|_0
\le
\frac{\beta}{2}\|\bm u-\mathbf B\bm f(\theta^*)\|_2^2 + \lambda\|\bm u\|_0,
\]
for all $\bm{u}\in\mathbb{R}^{d_1}$. Since $\lambda>0$,
\begin{equation}\label{eq:ineq_scaled}
\frac{\beta}{2\lambda}\|\bm u^*-\mathbf B\bm f(\theta^*)\|_2^2 + \|\bm u^*\|_0
\le
\frac{\beta}{2\lambda}\|\bm u-\mathbf B\bm f(\theta^*)\|_2^2 + \|\bm u\|_0.
\end{equation}

Apply Lemma \cref{lemma_inequality1} with
\[
\bm a := \bm u^* - \mathbf B\bm f(\theta^*), \qquad
\bm b := \bm u - \mathbf B\bm f(\theta^*),
\]
and with the same $\gamma\in(0,1]$ that appears in the theorem. Lemma \cref{lemma_inequality1} gives
\[
\frac{1}{\gamma}\|\gamma\bm a\|_2^2 - \|\bm a\|_2^2
\le
\frac{1}{\gamma}\|\bm b - (1-\gamma)\bm a\|_2^2 - \|\bm b\|_2^2.
\]
Substituting the definitions of \(\bm a,\bm b\) and multiplying the whole inequality by \(\tfrac{\beta}{2\lambda}\) yields
\begin{equation}\label{eq:lemma_applied}
\begin{aligned}
&\frac{\beta}{2\gamma\lambda}\|\gamma(\bm u^*-\mathbf B\bm f(\theta^*))\|_2^2
- \frac{\beta}{2\lambda}\|\bm u^*-\mathbf B\bm f(\theta^*)\|_2^2\\
&\le
\frac{\beta}{2\gamma\lambda}\|\bm u-\mathbf B\bm f(\theta^*)-(1-\gamma)(\bm u^*-\mathbf B\bm f(\theta^*))\|_2^2\\
&\quad - \frac{\beta}{2\lambda}\|\bm u-\mathbf B\bm f(\theta^*)\|_2^2.
\end{aligned}
\end{equation}
Adding inequality \cref{eq:ineq_scaled} to \cref{eq:lemma_applied}, with the negative \(\tfrac{\beta}{2\lambda}\|\cdot\|_2^2\) terms cancel, leads to 
\[
\begin{aligned}
&\frac{\beta}{2\gamma\lambda}\|\gamma(\bm u^*-\mathbf B\bm f(\theta^*))\|_2^2 + \|\bm u^*\|_0\\
&\le
\frac{\beta}{2\gamma\lambda}\|\bm u-\mathbf B\bm f(\theta^*)-(1-\gamma)(\bm u^*-\mathbf B\bm f(\theta^*))\|_2^2\\
&\quad + \|\bm u\|_0,
\end{aligned}
\]
for all $\bm u \in \mathbb{R}^{d_1}$. Rewriting the squared norms inside each argument gives
\[
\begin{aligned}
&\frac{\beta}{2\gamma\lambda}\|\bm u^* - \gamma\mathbf B\bm f(\theta^*) - (1-\gamma)\bm u^*\|_2^2 + \|\bm u^*\|_0\\
&\le
\frac{\beta}{2\gamma\lambda}\|\bm u - \gamma\mathbf B\bm f(\theta^*) - (1-\gamma)\bm u^*\|_2^2 + \|\bm u\|_0.
\end{aligned}
\]
This inequality holds for every \(\bm u\in \mathbb{R}^{d_1}\). Define
\[
\tau := \frac{\gamma\lambda}{\beta} > 0
\qquad\text{so that}\qquad
\frac{\beta}{2\gamma\lambda} = \frac{1}{2\tau}.
\]
Then the last inequality is exactly the statement that \(\bm u^*\) minimizes the function
\[
\bm v \mapsto \frac{1}{2\tau}\|\bm v - (\gamma\mathbf B\bm f(\theta^*) + (1-\gamma)\bm u^*)\|_2^2 + \|\bm v\|_0
\]
over \(\bm v\in\mathbb{R}^{d_1}\). By the definition of the proximal operator (applied to \(h(\cdot)=\|\cdot\|_0\)), this yields the proximal inclusion
\[
\bm u^* \in \operatorname{prox}_{\frac{\gamma\lambda}{\beta}\|\cdot\|_0}\!\big(\gamma\mathbf B\bm f(\theta^*) + (1-\gamma)\bm u^*\big).
\]

\medskip\noindent\textbf{Step 2: Stationarity for \(\theta^*\).}
Fix \(\bm u=\bm u^*\) in \cref{eq:global_min_def}. Then
\[
\mathcal L(\bm u^*,\theta^*) \le \mathcal L(\bm u^*,\theta)
\quad\text{for all }\theta\in\mathbb{R}^{d_2},
\]
so \(\theta^*\) is a global minimizer of the unconstrained differentiable function \(\theta\mapsto\mathcal L(\bm u^*,\theta)\). The differentiability follows from the assumed differentiability of the activation (hence of \(\bm f(\theta)\)) and the linearity of \(\mathbf B\) and \(\mathbf A\); thus \(\mathcal L(\bm u^*,\cdot)\) is differentiable on \(\mathbb R^{d_2}\). By Fermat's rule for unconstrained differentiable minimization, any global minimizer must satisfy the first-order condition
\[
\frac{\partial\mathcal L}{\partial\theta}(\bm u^*,\theta^*) = 0.
\]

Combining Steps 1 and 2 establishes \cref{eq:characterization} for every \(\gamma\in(0,1]\), completing the proof.
\end{proof}


\subsection{Proximal-Gradient Scheme}
Motivated by \cref{theorem:necessary_condition}, we employ a proximal–gradient scheme for solving \cref{eq:model:vector}. The scheme alternates between updating the auxiliary variable $\bm{u}$ via a proximal step and updating the network parameters $\bm\theta$ via gradient descent:
\begin{equation}\label{eq_proximal_gradient}
\begin{aligned}
&\bm{u}^{k+1} \in \mathrm{prox}_{\tfrac{\gamma\lambda}{\beta}\|\cdot\|_0}\!\Big(\gamma \mathbf{B}\bm{f}(\bm\theta^k) + (1 - \gamma)\bm{u}^k\Big),\\
&\bm\theta^{k+1} := \bm\theta^{k} - \eta\,\frac{\partial \mathcal{L}}{\partial \bm\theta}(\bm{u}^{k+1}, \bm\theta^k),
\end{aligned}
\end{equation}
where $\eta>0$ is the step size for the gradient update of $\bm\theta$.


The proximal–gradient scheme can be interpreted as a two-step iterative process: 
\begin{enumerate}
    \item The $\bm{u}$-update enforces sparsity via hard thresholding, promoting a concise representation of the auxiliary variable.
    \item The $\bm\theta$-update adjusts the network parameters to reduce the overall loss, taking into account the current sparse approximation $\bm{u}^{k+1}$.
\end{enumerate}

\subsection{Convergence Analysis}
We analyze the convergence property of the proximal--gradient method in \cref{eq_proximal_gradient}.
To ensure that all quantities involved in the analysis remain well-defined, we assume the existence of a convex and compact set
$\Omega \subset \mathbb{R}^{d_1} \times \mathbb{R}^{d_2}$
and some $\eta_0 > 0$ such that both sequences
${(\bm{u}^k , \theta^k)}_{k=0}^\infty$ and
${(\bm{u}^{k+1}, \theta^k)}_{k=0}^\infty$
generated by the algorithm remain in $\Omega$ for all step sizes $\eta \in (0,\eta_0)$.
This boundedness assumption ensures, in particular, that the loss landscape is explored only within a compact region.

Since the neural network activation function is twice continuously differentiable, the mapping $\theta \mapsto \mathcal{L}(\bm{u},\theta)$ is twice continuously differentiable for every fixed $\bm{u}$.
Let $\bm{H}_{\mathcal{L}}(\bm{u},\theta)$ denote the Hessian of $\mathcal{L}$ with respect to $\theta$, and define
\begin{equation}\label{eq_alpha_hessian}
\alpha := \sup\bigl\{\Vert \bm{H}_\mathcal{L}(\bm{u}, \theta)\Vert_2 : (\bm{u}, \theta) \in \Omega \bigr\},
\end{equation}
where $\Vert\cdot\Vert_2$ is the spectral norm.
The finiteness of $\alpha$ follows from the compactness of $\Omega$.

The next lemma establishes a key descent property for the proximal--gradient update and serves as the main building block of the convergence proof.





\begin{lem}
\label{lemma_inequality2}
Let $\{(\bm{u}^k, \theta^k)\}_{k=0}^\infty$ be generated by \cref{eq_proximal_gradient} with initial point $(\bm{u}^0, \theta^0)$. Then for all $k \geq 1$,
\begin{equation}
\mathcal{L}(\bm{u}^{k+1}, \theta^k) + \frac{\beta}{2}\frac{1-\gamma}{\gamma}\|\bm{u}^{k+1} - \bm{u}^k\|_2^2 
\leq \mathcal{L}(\bm{u}^k, \theta^k).
\end{equation}
\end{lem}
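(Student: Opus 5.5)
The plan is to reuse the algebra of Lemma~\ref{lemma_inequality1}, now applied at the iterate $\bm u^k$ instead of at a global minimizer. Fix $k$ and abbreviate $\bm c:=\mathbf B\bm f(\theta^k)$. Only the $\bm u$-dependent part of $\mathcal L$ changes between the two sides, so the claim is equivalent to
\[
\frac{\beta}{2}\|\bm u^{k+1}-\bm c\|_2^2+\lambda\|\bm u^{k+1}\|_0+\frac{\beta}{2}\frac{1-\gamma}{\gamma}\|\bm u^{k+1}-\bm u^k\|_2^2
\le \frac{\beta}{2}\|\bm u^{k}-\bm c\|_2^2+\lambda\|\bm u^{k}\|_0 .
\]

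The first step is to use the defining property of the $\bm u$-update in \cref{eq_proximal_gradient}. Since $\bm u^{k+1}$ is a minimizer of $\bm v\mapsto \frac{\beta}{2\gamma}\|\bm v-\gamma\bm c-(1-\gamma)\bm u^k\|_2^2+\lambda\|\bm v\|_0$, we may compare it with the competitor $\bm v=\bm u^k$. This gives
\[
\frac{\beta}{2\gamma}\|\bm u^{k+1}-\gamma\bm c-(1-\gamma)\bm u^k\|_2^2+\lambda\|\bm u^{k+1}\|_0\le \frac{\beta}{2\gamma}\|\gamma(\bm u^k-\bm c)\|_2^2+\lambda\|\bm u^k\|_0 .
\]

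The second step is the identity underlying Lemma~\ref{lemma_inequality1}. Put $\bm a:=\bm u^k-\bm c$ and $\bm b:=\bm u^{k+1}-\bm c$, so that $\bm u^{k+1}-\gamma\bm c-(1-\gamma)\bm u^k=\bm b-(1-\gamma)\bm a$. The proof of Lemma~\ref{lemma_inequality1} actually establishes the exact equality
\[
\frac{1}{\gamma}\|\bm b-(1-\gamma)\bm a\|_2^2-\|\bm b\|_2^2-\Big(\frac1\gamma\|\gamma\bm a\|_2^2-\|\bm a\|_2^2\Big)=\frac{1-\gamma}{\gamma}\|\bm b-\bm a\|_2^2 ,
\]
obtained by dividing the displayed identity in that proof by $\gamma$. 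Here $\bm b-\bm a=\bm u^{k+1}-\bm u^k$.

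The third step is to multiply this identity by $\beta/2$ and substitute it into the prox inequality. The $\frac{1}{\gamma}$-weighted squares become $\|\bm b\|_2^2-\|\bm a\|_2^2$ plus the nonnegative remainder $\frac{1-\gamma}{\gamma}\|\bm u^{k+1}-\bm u^k\|_2^2$, which is exactly the rearranged claim. Adding back the $\bm u$-independent term $\frac12\|\bm g-\mathbf A\bm f(\theta^k)\|_2^2$ to both sides then gives the stated inequality in terms of $\mathcal L$.

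The main obstacle is bookkeeping rather than ideas. We need the sharp equality version of Lemma~\ref{lemma_inequality1}, not just its inequality, so that the term $\frac{1-\gamma}{\gamma}\|\bm u^{k+1}-\bm u^k\|_2^2$ survives. We also need to check that the prox set is nonempty, so that $\bm u^{k+1}$ exists; this holds because the objective is coercive and lower semicontinuous, since $\|\cdot\|_0$ is lsc. The restriction $k\ge1$ plays no role: the argument works for every $k\ge0$.
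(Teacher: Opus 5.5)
Your proof is correct and is essentially the paper's argument. The paper uses the same identity, written as $\|\bm u-\gamma\mathbf B\bm f(\theta^k)-(1-\gamma)\bm u^k\|_2^2=\gamma\|\bm u-\mathbf B\bm f(\theta^k)\|_2^2+(1-\gamma)\|\bm u-\bm u^k\|_2^2+r$ with $r=-\gamma(1-\gamma)\|\bm u^k-\mathbf B\bm f(\theta^k)\|_2^2$, and the same competitor $\bm u=\bm u^k$. The paper only applies the identity before choosing the competitor: it first recasts the prox step as $\bm u^{k+1}\in\operatorname{argmin}_{\bm u}\{\mathcal L(\bm u,\theta^k)+\frac{\beta}{2}\frac{1-\gamma}{\gamma}\|\bm u-\bm u^k\|_2^2\}$, a characterization it later reuses for the subdifferential estimate in the convergence theorem.
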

\begin{proof}
By the definition of proximal operators, the first update in \cref{eq_proximal_gradient} is equivalent to
\begin{equation}\label{eq_proof_lemma_1}
\begin{aligned}
\bm{u}^{k+1} \in &{\rm argmin}_{\bm{u} \in \mathbb{R}^{d_1}} 
\{ \tfrac{1}{2}\tfrac{\beta}{\gamma\lambda}
\|\bm{u} - \gamma\bm{B}\bm{f}(\theta^k) - (1-\gamma)\bm{u}^k\|_2^2\\
&\qquad \qquad \qquad+ \|\bm{u}\|_0\}.
\end{aligned}
\end{equation}
Expanding the quadratic term yields
\begin{equation}
\begin{aligned}
&\|\bm{u} - \gamma\bm{B}\bm{f}(\theta^k) - (1-\gamma)\bm{u}^k\|_2^2\\
&= \gamma\|\bm{u} - \bm{B}\bm{f}(\theta^k)\|_2^2 
+ (1-\gamma)\|\bm{u} - \bm{u}^k\|_2^2 + r,
\end{aligned}
\end{equation}
where $r$ is independent of $\bm{u}$ and thus irrelevant to minimization. Substituting back, we obtain
\begin{equation}
\begin{aligned}
&\tfrac{1}{2}\tfrac{\beta}{\gamma\lambda}\|\bm{u} - \gamma\bm{B}\bm{f}(\theta^k) - (1-\gamma)\bm{u}^k\|_2^2 + \|\bm{u}\|_0 \\
&= \tfrac{1}{\lambda}\left( \tfrac{\beta}{2}\|\bm{u} - \bm{B}\bm{f}(\theta^k)\|_2^2 
+ \lambda\|\bm{u}\|_0 
+ \tfrac{\beta}{2}\tfrac{1-\gamma}{\gamma}\|\bm{u} - \bm{u}^k\|_2^2 \right)\\
&\quad + c,
\end{aligned}
\end{equation}
with $c$ again independent of $\bm{u}$. By the definition of $\mathcal{L}$, this is equivalent to
\begin{equation}\label{eq_proof_lemma_2}
\bm{u}^{k+1} \in {\rm argmin}_{\bm{u} \in \mathbb{R}^{d_1}}
\left\{ \mathcal{L}(\bm{u}, \theta^k) 
+ \tfrac{\beta}{2}\tfrac{1-\gamma}{\gamma}\|\bm{u} - \bm{u}^k\|_2^2 \right\}.
\end{equation}
Hence $\bm{u}^{k+1}$ satisfies
\begin{equation}
\begin{aligned}
&\mathcal{L}(\bm{u}^{k+1}, \theta^k) 
+ \tfrac{\beta}{2}\tfrac{1-\gamma}{\gamma}\|\bm{u}^{k+1} - \bm{u}^k\|_2^2\\
&\le \mathcal{L}(\bm{u}, \theta^k) 
+ \tfrac{\beta}{2}\tfrac{1-\gamma}{\gamma}\|\bm{u} - \bm{u}^k\|_2^2
\end{aligned}
\end{equation}
for all $\bm{u} \in \mathbb{R}^{d_1}$. Taking $\bm{u} = \bm{u}^k$ gives the claim.
\end{proof}

The following convergence theorem holds under these assumptions.
\begin{theo}
\label{theorem:convergence:algorithm supp}
Let $\{(\bm{u}^k, \theta^k)\}_{k=0}^\infty$ be the sequence generated by the proximal–gradient method \cref{eq_proximal_gradient} with initial guess $(\bm{u}^0, \theta^0)$. Suppose the activation function in the neural network is twice continuously differentiable, and there exists a convex, compact set $\Omega \subset \mathbb{R}^{d_1} \times \mathbb{R}^{d_2}$ such that 
\[
\{(\bm{u}^k, \theta^k)\}_{k=0}^\infty \subset \Omega, 
\quad 
\{(\bm{u}^{k+1}, \theta^k)\}_{k=0}^\infty \subset \Omega.
\]
Assume that $\alpha$ defined in \cref{eq_alpha_hessian} is finite. If the learning rate $\eta \in (0, 2/\alpha)$ and the parameter $\gamma \in (0,1)$, then the following hold:
\begin{enumerate}
    \item $\displaystyle \lim_{k\to\infty}\mathcal{L}(\bm{u}^k, \theta^k) = L^*$ for some $L^* \geq 0$;
    \item $\displaystyle \lim_{k\to\infty}\|\bm{u}^{k+1} - \bm{u}^k\|_2 = 0,$ and $\displaystyle\lim_{k\to\infty}\|\theta^{k+1} - \theta^k\|_2 = 0$;
    \item $\displaystyle \lim_{k\to\infty}\mathrm{dist}\!\left(\bm{0}, \tfrac{\partial\mathcal{L}}{\partial\bm{u}}(\bm{u}^{k+1}, \theta^k)\right) \,\mathord{=}\, 0,$ 
    $\text{and }\displaystyle \lim_{k\to\infty}\tfrac{\partial\mathcal{L}}{\partial\theta}(\bm{u}^{k+1}, \theta^k) = 0$.
\end{enumerate}
Here $\tfrac{\partial\mathcal{L}}{\partial\bm{u}}(\bm{u}^{k+1}, \theta^k)$ denotes the subdifferential of $\mathcal{L}(\cdot, \theta^k)$ at $\bm{u}^{k+1}$, and $\tfrac{\partial\mathcal{L}}{\partial\theta}(\bm{u}^{k+1}, \cdot)$ is the gradient with respect to $\theta$ at $\theta^k$. The distance between a point $\bm{p}\in\mathbb{R}^{d_1}$ and a set $S\subset \mathbb{R}^{d_1}$ is defined as 
\[
\mathrm{dist}(\bm{p}, S) := \inf\{\|\bm{q}-\bm{p}\|_2 : \bm{q}\in S\}.
\]
\end{theo}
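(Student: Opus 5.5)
Our plan is a standard sufficient-decrease argument. The $\bm u$-step descent is already given by \cref{lemma_inequality2}; we add a matching descent estimate for the $\theta$-step and then telescope.

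\textbf{Step 1 (descent in $\theta$).} Fix $k$ and set $\phi(\theta):=\mathcal{L}(\bm u^{k+1},\theta)$. This function is $C^2$ in $\theta$; the $\ell_0$ term does not depend on $\theta$. The points $(\bm u^{k+1},\theta^k)$ and $(\bm u^{k+1},\theta^{k+1})$ both lie in $\Omega$: the first by the assumption on $\{(\bm u^{k+1},\theta^k)\}$, the second because it belongs to the sequence $\{(\bm u^{k},\theta^{k})\}$ shifted by one index. Since $\Omega$ is convex, the segment joining them lies in $\Omega$, so the Hessian along it has spectral norm at most $\alpha$. Taylor's theorem with integral remainder then gives
\[
\phi(\theta^{k+1})\le \phi(\theta^k)+\langle\nabla\phi(\theta^k),\theta^{k+1}-\theta^k\rangle+\tfrac{\alpha}{2}\|\theta^{k+1}-\theta^k\|_2^2 .
\]
Substituting $\theta^{k+1}-\theta^k=-\eta\nabla\phi(\theta^k)$ yields
\[
\mathcal{L}(\bm u^{k+1},\theta^{k+1})+\Bigl(\tfrac1\eta-\tfrac\alpha2\Bigr)\|\theta^{k+1}-\theta^k\|_2^2\le \mathcal{L}(\bm u^{k+1},\theta^k),
\]
and the coefficient is positive because $\eta<2/\alpha$.

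\textbf{Step 2 (monotonicity, telescoping, items 1 and 2).} We chain Step 1 with \cref{lemma_inequality2}. The constant $c_1=\frac{\beta}{2}\frac{1-\gamma}{\gamma}$ is positive since $\gamma\in(0,1)$, and we set $c_2=\frac1\eta-\frac\alpha2>0$. This gives
\[
\mathcal{L}(\bm u^{k+1},\theta^{k+1})+c_1\|\bm u^{k+1}-\bm u^k\|_2^2+c_2\|\theta^{k+1}-\theta^k\|_2^2\le\mathcal{L}(\bm u^k,\theta^k).
\]
Hence $\mathcal{L}(\bm u^k,\theta^k)$ is nonincreasing. It is also bounded below by $0$, since every term of $\mathcal{L}$ is nonnegative. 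It therefore converges to some $L^*\ge0$, which is item 1. Summing the inequality over $k$ shows that $\sum_k(c_1\|\bm u^{k+1}-\bm u^k\|^2+c_2\|\theta^{k+1}-\theta^k\|^2)\le\mathcal{L}(\bm u^0,\theta^0)<\infty$. The summands therefore tend to zero, which is item 2.

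\textbf{Step 3 (approximate stationarity, item 3).} The $\theta$-part is immediate: $\frac{\partial\mathcal{L}}{\partial\theta}(\bm u^{k+1},\theta^k)=-(\theta^{k+1}-\theta^k)/\eta\to0$ by item 2.

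For the $\bm u$-part, we use \eqref{eq_proof_lemma_2} from the proof of \cref{lemma_inequality2}. It says that $\bm u^{k+1}$ minimizes $\mathcal{L}(\cdot,\theta^k)+c_1\|\cdot-\bm u^k\|^2$. By Fermat's rule for the (limiting) subdifferential, applied to a sum of a lower semicontinuous function and a smooth quadratic, we get
\[
0\in\tfrac{\partial\mathcal{L}}{\partial\bm u}(\bm u^{k+1},\theta^k)+2c_1(\bm u^{k+1}-\bm u^k).
\]
Thus $-2c_1(\bm u^{k+1}-\bm u^k)$ is an element of the subdifferential. Its norm tends to zero by item 2, so $\mathrm{dist}(\bm0,\partial_{\bm u}\mathcal{L}(\bm u^{k+1},\theta^k))\to0$.

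\textbf{Main obstacle.} The delicate step is justifying the subdifferential calculus for the nonconvex, discontinuous $\ell_0$ term. This covers both the meaning of $\partial_{\bm u}\mathcal{L}$ and the sum rule $\partial(h+q)=\partial h+\nabla q$ for smooth $q$. I would invoke the standard limiting (Mordukhovich) subdifferential and its exact sum rule with $C^1$ functions. Alternatively, the problem separates coordinatewise: $\partial|\cdot|_0(t)$ is $\{0\}$ for $t\neq0$ and $\mathbb{R}$ for $t=0$. One can then verify the inclusion by hand from the hard-thresholding form of the update.

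A secondary point is the convexity requirement used in Step 1: we need $\Omega$ to contain the whole segment between consecutive $\theta$-iterates. This is exactly why assumption (A2) asks for both sequences to lie in a convex $\Omega$.
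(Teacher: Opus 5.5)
Your proposal is correct and follows the paper's proof essentially step for step. Both combine a second-order Taylor bound in $\theta$, controlled by the Hessian bound $\alpha$ on a segment inside $\Omega$, with \cref{lemma_inequality2} to get the sufficient-decrease inequality. Items 1--2 then follow from monotonicity and telescoping, and item 3 from the optimality condition in \eqref{eq_proof_lemma_2} plus the gradient-step identity. Two of your choices are slightly more careful than the paper's wording: you place the segment in $\Omega$ via the endpoints $(\bm u^{k+1},\theta^k)$ and $(\bm u^{k+1},\theta^{k+1})$, and you state explicitly that the limiting subdifferential and its sum rule are what is being used.
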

\begin{proof}
Since the activation function is twice continuously differentiable, $\mathcal{L}(\bm{u}, \theta)$ is twice continuously differentiable with respect to $\theta$. By the Taylor expansion, there exists some $\bar{\theta}^k$ between $\theta^k$ and $\theta^{k+1}$ such that
\begin{equation}\label{eq_proof_theorem_1}
\begin{aligned}
&\mathcal{L}(\bm{u}^{k+1}, \theta^{k+1})\\
&= \mathcal{L}(\bm{u}^{k+1}, \theta^k) + \frac{\partial \mathcal{L}}{\partial \theta}(\bm{u}^{k+1}, \theta^k)^T (\theta^{k+1} - \theta^k)\\
&\quad + \frac{1}{2} (\theta^{k+1} - \theta^k)^T \bm{H}_\mathcal{L}(\bm{u}^{k+1}, \bar{\theta}^k) (\theta^{k+1} - \theta^k).
\end{aligned}
\end{equation}

From the second step of \cref{eq_proximal_gradient}, we have
\begin{equation}
\frac{\partial \mathcal{L}}{\partial \theta}(\bm{u}^{k+1}, \theta^k) = -\frac{1}{\eta} (\theta^{k+1} - \theta^k).
\end{equation}
Substituting this into \cref{eq_proof_theorem_1} gives
\begin{equation}\label{eq_proof_theorem_2}
\begin{aligned}
&\mathcal{L}(\bm{u}^{k+1}, \theta^{k+1})\\
&= \mathcal{L}(\bm{u}^{k+1}, \theta^k) - \frac{1}{\eta}\|\theta^{k+1} - \theta^k\|_2^2\\
&\quad + \frac{1}{2} (\theta^{k+1} - \theta^k)^T \bm{H}_\mathcal{L}(\bm{u}^{k+1}, \bar{\theta}^k) (\theta^{k+1} - \theta^k).
\end{aligned}
\end{equation}

By convexity of $\Omega$ and the assumption that $(\bm{u}^k, \theta^k), \ (\bm{u}^{k+1}, \theta^k) \in \Omega$, we have $(\bm{u}^{k+1}, \bar{\theta}^k) \in \Omega$. Using the spectral norm bound \cref{eq_alpha_hessian} yields
\begin{equation}
\begin{aligned}
&\left| \frac{1}{2} (\theta^{k+1} - \theta^k)^T \bm{H}_\mathcal{L}(\bm{u}^{k+1}, \bar{\theta}^k) (\theta^{k+1} - \theta^k) \right|\\
&\le \frac{\alpha}{2} \|\theta^{k+1} - \theta^k\|_2^2,
\end{aligned}
\end{equation}
which substituted into \cref{eq_proof_theorem_2} gives
\begin{equation}\label{eq_proof_theorem_3}
\mathcal{L}(\bm{u}^{k+1}, \theta^{k+1}) \le \mathcal{L}(\bm{u}^{k+1}, \theta^k) + \left( \frac{\alpha}{2} - \frac{1}{\eta} \right) \|\theta^{k+1} - \theta^k\|_2^2.
\end{equation}

From Lemma \cref{lemma_inequality2}, we also have
\begin{equation}\label{eq_proof_theorem_4}
\mathcal{L}(\bm{u}^{k+1}, \theta^k) + \frac{\beta}{2}\frac{1-\gamma}{\gamma} \|\bm{u}^{k+1} - \bm{u}^k\|_2^2 \le \mathcal{L}(\bm{u}^k, \theta^k).
\end{equation}
Combining \cref{eq_proof_theorem_3} and \cref{eq_proof_theorem_4} leads to
\begin{equation}\label{eq_proof_theorem_5}
\begin{aligned}
\mathcal{L}(\bm{u}^{k+1}, \theta^{k+1}) &+ \frac{\beta}{2}\frac{1-\gamma}{\gamma} \|\bm{u}^{k+1} - \bm{u}^k\|_2^2\\
&+ \left( \frac{1}{\eta} - \frac{\alpha}{2} \right) \|\theta^{k+1} - \theta^k\|_2^2\\
&\le \mathcal{L}(\bm{u}^k, \theta^k).
\end{aligned}
\end{equation}
Since $\frac{\beta}{2}\frac{1-\gamma}{\gamma} > 0$ and $\frac{1}{\eta} - \frac{\alpha}{2} > 0$, we conclude that $\{\mathcal{L}(\bm{u}^k, \theta^k)\}$ is monotonically nonincreasing and bounded below by zero. Hence, there exists $L^* \ge 0$ such that
\begin{equation}
\lim_{k \to \infty} \mathcal{L}(\bm{u}^k, \theta^k) = L^*,
\end{equation}
proving Item 1. 

Equation \cref{eq_proof_theorem_5} also implies
\begin{equation}
\lim_{k \to \infty} \|\bm{u}^{k+1} - \bm{u}^k\|_2 = 0, \quad 
\lim_{k \to \infty} \|\theta^{k+1} - \theta^k\|_2 = 0,
\end{equation}
which proves Item 2.

For Item 3, from \cref{eq_proof_lemma_2} we have
\begin{equation}
-\frac{\beta(1-\gamma)}{\gamma} (\bm{u}^{k+1} - \bm{u}^k) \in \frac{\partial \mathcal{L}}{\partial \bm{u}}(\bm{u}^{k+1}, \theta^k),
\end{equation}
so
\begin{equation}
\mathrm{dist}\left(\bm{0}, \frac{\partial \mathcal{L}}{\partial \bm{u}}(\bm{u}^{k+1}, \theta^k)\right) \le \frac{\beta(1-\gamma)}{\gamma} \|\bm{u}^{k+1} - \bm{u}^k\|_2 \to 0.
\end{equation}
Similarly, the gradient update gives
\begin{equation}
\frac{\partial \mathcal{L}}{\partial \theta}(\bm{u}^{k+1}, \theta^k) = -\frac{1}{\eta} (\theta^{k+1} - \theta^k) \to 0,
\end{equation}
proving Item 3 and completing the proof.
\end{proof}

\subsection{Extension to the Multi-Grade Setting}

Let $\bm\theta_l$ denote the vectorized parameters of $\theta^{(l)}$, and let $\bm z_l(\bm\theta_l)$ denote the vectorized network output $f^{(l)}_{\theta^{(l)};\Theta_{\le l-1}^*}(\mathbf X_{\text{high}})$ at grade $l$.
Note that the lower-grade parameters $\Theta^{*}_{\le l-1}$ are fixed during grade-$l$ optimization and enter $\mathcal{L}_l$ only as constants. Hence they do not affect the Lipschitz or smoothness properties of the $\theta_l$-gradient.
Let $d_{2,l}$ be the dimension of $\bm\theta_l$. The vectorized optimization model in grade $l$ is
\begin{equation}\label{eq:mgdl:vector}
\begin{aligned}
\min_{(\bm{u}, \bm\theta_l) \in \mathbb{R}^{d_1} \times \mathbb{R}^{d_{2,l}}} &\mathcal{L}_l(\bm{u}, \bm\theta_l) := \frac{1}{2}\|\bm{g} - \mathbf{A}\bm{z}_l(\bm\theta_l)\|_2^2\\
&
+ \frac{\beta}{2}\|\bm{u} - \mathbf{B}\bm{z}_l(\bm\theta_l)\|_2^2 + \lambda \|\bm{u}\|_0.
\end{aligned}
\end{equation}
\cref{theorem:necessary_condition} implies that if a pair $(\bm{u}_l^*, \bm\theta_l^*) \in \mathbb{R}^{d_1} \times \mathbb{R}^{d_{2,l}}$ is a global minimizer of \cref{eq:mgdl:vector}, then there holds that
\[
\begin{aligned}
&\bm{u}_l^* \in \mathrm{prox}_{\frac{\gamma \lambda}{\beta}\|\cdot\|_0}(\gamma \mathbf{B} \bm{z}_l(\bm\theta_l^*) + (1-\gamma) \bm{u}_l^*),\\
&\frac{\partial \mathcal{L}_l}{\partial \bm\theta_l}(\bm{u}_l^*, \bm\theta_l^*) = 0,
\end{aligned}
\]
for $\gamma \in (0,1]$. The proximal-gradient updates in grade $l$ are
\begin{equation}\label{eq:proximal_gradient:mgdl}
\begin{aligned}
\bm{u}_l^{k+1} &\in \mathrm{prox}_{\frac{\gamma_l \lambda}{\beta}\|\cdot\|_0}(\gamma_l \mathbf{B} \bm{z}_l(\bm\theta_l^k) + (1-\gamma_l) \bm{u}_l^k),\\
\bm\theta_l^{k+1} &:= \bm\theta_l^k - \eta_l \frac{\partial \mathcal{L}_l}{\partial \bm\theta_l}(\bm{u}_l^{k+1}, \bm\theta_l^k),
\end{aligned}
\end{equation}
with initial point $(\bm{u}_l^0, \bm\theta_l^0)$.

Assume the existence of a convex and compact set $\Omega_l \subset \mathbb{R}^{d_1} \times \mathbb{R}^{d_{2,l}}$ and $\eta_0 > 0$ such that 
\[
\{(\bm{u}_l^k, \bm\theta_l^k)\}_{k=0}^\infty \subset \Omega_l, \quad 
\{(\bm{u}_l^{k+1}, \bm\theta_l^k)\}_{k=0}^\infty \subset \Omega_l,
\]
for all $\eta \in (0, \eta_0)$ and define
\[
\alpha_l := \sup \{\|\bm{H}_{\mathcal{L}_l}(\bm{u}, \bm\theta_l)\|_2 : (\bm{u}, \bm\theta_l) \in \Omega_l\}.
\]
\noindent
The finiteness of $\alpha_l$ follows from the twice continuous differentiability of the activation functions and the compactness of $\Omega_l$, which together ensure that $H_{\mathcal{L}_l}$ is continuous and therefore bounded on $\Omega_l$.
Since the grade-$l$ update in \cref{eq:proximal_gradient:mgdl} has exactly the same proximal--gradient structure as \cref{eq_proximal_gradient}, the assumptions of \cref{theorem:convergence:algorithm supp} hold for each grade with the corresponding $\Omega_l$ and $\alpha_l$. Consequently, the convergence theorem for the Multi-Grade setting follows directly.


\begin{theo}\label{thm: multi-grade vector}
Let $\{(\bm{u}_l^k, \bm\theta_l^k)\}_{k=0}^\infty$ be generated by \cref{eq:proximal_gradient:mgdl} with initial guess $(\bm{u}_l^0, \bm\theta_l^0)$. Assume the activation function is twice continuously differentiable, $\Omega_l$ exists as above, and $\alpha_l < \infty$. If $\eta_l \in (0, 2/\alpha_l)$ and $\gamma_l \in (0,1)$, then
\begin{enumerate}
\item $\displaystyle\lim_{k\to\infty} \mathcal{L}_l(\bm{u}_l^k, \bm\theta_l^k) = L_l^* \ge 0$;
\item $\displaystyle\lim_{k\to\infty} \|\bm{u}_l^{k+1}-\bm{u}_l^k\|_2 = 0,$ and $\displaystyle\lim_{k\to\infty} \|\bm\theta_l^{k+1}-\bm\theta_l^k\|_2 = 0$;
\item $\displaystyle\lim_{k\to\infty} \mathrm{dist}\left(\bm{0}, \frac{\partial \mathcal{L}_l}{\partial \bm{u}_l}(\bm{u}_l^{k+1}, \bm\theta_l^k)\right) \, \mathord{=} \, 0,$ $\text{ and } \displaystyle\lim_{k\to\infty} \frac{\partial \mathcal{L}_l}{\partial \bm{\theta}_l}(\bm{u}_l^{k+1}, \bm\theta_l^k) = 0$.
\end{enumerate}
Here $\tfrac{\partial\mathcal{L}}{\partial\bm{u}_l}(\bm{u}_l^{k+1}, \theta_l^k)$ denotes the subdifferential of $\mathcal{L}(\cdot, \theta_l^k)$ at $\bm{u}_l^{k+1}$, and $\tfrac{\partial\mathcal{L}}{\partial\theta_l}(\bm{u}_l^{k+1}, \cdot)$ is the gradient with respect to $\theta_l$ at $\theta_l^k$. The distance between a point $\bm{p}\in\mathbb{R}^{d_1}$ and a set $S\subset \mathbb{R}^{d_1}$ is defined as 
\[
\mathrm{dist}(\bm{p}, S) := \inf\{\|\bm{q}-\bm{p}\|_2 : \bm{q}\in S\}.
\]
\end{theo}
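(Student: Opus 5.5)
The plan is to reduce \cref{thm: multi-grade vector} to \cref{theorem:convergence:algorithm supp} by treating grade $l$ as a single-grade problem in its own variables.

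First, the frozen parameters $\Theta^*_{\le l-1}$ and features $\mathbf{a}^{(l-1)*}$ are constants during grade $l$. So the map $\bm\theta_l\mapsto\bm z_l(\bm\theta_l)$ is a shallow network (one hidden layer $\phi^{(l)}$ plus output $\psi_{\theta'_l}$) applied to fixed inputs, shifted by the fixed vector $\hat{\mathbf Y}^{(l-1)}$. Since the activation is $C^2$, $\bm z_l$ is $C^2$ in $\bm\theta_l$. Hence $\bm\theta_l\mapsto\mathcal L_l(\bm u,\bm\theta_l)$ is $C^2$ for each fixed $\bm u$, with Hessian $\bm H_{\mathcal L_l}$. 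Compactness of $\Omega_l$ gives $\alpha_l<\infty$, though that is also assumed.

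Second, I re-run the two ingredients of the single-grade proof with $(\mathbf B\bm f,\bm\theta,\gamma,\eta,\alpha,\Omega)$ replaced by $(\mathbf B\bm z_l,\bm\theta_l,\gamma_l,\eta_l,\alpha_l,\Omega_l)$.

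\begin{itemize}
\item \textbf{Descent in $\bm u$.} The proof of \cref{lemma_inequality2} is purely algebraic in $\mathbf B\bm z_l(\bm\theta_l^k)$; nothing depends on the structure of $\bm z_l$. It yields
\[
\mathcal L_l(\bm u_l^{k+1},\bm\theta_l^k)+\tfrac{\beta}{2}\tfrac{1-\gamma_l}{\gamma_l}\|\bm u_l^{k+1}-\bm u_l^k\|_2^2\le\mathcal L_l(\bm u_l^k,\bm\theta_l^k).
\]
It also gives the variational characterization of $\bm u_l^{k+1}$ as a minimizer of $\mathcal L_l(\cdot,\bm\theta_l^k)+\tfrac{\beta}{2}\tfrac{1-\gamma_l}{\gamma_l}\|\cdot-\bm u_l^k\|_2^2$.
\item \textbf{Descent in $\bm\theta_l$.} Apply the second-order Taylor expansion along the segment from $\bm\theta_l^k$ to $\bm\theta_l^{k+1}$ at fixed $\bm u_l^{k+1}$. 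Convexity of $\Omega_l$ keeps the intermediate point inside $\Omega_l$, so the Hessian is bounded by $\alpha_l$. This gives the decrease $(1/\eta_l-\alpha_l/2)\|\bm\theta_l^{k+1}-\bm\theta_l^k\|_2^2$.
\end{itemize}

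Third, summing the two inequalities gives the sufficient-decrease estimate analogous to \cref{eq_proof_theorem_5}. Both coefficients are positive because $\gamma_l\in(0,1)$ and $\eta_l<2/\alpha_l$, and $\mathcal L_l\ge0$. The sequence of losses is therefore monotone and bounded below, which gives item 1. Telescoping the estimate gives summability of the squared increments, hence item 2.

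For item 3:
\begin{itemize}
\item \textbf{$\bm u$-part.} Fermat's rule for the limiting subdifferential applied to the variational characterization gives $-\tfrac{\beta(1-\gamma_l)}{\gamma_l}(\bm u_l^{k+1}-\bm u_l^k)\in\partial_{\bm u}\mathcal L_l(\bm u_l^{k+1},\bm\theta_l^k)$. The quadratic proximal term is smooth, so the sum rule applies.
\item \textbf{$\bm\theta$-part.} The gradient equals $-(\bm\theta_l^{k+1}-\bm\theta_l^k)/\eta_l$ by the update rule.
\end{itemize}
Both quantities tend to zero by item 2.

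The main obstacle is only bookkeeping: checking that the Taylor step legitimately stays in $\Omega_l$. This needs $(\bm u_l^{k+1},\bm\theta_l^k)$ and $(\bm u_l^{k+1},\bm\theta_l^{k+1})$ in $\Omega_l$. The latter is $(\bm u_l^{k'},\bm\theta_l^{k'})$ with $k'=k+1$, so it is covered by the hypothesis. Convexity then covers the segment between them. No other new difficulty arises.
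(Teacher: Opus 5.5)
Your proposal is correct and follows the paper's route. The paper proves this theorem by observing that the grade-$l$ iteration \cref{eq:proximal_gradient:mgdl} has the same proximal--gradient structure as \cref{eq_proximal_gradient}, with the frozen lower-grade parameters entering only as constants, and then invoking \cref{theorem:convergence:algorithm supp} with $(\Omega_l,\alpha_l,\eta_l,\gamma_l)$; your explicit re-run of the $\bm u$-descent lemma, the Taylor step in $\bm\theta_l$, and the two stationarity identities is that reduction written out, and your check that the segment between $(\bm u_l^{k+1},\bm\theta_l^k)$ and $(\bm u_l^{k+1},\bm\theta_l^{k+1})$ lies in $\Omega_l$ is slightly more explicit than the paper's single-grade argument.
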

\cref{theorem:convergence:algorithm} is merely a matrix-form rewriting of 
\cref{thm: multi-grade vector}, obtained by stacking the variables 
$\{(\bm u_l, \bm\theta_l)\}_l$ into $(\mathbf U, \theta)$. Hence, the result carries over directly.


\subsection{Spectral Analysis}
This subsection derives a spectral stability condition for \cref{alg}. By linearizing the $\theta$-gradient around the previous iterate, the update is rewritten as an affine recursion whose linear part is governed by the Hessian. This leads to the spectral quantity $\tau$, which bounds the norm of the linearized update operator. When $\tau<1$, the recursion becomes contractive, providing a sufficient condition that guarantees stable and convergent parameter dynamics.



Recall the updates from \cref{eq_proximal_gradient}:
\begin{equation*}
\begin{aligned}
&\bm{u}^{k+1} \in \mathrm{prox}_{\tfrac{\gamma\lambda}{\beta}\|\cdot\|_0}\big(\gamma\bm{B}\bm{f}(\theta^k) + (1 - \gamma)\bm{u}^k\big),\\
&\theta^{k+1} := \theta^{k} - \eta\frac{\partial\mathcal{L}}{\partial\theta}(\bm{u}^{k + 1}, \theta^k).
\end{aligned}
\end{equation*}

The second update is a gradient descent step in $\theta$, but with the gradient evaluated at the freshly updated $\bm{u}^{k+1}$. To study convergence, we view this update as a fixed-point (Picard) iteration:
\begin{equation*}
\theta^{k+1} = \big(\mathcal{I} - \eta\,\tfrac{\partial\mathcal{L}}{\partial\theta}(\bm{u}^{k + 1}, \cdot)\big)(\theta^k),
\end{equation*}
where $\mathcal{I}$ denotes the identity operator. Classical Picard theory guarantees convergence when the update operator is nonexpansive, whereas in deep networks the map 
\[
\mathcal{I} - \eta\,\tfrac{\partial\mathcal{L}}{\partial\theta}(\bm{u}^{k + 1}, \cdot)
\]
can be expansive. To analyze the resulting dynamics, we therefore apply a Taylor expansion of the gradient around the previous iterate.

Expanding around $\theta^{k-1}$, we approximate
\begin{equation}
\begin{aligned}
&\frac{\partial\mathcal{L}}{\partial\theta}(\bm{u}^{k + 1}, \theta^k)\\
&= \frac{\partial\mathcal{L}}{\partial\theta}(\bm{u}^{k + 1}, \theta^{k - 1}) + \bm{H}_{\mathcal{L}}(\bm{u}^{k + 1}, \theta^{k-1})(\theta^k - \theta^{k-1})\\
&\quad + \tfrac{1}{2}(\theta^k - \theta^{k-1})^T\bm{T}_{\mathcal{L}}(\bm{u}^{k + 1}, \bar{\theta}^k)(\theta^k - \theta^{k-1}),
\end{aligned}
\end{equation}
where $\bm{H}_{\mathcal{L}}$ and $\bm{T}_{\mathcal{L}}$ denote the Hessian and third order derivative tensor of $\mathcal{L}$ with respect to $\theta$, and $\bar{\theta}^k$ lies between $\theta^{k-1}$ and $\theta^k$.

Define
\begin{equation}
\begin{cases}
\bm{v}^k := -\eta\,\tfrac{\partial\mathcal{L}}{\partial\theta}(\bm{u}^{k + 1}, \theta^{k - 1})  + \eta\,\bm{H}_{\mathcal{L}}(\bm{u}^{k + 1}, \theta^{k - 1})\theta^{k - 1},\\\\
\bm{r}^k := -\tfrac{\eta}{2}(\theta^k - \theta^{k-1})^T\bm{T}_{\mathcal{L}}(\bm{u}^{k + 1}, \bar{\theta}^k)(\theta^k - \theta^{k-1}),\\\\
\bm{C}^k := \bm{I} - \eta\,\bm{H}_{\mathcal{L}}(\bm{u}^{k + 1}, \theta^{k - 1}),
\end{cases}
\end{equation}
with $\bm{I}$ being the identity. The $\theta$-update can then be written as
\begin{equation}\label{eq:gd}
\theta^{k+1} = \bm{C}^k\theta^k + \bm{v}^k + \bm{r}^k.
\end{equation}
Let 
\begin{equation}
\tau := \sup_{(\bm{u}, \theta) \in \Omega}\|\bm{I} - \eta\bm{H}_{\mathcal{L}}(\bm{u}, \theta)\|_2,
\end{equation}
where $\Omega \subset \mathbb{R}^{d_1} \times \mathbb{R}^{d_2}$ is convex and compact and $\bm{I}$ is the identity matrix. The following result characterizes the convergence of $\{\theta^k\}$.

\begin{theo}
\label{theorem:spectral}
Assume the activation function of the neural network is three times continuously differentiable, and let $\Omega \subset \mathbb{R}^{d_1} \times \mathbb{R}^{d_2}$ be a convex and compact set. Suppose the iterates $\{(\bm{u}^k, \theta^k)\}_{k=1}^\infty$ are generated by \cref{eq_proximal_gradient} and satisfy
\[
(\bm{u}^k, \theta^k) \in \Omega,\quad 
(\bm{u}^{k+1}, \theta^k) \in \Omega,\quad 
(\bm{u}^{k+1}, \theta^{k-1}) \in \Omega
\]
for all $k$. If $\tau<1$,
then the parameter sequence $\{\theta^k\}_{k=1}^\infty$ converges.
\end{theo}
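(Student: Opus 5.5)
The plan is to show that $\{\theta^k\}$ is a Cauchy sequence by proving that the increments $d_k:=\|\theta^{k+1}-\theta^k\|_2$ are summable. Subtracting two consecutive $\theta$-updates in \cref{eq_proximal_gradient} gives
\[
\theta^{k+1}-\theta^k=(\theta^k-\theta^{k-1})-\eta\Big[\tfrac{\partial\mathcal L}{\partial\theta}(\bm u^{k+1},\theta^k)-\tfrac{\partial\mathcal L}{\partial\theta}(\bm u^{k},\theta^{k-1})\Big].
\]
I split the bracket into a $\theta$-difference at fixed $\bm u^{k+1}$ and a $\bm u$-difference at fixed $\theta^{k-1}$.

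For the $\theta$-difference, the assumption $(\bm u^{k+1},\theta^k),(\bm u^{k+1},\theta^{k-1})\in\Omega$ and convexity of $\Omega$ put the whole segment in $\Omega$. The integral mean value theorem then gives
\[
\tfrac{\partial\mathcal L}{\partial\theta}(\bm u^{k+1},\theta^k)-\tfrac{\partial\mathcal L}{\partial\theta}(\bm u^{k+1},\theta^{k-1})=\bar{\bm H}^k(\theta^k-\theta^{k-1}),
\]
with $\bar{\bm H}^k$ an average of Hessians over $\Omega$. Since the unit ball is convex, $\|\bm I-\eta\bar{\bm H}^k\|_2\le\tau$. This is the exact, integrated form of the linearization $\bm C^k$ in \eqref{eq:gd}, so the remainder $\bm r^k$ is absorbed rather than estimated separately. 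The third-derivative assumption is needed only if one insists on the pointwise Taylor form.

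The $\bm u$-difference is explicit because $\bm u$ enters $\mathcal L$ quadratically:
\[
\tfrac{\partial\mathcal L}{\partial\theta}(\bm u,\theta)=\cdots-\beta J_{\bm f}(\theta)^T\mathbf B^T(\bm u-\mathbf B\bm f(\theta)).
\]
Hence that difference equals $-\beta J_{\bm f}(\theta^{k-1})^T\mathbf B^T(\bm u^{k+1}-\bm u^k)$. Its norm is at most $ce_k$, where $e_k:=\|\bm u^{k+1}-\bm u^k\|_2$ and $c$ bounds $\eta\beta\|J_{\bm f}\|\|\mathbf B\|$ on the compact $\Omega$. This yields the recursion
\[
d_k\le\tau d_{k-1}+c\,e_k .
\]
With $\tau<1$, summing shows $\sum_k d_k<\infty$ as soon as $\sum_k e_k<\infty$.

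The main obstacle is the summability of $e_k$. \cref{theorem:convergence:algorithm supp} (inequality \cref{eq_proof_theorem_5}) only yields $\sum_k e_k^2<\infty$. My first attempt exploits the hard-thresholding structure: every entry of $\bm u^{k+1}$ is either $0$ or has magnitude above $t:=\sqrt{2\gamma\lambda/\beta}$. Since $e_k\to0$ by Item 2 of that theorem, no entry can switch between these two states once $e_k<t$, so the support $S$ freezes after finitely many steps. From then on the $\bm u$-update is affine:
\[
\bm u^{k+1}_S=\gamma(\mathbf B\bm f(\theta^k))_S+(1-\gamma)\bm u^k_S,\qquad \bm u^{k+1}_{S^c}=0,
\]
which gives $e_k\le(1-\gamma)e_{k-1}+\gamma K d_{k-1}$, with $K$ a Lipschitz bound of $\mathbf B\bm f$ on $\Omega$.

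Coupling the two inequalities produces a $2\times2$ nonnegative linear recursion for $(d_k,e_k)$. Summability follows if its spectral radius is below one, which I would try to extract from $\tau<1$ together with smallness of $\eta$ (recall $c$ is proportional to $\eta$). If this coupling constant cannot be controlled by $\tau$ alone, the fallback is a Kurdyka--Łojasiewicz argument. For sine/Gabor activations $\mathcal L$ is definable (analytic plus the semialgebraic $\ell_0$ term). The sufficient decrease \cref{eq_proof_theorem_5}, combined with the relative-error bounds from Item 3 of \cref{theorem:convergence:algorithm supp} and boundedness in $\Omega$, then gives finite length $\sum_k(d_k+e_k)<\infty$ by the standard Attouch--Bolte scheme.
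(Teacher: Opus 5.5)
Your reduction is sound as far as it goes. The integral mean-value form gives $\|\bm I-\eta\bar{\bm H}^k\|_2\le\tau$ without third derivatives. The $\bm u$-part of the gradient difference is exactly $-\beta J_{\bm f}(\theta^{k-1})^T\mathbf B^T(\bm u^{k+1}-\bm u^k)$, so $d_k\le\tau d_{k-1}+ce_k$ holds. The support-freezing argument is also correct. It needs $e_k\to0$ from Item 2 of \cref{theorem:convergence:algorithm supp}. The condition $\eta<2/\alpha$ does follow from $\tau<1$, since $\eta\alpha\le1+\tau$, but $\gamma\in(0,1)$ is an assumption the statement does not make.

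The gap is the step you flag, $\sum_k e_k<\infty$, and $\tau<1$ cannot close it. For your coupled system
\[
M=\begin{pmatrix}\tau+c\gamma K & c(1-\gamma)\\ \gamma K & 1-\gamma\end{pmatrix},\qquad \det(\bm I-M)=\gamma\,(1-\tau-cK),
\]
so $\rho(M)<1$ if and only if $cK<1-\tau$. Shrinking $\eta$ does not produce this. At every point of $\Omega$, $\tau\ge\|\bm I-\eta\bm H_{\mathcal L}\|_2\ge1-\eta\lambda_{\min}(\bm H_{\mathcal L})$, so $1-\tau\le\eta\inf_\Omega\lambda_{\min}(\bm H_{\mathcal L})$ is also $O(\eta)$. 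Hence $cK<1-\tau$ forces the $\eta$-independent inequality $cK/\eta<\inf_\Omega\lambda_{\min}(\bm H_{\mathcal L})$, where $cK/\eta=\beta\|\mathbf B\|_2\sup_\Omega\|J_{\bm f}\|_2\,K$. That is a new structural hypothesis: the $\theta$-curvature must dominate the $\bm u$--$\theta$ coupling. The condition $\tau<1$ constrains only the $\theta\theta$-block of the Hessian and says nothing about how strongly $\bm u$ feeds back into $\theta$.

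Your KL fallback is a legitimate repair. Support freezing supplies the continuity condition for the lower semicontinuous $\ell_0$ term. Item 3 can be transported from $(\bm u^{k+1},\theta^k)$ to $(\bm u^{k+1},\theta^{k+1})$ via Lipschitz bounds on $\Omega$, which gives the relative-error condition. However, it requires definability of $\mathcal L$, which three-times continuous differentiability does not give, and it uses $\tau<1$ only through $\eta<2/\alpha$. So what you would actually prove is a modified theorem, not the statement as given: convergence either under $cK<1-\tau$ (where your $2\times2$ argument is complete and even gives a linear rate) or under a KL hypothesis.

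Do not expect the paper's proof to contain the missing ingredient. It unrolls $\theta^{k+1}=\bm C^k\theta^k+\bm v^k+\bm r^k$ and shows the homogeneous part tends to zero. It then bounds the other two unrolled sums by $V/(1-\tau)$ and $\eta TU^2/(2(1-\tau))$ and concludes convergence. Those bounds give only boundedness of $\{\theta^k\}$, which compactness of $\Omega$ already provides. The expression $\sum_{m=1}^k\bigl(\prod_{j=m+1}^k\bm C^j\bigr)\bm v^m$ is not the partial sum of a fixed series, because every summand changes with $k$. For instance, with $\bm C^j\equiv\tau\bm I$ and $\bm v^m=(-1)^m\bm v$ it stays bounded but oscillates. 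Since $\bm v^k$ depends on $\bm u^{k+1}$, its behavior is precisely your $e_k$ problem; the affine rewriting hides this dependence, whereas your increment recursion exposes it.
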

\begin{proof} 
From \cref{eq:gd}, we have
\begin{equation}\label{eq:th4p1:1}
\begin{aligned}
\theta^{k+1} &= \Big(\prod_{j=1}^k \bm{C}^j\Big)\theta^1 
+ \sum_{m=1}^k\Big(\prod_{j=m+1}^k \bm{C}^j\Big)\bm{v}^m\\
&\quad + \sum_{m=1}^k\Big(\prod_{j=m+1}^k \bm{C}^j\Big)\bm{r}^m.
\end{aligned}
\end{equation}
To prove convergence of $\{\theta^k\}$, it suffices to show that each of the three terms on the right-hand side of \cref{eq:th4p1:1} converges.

\medskip
\noindent\textbf{1. The homogeneous term.}
By definition of $\tau$, we have 
$$
\|\bm{C}^k\|_2 \leq \tau\ \  \mbox{for all}\ \  k.
$$
Thus
\[
\Big\|\Big(\prod_{j=1}^k \bm{C}^j\Big)\theta^1\Big\|_2 
\leq \|\theta^1\|_2 \prod_{j=1}^k \|\bm{C}^j\|_2 
\leq \tau^k \|\theta^1\|_2.
\]
Since $\tau < 1$, it follows that $\tau^k \to 0$ as $k \to \infty$, and hence
\[
\lim_{k\to\infty}\Big\|\Big(\prod_{j=1}^k \bm{C}^j\Big)\theta^1\Big\|_2 = 0.
\]
Therefore the first term converges to zero.

\medskip
\noindent\textbf{2. The $\bm{v}^m$ term.}
Because the activation is three times continuously differentiable, $\mathcal{L}(\bm{u},\theta)$ is smooth in $\theta$. Although $\mathcal{L}$ is not continuous in $\bm{u}$ due to the $\|\cdot\|_0$ term, this term vanishes in differentiation with respect to $\theta$, so both $\tfrac{\partial \mathcal{L}}{\partial \theta}$ and $\bm{H}_{\mathcal{L}}$ are continuous in $(\bm{u},\theta)$. Compactness of $\Omega$ ensures the existence of a constant $V>0$ such that
\[
\big\| -\eta\tfrac{\partial \mathcal{L}}{\partial \theta}(\bm{u},\theta) + \eta \bm{H}_{\mathcal{L}}(\bm{u},\theta)\theta \big\|_2 \leq V, 
\qquad \mbox{for all}\ \ (\bm{u},\theta)\in\Omega.
\]
By the definition of $\bm{v}^k$, this implies 
\[
\|\bm{v}^k\|_2 \leq V, \ \ \mbox{for all}\ \ k. 
\]
Moreover,
\[
\Big\|\Big(\prod_{j=m+1}^k \bm{C}^j\Big)\bm{v}^m\Big\|_2 
\leq \|\bm{v}^m\|_2 \prod_{j=m+1}^k \|\bm{C}^j\|_2 
\leq V \tau^{k-m}.
\]
Hence
\[
\begin{aligned}
\sum_{m=1}^k \Big\|\Big(\prod_{j=m+1}^k \bm{C}^j\Big)\bm{v}^m\Big\|_2
&\leq V \sum_{m=1}^k \tau^{k-m}\\
&= V \frac{1-\tau^k}{1-\tau}\\
&< \frac{V}{1-\tau}.
\end{aligned}
\]
This shows that the series 
$$
\sum_{m=1}^\infty \left(\prod_{j=m+1}^\infty \bm{C}^j\right)\bm{v}^m
$$ 
converges absolutely, and hence the second term converges.

\medskip
\noindent\textbf{3. The $\bm{r}^m$ term.}
Since $\mathcal{L}(\bm{u},\theta)$ is three times continuously differentiable, $\bm{T}_{\mathcal{L}}(\bm{u},\theta)$ is continuous in $(\bm{u},\theta)$. By compactness of $\Omega$, there exists $T>0$ such that $\|\bm{T}_{\mathcal{L}}(\bm{u},\theta)\|_2 \leq T$ for all $(\bm{u},\theta)\in\Omega$. By convexity of $\Omega$, the points $(\bm{u}^{k+1},\bar{\theta}^k)$ also lie in $\Omega$, so
\[
\|\bm{T}_{\mathcal{L}}(\bm{u}^{k+1},\bar{\theta}^k)\|_2 \leq T, \qquad \forall k.
\]
By definition of $\bm{r}^k$, we obtain
\[
\begin{aligned}
\|\bm{r}^k\|_2 &\leq \frac{\eta}{2}\|\bm{T}_{\mathcal{L}}(\bm{u}^{k+1},\bar{\theta}^k)\|_2 \|\theta^k - \theta^{k-1}\|_2^2\\
&\leq \frac{\eta T}{2}\|\theta^k - \theta^{k-1}\|_2^2.
\end{aligned}
\]
Since $\{(\bm{u}^k,\theta^k)\}$ is bounded, there exists $U>0$ such that $\|\theta^k - \theta^{k-1}\|_2 \leq U$ for all $k$. Therefore,
\[
\|\bm{r}^k\|_2 \leq \frac{\eta T U^2}{2}, \qquad \forall k.
\]
Following the same argument as for the $\bm{v}^m$ term, we conclude that 
$$
\sum_{m=1}^\infty \left(\prod_{j=m+1}^\infty \bm{C}^j\right)\bm{r}^m
$$ 
is absolutely convergent, and hence the third term converges.

Since all three components in \cref{eq:th4p1:1} converge, the sequence $\{\theta^k\}$ converges as claimed.
\end{proof}

\section{Experiment Details and More Experiments}
\label{sec:supp_exp}

\subsection{Pipeline Details}
To establish a competitive and transparent classical baseline, 
we constructed a traditional image restoration pipeline that 
explicitly inverts each component of the degradation model 
defined in \cref{eq:degradation}. This pipeline serves as a representative classical baseline that explicitly inverts each degradation component, providing a contrast to our implicit, training-data-free INR-based approach.
The pipeline consists of four sequential stages:

\begin{enumerate}
\item \textbf{Inpainting:} Missing pixels are first reconstructed 
using the fast marching-based method of Telea~\cite{telea2004image}, 
which effectively propagates image structures from known to unknown regions.

\item \textbf{Denoising:} The inpainted result is then denoised using 
BM3D~\cite{dabov2007image}, a state-of-the-art method renowned for its 
performance in removing additive white Gaussian noise while preserving details.

\item \textbf{Super-Resolution:} The denoised low-resolution image is upscaled via bicubic interpolation~\cite{keys2003cubic}. 
We adopt bicubic here to match the bicubic downsampling used in the degradation model, ensuring that the upsampling operator faithfully inverts the resolution reduction.

\item \textbf{Deblurring:} Finally, a classical Wiener filter~\cite{10.7551/mitpress/2946.001.0001} 
is applied to remove blur in the high-resolution domain.
\end{enumerate}

To ensure this pipeline performs at its optimal capacity for a fair comparison, all parameters were carefully selected. The blur kernel and noise variance, when known from the degradation parameters, were provided directly to the Wiener filter and BM3D, respectively. 
For all remaining parameters, we used the default or commonly recommended settings from the original papers, as these configurations are well established to yield robust and high-quality results.

\subsection{Necessity of High-Resolution Regularization (Case Study on WIRE)}
\label{sec:WIRE reg}
To assess the necessity of explicit regularization and the effect of imposing it in the high-resolution image domain, we conduct a case study on WIRE~\cite{saragadam2023wire}.
As shown in \cref{fig:wire_super,fig:wire_inpainting},
adding our high-resolution regularization consistently improves super-resolution and inpainting quality, confirming the necessity of the proposed regularization for image restoration across different INR models.
\begin{figure}[htbp]
    \begin{subfigure}{0.32\linewidth}
        \caption{Input ($2\times\downarrow$)}
        \centering
        \includegraphics[width=\linewidth]{butterfly_degraded_blur0.0_down2_noisegaussian_std0.0_sp0.0_patches0_prob0.0_PSNR26.62_SSIM0.9109_zoom.png}
        \caption*{\quad}
    \end{subfigure}
    \begin{subfigure}{0.32\linewidth}
        \caption{WO/reg}
        \centering
        \includegraphics[width=\linewidth]{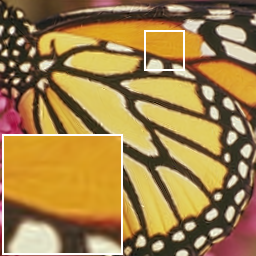}
        \caption*{(24.98 / 0.8903)}
    \end{subfigure}
    \begin{subfigure}{0.32\linewidth}
        \caption{Small low-res reg}
        \centering
        \includegraphics[width=\linewidth]{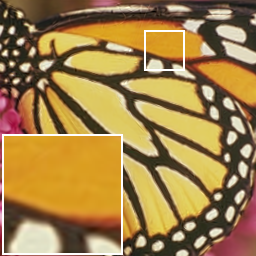}
        \caption*{(24.83 / 0.8910)}
    \end{subfigure}
    \begin{subfigure}{0.32\linewidth}
        \caption{Large low-res reg}
        \centering
        \includegraphics[width=\linewidth]{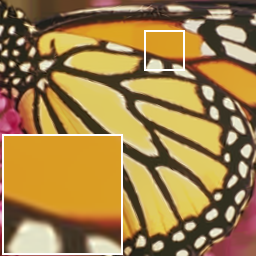}
        \caption*{(24.96 / 0.8810)}
    \end{subfigure}
    \begin{subfigure}{0.32\linewidth}
        \caption{Small high-res reg}
        \centering
        \includegraphics[width=\linewidth]{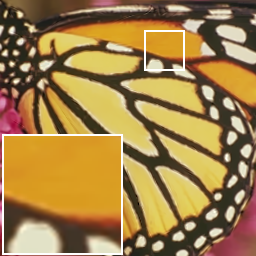}
        \caption*{\textbf{(25.36 / 0.9013)}}
    \end{subfigure}
    \begin{subfigure}{0.32\linewidth}
            \caption{Ground truth}
        \centering
        \includegraphics[width=\linewidth]{butterfly_zoom.png}
        \caption*{\quad}
    \end{subfigure}
    \caption{Comparison of results using different regularizations for the super-resolution task with WIRE.}
 \label{fig:wire_super}
\end{figure}

\begin{figure}[htbp]
    \begin{subfigure}{0.32\linewidth}
        \caption{Input ($50\%$ mask)}
        \centering
        \includegraphics[width=\linewidth]{butterfly_degraded_blur0.0_down1_noisegaussian_std0.0_sp0.0_patches0_prob0.5_PSNR8.54_SSIM0.1934_zoom.png}
        \caption*{\quad}
    \end{subfigure}
    \begin{subfigure}{0.32\linewidth}
        \caption{WO/reg}
        \centering
        \includegraphics[width=\linewidth]{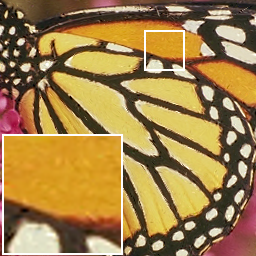}
        \caption*{(27.19 / 0.9115)}
    \end{subfigure}
    \begin{subfigure}{0.32\linewidth}
        \caption{W/reg}
        \centering
        \includegraphics[width=\linewidth]{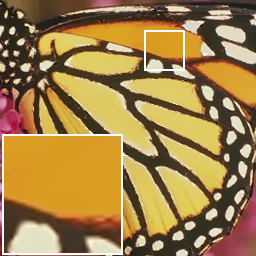}
        \caption*{\textbf{(28.74 / 0.9402)}}
    \end{subfigure}
    \caption{Comparison of inpainting results with and without regularization using WIRE.}
    \label{fig:wire_inpainting}
\end{figure}

\end{appendices}


\bibliography{sn-bibliography}

@article{fang2025computational,
  title={Computational advantages of multi-grade deep learning: Convergence analysis and performance insights},
  author={Fang, Ronglong and Xu, Yuesheng},
  journal={arXiv preprint arXiv:2507.20351},
  year={2025}
}

@article{fang2024addressing,
  title={Addressing spectral bias of deep neural networks by multi-grade deep learning},
  author={Fang, Ronglong and Xu, Yuesheng},
  journal={Advances in Neural Information Processing Systems},
  volume={37},
  pages={114122--114146},
  year={2024}
}

@inproceedings{zeng2016convergent,
  title={A convergent fixed-point proximity algorithm accelerated by FISTA for the $\ell_0$ sparse recovery problem},
  author={Zeng, Xueying and Shen, Lixin and Xu, Yuesheng},
  booktitle={International Conference on Imaging, Vision and Learning based on Optimization and PDEs},
  pages={27--45},
  year={2016},
  organization={Springer}
}

@article{xu2023sparse,
  title={Sparse regularization with the $\ell_0$ norm},
  author={Xu, Yuesheng},
  journal={Analysis and Applications},
  volume={21},
  number={04},
  pages={901--929},
  year={2023},
  publisher={World Scientific}
}

@article{wu2022inverting,
  title={Inverting incomplete Fourier transforms by a sparse regularization model and applications in seismic wavefield modeling},
  author={Wu, Tingting and Xu, Yuesheng},
  journal={Journal of Scientific Computing},
  volume={92},
  number={2},
  pages={48},
  year={2022},
  publisher={Springer}
}

@article{attouch2010proximal,
  title={Proximal alternating minimization and projection methods for nonconvex problems: An approach based on the Kurdyka-{\L}ojasiewicz inequality},
  author={Attouch, H{\'e}dy and Bolte, J{\'e}r{\^o}me and Redont, Patrick and Soubeyran, Antoine},
  journal={Mathematics of operations research},
  volume={35},
  number={2},
  pages={438--457},
  year={2010},
  publisher={INFORMS}
}

@inproceedings{he2016deep,
  title={Deep residual learning for image recognition},
  author={He, Kaiming and Zhang, Xiangyu and Ren, Shaoqing and Sun, Jian},
  booktitle={Proceedings of the IEEE conference on computer vision and pattern recognition},
  pages={770--778},
  year={2016}
}

@inproceedings{rahaman2019spectral,
  title={On the spectral bias of neural networks},
  author={Rahaman, Nasim and Baratin, Aristide and Arpit, Devansh and Draxler, Felix and Lin, Min and Hamprecht, Fred and Bengio, Yoshua and Courville, Aaron},
  booktitle={International conference on machine learning},
  pages={5301--5310},
  year={2019},
  organization={PMLR}
}

@article{bao2016image,
  title={Image restoration by minimizing zero norm of wavelet frame coefficients},
  author={Bao, Chenglong and Dong, Bin and Hou, Likun and Shen, Zuowei and Zhang, Xiaoqun and Zhang, Xue},
  journal={Inverse problems},
  volume={32},
  number={11},
  pages={115004},
  year={2016},
  publisher={IOP Publishing}
}

@article{zhang2010nearly,
 ISSN = {00905364, 21688966},
 URL = {http://www.jstor.org/stable/25662264},
 author = {Cun-Hui Zhang},
 journal = {The Annals of Statistics},
 number = {2},
 pages = {894--942},
 publisher = {Institute of Mathematical Statistics},
 title = {NEARLY UNBIASED VARIABLE SELECTION UNDER MINIMAX CONCAVE PENALTY},
 urldate = {2025-09-15},
 volume = {38},
 year = {2010}
}

@article{fan2001variable,
  title={Variable selection via nonconcave penalized likelihood and its oracle properties},
  author={Fan, Jianqing and Li, Runze},
  journal={Journal of the American statistical Association},
  volume={96},
  number={456},
  pages={1348--1360},
  year={2001},
  publisher={Taylor \& Francis}
}

@article{luo2025neurtv,
  title={Neurtv: Total variation on the neural domain},
  author={Luo, Yisi and Zhao, Xile and Ye, Kai and Meng, Deyu},
  journal={SIAM Journal on Imaging Sciences},
  volume={18},
  number={2},
  pages={1101--1140},
  year={2025},
  publisher={SIAM}
}

@inproceedings{zhao2025adaptive,
  title={Adaptive Wavelet-Positional Encoding for High-Frequency Information Learning in Implicit Neural Representation},
  author={Zhao, Hongxu and Gao, Zelin and Wang, Yue and Xiong, Rong and Zhang, Yu},
  booktitle={Proceedings of the AAAI Conference on Artificial Intelligence},
  volume={39},
  number={10},
  pages={10430--10438},
  year={2025}
}

@InProceedings{pmlr-v235-pal24a,
  title = 	 {Implicit Representations via Operator Learning},
  author =       {Pal, Sourav and Adepu, Harshavardhan and Wang, Clinton and Golland, Polina and Singh, Vikas},
  booktitle = 	 {Proceedings of the 41st International Conference on Machine Learning},
  pages = 	 {39022--39041},
  year = 	 {2024},
  editor = 	 {Salakhutdinov, Ruslan and Kolter, Zico and Heller, Katherine and Weller, Adrian and Oliver, Nuria and Scarlett, Jonathan and Berkenkamp, Felix},
  volume = 	 {235},
  series = 	 {Proceedings of Machine Learning Research},
  month = 	 {21--27 Jul},
  publisher =    {PMLR},
  url = 	 {https://proceedings.mlr.press/v235/pal24a.html}
}

@article{lee2021meta,
  title={Meta-learning sparse implicit neural representations},
  author={Lee, Jaeho and Tack, Jihoon and Lee, Namhoon and Shin, Jinwoo},
  journal={Advances in Neural Information Processing Systems},
  volume={34},
  pages={11769--11780},
  year={2021}
}

@article{luo2024revisiting,
  title={Revisiting nonlocal self-similarity from continuous representation},
  author={Luo, Yisi and Zhao, Xile and Meng, Deyu},
  journal={IEEE Transactions on Pattern Analysis and Machine Intelligence},
  year={2024},
  publisher={IEEE}
}

@inproceedings{Shekarforoush2022ResidualMFN,
  title     = {Residual Multiplicative Filter Networks for Multiscale Reconstruction},
  author    = {Shayan Shekarforoush and David B. Lindell and David J. Fleet and Marcus A. Brubaker},
  booktitle = {Advances in Neural Information Processing Systems (NeurIPS)},
  year      = {2022},
  note      = {ArXiv preprint available: arXiv:2206.00746}
}

@article{hertz2021sape,
  title={Sape: Spatially-adaptive progressive encoding for neural optimization},
  author={Hertz, Amir and Perel, Or and Giryes, Raja and Sorkine-Hornung, Olga and Cohen-Or, Daniel},
  journal={Advances in Neural Information Processing Systems},
  volume={34},
  pages={8820--8832},
  year={2021}
}

@inproceedings{Park2019DeepSDF,
  title     = {DeepSDF: Learning Continuous Signed Distance Functions for Shape Representation},
  author    = {Jeong Joon Park and Peter Florence and Julian Straub and Richard Newcombe and Steven Lovegrove},
  booktitle = {Proceedings of the IEEE/CVF Conference on Computer Vision and Pattern Recognition (CVPR)},
  year      = {2019},
  month     = {June},
  pages     = {165--174}
}

@inproceedings{Mescheder2019OccupancyNetworks,
  title     = {Occupancy Networks: Learning 3D Reconstruction in Function Space},
  author    = {Lars Mescheder and Michael Oechsle and Michael Niemeyer and Sebastian Nowozin and Andreas Geiger},
  booktitle = {Proceedings of the IEEE/CVF Conference on Computer Vision and Pattern Recognition (CVPR)},
  year      = {2019},
  month     = {June},
  pages     = {4460--4470}
}

@inproceedings{Lindell2022BACON,
  title     = {BACON: Band-Limited Coordinate Networks for Multiscale Scene Representation},
  author    = {David B. Lindell and Dave Van Veen and Jeong Joon Park and Gordon Wetzstein},
  booktitle = {Proceedings of the IEEE/CVF Conference on Computer Vision and Pattern Recognition (CVPR)},
  year      = {2022},
  month     = {June},
  pages     = {16252--16262}
}

@inproceedings{Fathony2021MFN,
  title     = {Multiplicative Filter Networks},
  author    = {Rizal Fathony and Anit Kumar Sahu and Devin Willmott and J. Zico Kolter},
  booktitle = {International Conference on Learning Representations (ICLR)},
  year      = {2021},
  note      = {OpenReview: \url{https://openreview.net/forum?id=OmtmcPkkhT}}
}

@article{Martel2021ACORN,
  title   = {ACORN: Adaptive Coordinate Networks for Neural Scene Representation},
  author  = {Julien N. P. Martel and David B. Lindell and Connor Z. Lin and Eric R. Chan and Marco Monteiro and Gordon Wetzstein},
  journal = {ACM Transactions on Graphics (Proc. SIGGRAPH)},
  year    = {2021},
  volume  = {40},
  number  = {4},
  pages   = {58:1--58:13},
  doi     = {10.1145/3450626.3459785}
}

@inproceedings{buades2005non,
  title={A non-local algorithm for image denoising},
  author={Buades, Antoni and Coll, Bartomeu and Morel, J-M},
  booktitle={2005 IEEE computer society conference on computer vision and pattern recognition (CVPR'05)},
  volume={2},
  pages={60--65},
  year={2005},
  organization={Ieee}
}

@inproceedings{Zamir2021Restormer,
    title={Restormer: Efficient Transformer for High-Resolution Image Restoration}, 
    author={Syed Waqas Zamir and Aditya Arora and Salman Khan and Munawar Hayat 
            and Fahad Shahbaz Khan and Ming-Hsuan Yang},
    booktitle={CVPR},
    year={2022}
}

@article{
doi:10.1073/pnas.1907377117,
author = {Vegard Antun  and Francesco Renna  and Clarice Poon  and Ben Adcock  and Anders C. Hansen },
title = {On instabilities of deep learning in image reconstruction and the potential costs of AI},
journal = {Proceedings of the National Academy of Sciences},
volume = {117},
number = {48},
pages = {30088-30095},
year = {2020},
doi = {10.1073/pnas.1907377117},
URL = {https://www.pnas.org/doi/abs/10.1073/pnas.1907377117},
eprint = {https://www.pnas.org/doi/pdf/10.1073/pnas.1907377117}}

@article{bhadra2021hallucinations,
  title={On hallucinations in tomographic image reconstruction},
  author={Bhadra, Sayantan and Kelkar, Varun A and Brooks, Frank J and Anastasio, Mark A},
  journal={IEEE transactions on medical imaging},
  volume={40},
  number={11},
  pages={3249--3260},
  year={2021},
  publisher={IEEE}
}

@article{rudin1992nonlinear,
  title={Nonlinear total variation based noise removal algorithms},
  author={Rudin, Leonid I and Osher, Stanley and Fatemi, Emad},
  journal={Physica D: nonlinear phenomena},
  volume={60},
  number={1-4},
  pages={259--268},
  year={1992},
  publisher={Elsevier}
}

@article{keys2003cubic,
  title={Cubic convolution interpolation for digital image processing},
  author={Keys, Robert},
  journal={IEEE transactions on acoustics, speech, and signal processing},
  volume={29},
  number={6},
  pages={1153--1160},
  year={1981},
  publisher={Ieee}
}

@book{10.7551/mitpress/2946.001.0001,
    author = {Wiener, Norbert},
    title = {Extrapolation, Interpolation, and Smoothing of Stationary Time Series: With Engineering Applications},
    publisher = {The MIT Press},
    year = {1949},
    address = {Cambridge, MA},
    doi ={10.7551/mitpress/2946.001.0001},
}

@article{dabov2007image,
  title={Image denoising by sparse 3-D transform-domain collaborative filtering},
  author={Dabov, Kostadin and Foi, Alessandro and Katkovnik, Vladimir and Egiazarian, Karen},
  journal={IEEE Transactions on image processing},
  volume={16},
  number={8},
  pages={2080--2095},
  year={2007},
  publisher={IEEE}
}

@article{telea2004image,
  title={An image inpainting technique based on the fast marching method},
  author={Telea, Alexandru},
  journal={Journal of graphics tools},
  volume={9},
  number={1},
  pages={23--34},
  year={2004},
  publisher={Taylor \& Francis}
}

@article{tancik2020fourier,
  title={Fourier features let networks learn high frequency functions in low dimensional domains},
  author={Tancik, Matthew and Srinivasan, Pratul and Mildenhall, Ben and Fridovich-Keil, Sara and Raghavan, Nithin and Singhal, Utkarsh and Ramamoorthi, Ravi and Barron, Jonathan and Ng, Ren},
  journal={Advances in neural information processing systems},
  volume={33},
  pages={7537--7547},
  year={2020}
}

@inproceedings{saragadam2023wire,
  title={Wire: Wavelet implicit neural representations},
  author={Saragadam, Vishwanath and LeJeune, Daniel and Tan, Jasper and Balakrishnan, Guha and Veeraraghavan, Ashok and Baraniuk, Richard G},
  booktitle={Proceedings of the IEEE/CVF Conference on Computer Vision and Pattern Recognition},
  pages={18507--18516},
  year={2023}
}

@inproceedings{ramasinghe2022beyond,
  title={Beyond periodicity: Towards a unifying framework for activations in coordinate-mlps},
  author={Ramasinghe, Sameera and Lucey, Simon},
  booktitle={European Conference on Computer Vision},
  pages={142--158},
  year={2022},
  organization={Springer}
}

@article{hao2022implicit,
  title={Implicit neural representations with levels-of-experts},
  author={Hao, Zekun and Mallya, Arun and Belongie, Serge and Liu, Ming-Yu},
  journal={Advances in Neural Information Processing Systems},
  volume={35},
  pages={2564--2576},
  year={2022}
}

@article{sitzmann2020implicit,
  title={Implicit neural representations with periodic activation functions},
  author={Sitzmann, Vincent and Martel, Julien and Bergman, Alexander and Lindell, David and Wetzstein, Gordon},
  journal={Advances in neural information processing systems},
  volume={33},
  pages={7462--7473},
  year={2020}
}

@inproceedings{ulyanov2018deep,
  title={Deep image prior},
  author={Ulyanov, Dmitry and Vedaldi, Andrea and Lempitsky, Victor},
  booktitle={Proceedings of the IEEE conference on computer vision and pattern recognition},
  pages={9446--9454},
  year={2018}
}

@inproceedings{liang2021swinir,
  title={Swinir: Image restoration using swin transformer},
  author={Liang, Jingyun and Cao, Jiezhang and Sun, Guolei and Zhang, Kai and Van Gool, Luc and Timofte, Radu},
  booktitle={Proceedings of the IEEE/CVF international conference on computer vision},
  pages={1833--1844},
  year={2021}
}

@article{xu2025multi,
  title={Multi-grade deep learning},
  author={Xu, Yuesheng},
  journal={Communications on Applied Mathematics and Computation},
  pages={1--52},
  year={2025},
  publisher={Springer}
}

@article{muller2022instant,
  title={Instant neural graphics primitives with a multiresolution hash encoding},
  author={M{\"u}ller, Thomas and Evans, Alex and Schied, Christoph and Keller, Alexander},
  journal={ACM transactions on graphics (TOG)},
  volume={41},
  number={4},
  pages={1--15},
  year={2022},
  publisher={ACM New York, NY, USA}
}

@article{chambolle2011first,
  title={A first-order primal-dual algorithm for convex problems with applications to imaging},
  author={Chambolle, Antonin and Pock, Thomas},
  journal={Journal of mathematical imaging and vision},
  volume={40},
  number={1},
  pages={120--145},
  year={2011},
  publisher={Springer}
}

@article{beck2009fast,
  title={A fast iterative shrinkage-thresholding algorithm for linear inverse problems},
  author={Beck, Amir and Teboulle, Marc},
  journal={SIAM journal on imaging sciences},
  volume={2},
  number={1},
  pages={183--202},
  year={2009},
  publisher={SIAM}
}

@article{micchelli2011proximity,
  title={Proximity algorithms for image models: denoising},
  author={Micchelli, Charles A and Shen, Lixin and Xu, Yuesheng},
  journal={Inverse Problems},
  volume={27},
  number={4},
  pages={045009},
  year={2011},
  publisher={IOP Publishing}
}

@article{shen2016wavelet,
  title={Wavelet inpainting with the $\ell_0$ sparse regularization},
  author={Shen, Lixin and Xu, Yuesheng and Zeng, Xueying},
  journal={Applied and Computational Harmonic Analysis},
  volume={41},
  number={1},
  pages={26--53},
  year={2016},
  publisher={Elsevier}
}

@article{fang2024inexact,
  title={Inexact fixed-point proximity algorithm for the $\ell_0$ sparse regularization problem},
  author={Fang, Ronglong and Xu, Yuesheng and Yan, Mingsong},
  journal={Journal of Scientific Computing},
  volume={100},
  number={2},
  pages={58},
  year={2024},
  publisher={Springer}
}

@inproceedings{xu2019training,
  title={Training behavior of deep neural network in frequency domain},
  author={Xu, Zhi-Qin John and Zhang, Yaoyu and Xiao, Yanyang},
  booktitle={International Conference on Neural Information Processing},
  pages={264--274},
  year={2019},
  organization={Springer}
}

@inproceedings{pilanci2020neural,
  title={Neural networks are convex regularizers: Exact polynomial-time convex optimization formulations for two-layer networks},
  author={Pilanci, Mert and Ergen, Tolga},
  booktitle={International Conference on Machine Learning},
  pages={7695--7705},
  year={2020},
  organization={PMLR}
}

@article{Shen-Suter-Tripp:JOTA:2019,
  author =       {Lixin Shen and Bruce W. Suter and Erin E. Tripp},
  title =        {Structured Sparsity Promoting Functions},
  journal=       {Journal of Optimization Theory and Applications},
  volume="183",
  number="3",
  pages="386--421",
  year =        {2019}
}

\end{document}